\pdfoutput=1
\documentclass[a4paper,fleqn]{cas-sc}

\usepackage[authoryear,longnamesfirst]{natbib}

\def\tsc#1{\csdef{#1}{\textsc{\lowercase{#1}}\xspace}}
\tsc{WGM}
\tsc{QE}
\usepackage{microtype}
\usepackage{booktabs} %

\usepackage{hyperref}

\usepackage{amsmath}
\usepackage{amssymb}
\usepackage{mathtools}
\usepackage{amsthm}

\newtheorem{theorem}{Theorem}
\newtheorem{lemma}{Lemma}
\newtheorem{corollary}{Corollary}

\theoremstyle{definition}

\theoremstyle{remark}
\newtheorem{rmk}{Remark}

\usepackage{tikz}
\usepackage{subcaption}
\usepackage{multirow}
\usepackage{pgfplots}
\pgfplotsset{compat=1.17}
\usepackage[utf8]{inputenc} %
\usepackage[T1]{fontenc}    %
\usepackage{hyperref}       %
\usepackage{booktabs}       %
\usepackage{amsfonts}       %
\usepackage{nicefrac}       %
\usepackage{microtype}      %
\usepackage{xcolor}         %
\usepackage{overpic}
\usepackage{tabularx}
\usepackage{booktabs}
\usepackage{makecell}
\newcommand{\R}{\mathbb{R}} %

\newcommand{\bxi}{\boldsymbol{\xi}}
\newcommand{\bxip}{\boldsymbol{\xi'}}

\newcommand{\bsa}{\boldsymbol{a}}
\newcommand{\bsb}{\boldsymbol{b}}

\newcommand{\bsx}{\boldsymbol{x}}
\newcommand{\bsxp}{\boldsymbol{x'}}
\newcommand{\bsw}{\boldsymbol{w}}

\ExplSyntaxOn
\keys_define:nn { cas / fig }
  { unknown .code:n = { \tl_if_blank:nT {#1} { \tl_set:Nx \l_fig_pos_tl { \l_keys_key_str } } } }
\ExplSyntaxOff

\ExplSyntaxOn
\cs_set:Npn \__first_footerline:
  { \group_begin: \small \sffamily \__short_authors: :~ { \rmfamily \itshape Preprint } \group_end: }
\ExplSyntaxOff

\begin{document}
\let\WriteBookmarks\relax

\shorttitle{Operator-informed initialization for FF-PINNs}    

\shortauthors{J. Molina et al.}  

\title [mode = title]{Operator-informed initialization for Fourier features physics-informed neural networks}

\author[1]{Juan Molina}[orcid=0009-0001-1106-0215]
\cormark[1]
\ead{juan.jose.molina@cenia.cl}
\author[2]{Paris Perdikaris}[]
\ead{pgp@seas.upenn.edu}
\author[1,3,4]{Mircea Petrache}[]
\ead{mpetrache@uc.cl}
\author[3]{Matías Courdurier}[]
\ead{mcourdurier@uc.cl}
\author[1,5,6]{Francisco Sahli~Costabal}[]
\ead{fsahlic@uc.cl}

\affiliation[1]{organization={National Center for Artificial Intelligence},
city={Santiago},
country={Chile}}

\affiliation[2]{organization={Department of Mechanical Engineering and Applied Mechanics, University of Pennsylvania}, city={Philadelphia}, state={PA}, country={USA}}

\affiliation[3]{organization={Department of Mathematics, Pontificia Universidad Cat\'olica de Chile},
city={Santiago},
country={Chile}}

\affiliation[4]{organization={Institute for Mathematical and Computational Engineering, Pontificia Universidad Cat\'olica de Chile},
city={Santiago},
country={Chile}}

\affiliation[5]{organization={Department of Mechanical and Metallurgical Engineering and Institute for Biological and Medical Engineering, Pontificia Universidad Cat\'olica de Chile},
city={Santiago},
country={Chile}}

\affiliation[6]{organization={Millennium Institute for Intelligent Healthcare Engineering, iHEALTH},
city={Santiago},
country={Chile}}

\cortext[1]{Corresponding author}

\begin{abstract}
Physics-Informed Neural Networks (PINNs) typically exhibit spectral bias, where some frequencies of the target function converge more slowly than others. In this work, we analyze the training dynamics of Fourier Feature PINNs in the Neural Tangent Kernel regime to address this limitation. We derive an explicit evolution equation to estimate the residual error in the frequency domain, demonstrating that the convergence rate of specific frequencies is primarily governed by the product of the differential operator's symbol and the spectral density of the initialization weights. Leveraging this theoretical insight, we propose an informative initialization strategy that tailors the initial weight distribution to the specific PDE being solved. With this method, we can diminish the operator-induced spectral bias, balancing the convergence rates across the frequency spectrum and achieving better prediction accuracy. Numerical experiments on linear and nonlinear partial differential equations confirm that this initialization strategy improves learning dynamics and approximation accuracy across frequencies compared to standard initialization methods, with no additional training cost.
\end{abstract}

\begin{keywords}
\sep Physics-Informed Neural Networks, 
\sep Frequency bias, 
\sep Spectral bias,
\sep Initialization,
\sep Neural network dynamics
\end{keywords}

\maketitle

\section*{Introduction}

Physics-Informed Neural Networks (PINNs) have emerged as a powerful framework for approximating solutions of partial differential equations (PDEs) by embedding the governing physical laws directly into the training objective \citep{raissi2019physics,luo2025physics}. By combining data-driven learning with the minimization of PDE residuals, PINNs provide a flexible alternative to classical numerical solvers. Their versatility has enabled successful applications across a broad range of scientific domains, including fluid mechanics \citep{raissi2020hidden,de2021assessing,bounnah2025physics} and bioengineering \citep{sahli2020physics,kissas2020machine}. Despite these achievements, PINNs often suffer from slow convergence and reduced predictive accuracy when the target solution exhibits sharp gradients, localized structures, or highly oscillatory behavior \citep{wang2021eigenvector,deshpande2022investigations}. Increasing evidence suggests that these difficulties are closely related to the intrinsic \emph{frequency bias} of neural networks.

Frequency Bias (FB), also known as spectral bias or the frequency principle, refers to the tendency of neural networks trained with gradient-based methods to learn low-frequency components of a target function earlier and faster than high-frequency ones \citep{xu2018understanding,luo2019theory,rahaman2019spectral}. This phenomenon has been observed across a wide range of architectures \citep{geifman2022spectral,bhattamishra2022simplicity} and optimization methods \citep{khodakarami2026spectral}, including quasi-Newton approaches such as BFGS \citep{ma2021frequency}. In the context of PDE learning, frequency bias can become a major limitation since many physically relevant solutions exhibit multiscale or broadband frequency content, as commonly encountered in fluid dynamics, wave propagation, and related applications. Consequently, understanding the mathematical mechanisms that govern frequency bias in PINNs is essential for improving both their efficiency and predictive capabilities.

\section{Background}

\textbf{Frequency Bias:} Theoretical investigations of FB began with the analysis of shallow neural networks with smooth activation functions such as $\tanh$ \citep{xu2018understanding}. Subsequent works studied the phenomenon through eigenvalue analyses and assumptions on data distributions \citep{ronen2019convergence,cao2021towards}, while others provided exact or asymptotic characterizations in the infinite-width regime \citep{luo2022exact,luo2022upper,zhang2021linear}. Beyond discrete spectral analyses based on kernel eigenvalues, continuous Fourier-domain descriptions of the learning dynamics have also been developed, enabling a more refined understanding of how individual frequency modes evolve during training \citep{molina2026optimizing,luo2022exact}. 

In the context of PINNs, frequency bias manifests in a more intricate manner due to the composite structure of the loss function, which combines PDE residual terms with boundary, initial, and data constraints. It has been observed that different components of the PINN loss may converge at substantially different rates when analyzed through their limiting NTK dynamics \citep{wang2022and}. Furthermore, frequency-dependent effects specific to PINNs have been reported in \citet{XU2025113905}.

\textbf{Fourier Feature Networks in PINNs:}
Several architectures have been proposed to mitigate the spectral bias of standard neural networks, including Fourier Feature (FF) networks, SIREN, Gaussian networks, and WIRE \citep{tancik2020fourier,sitzmann2020implicit,ramasinghe2022beyond,saragadam2023wire}. While all these approaches improve the representation of high-frequency components, FF networks are particularly attractive for the present work because they provide an explicit and controllable parametrization of the network's frequency content. In an FF network, the input coordinates are mapped through sinusoidal functions with frequencies sampled from a prescribed distribution. Consequently, the spectral properties of the representation can be modified directly through this distribution, without changing the subsequent network architecture. Moreover, FF mappings admit a natural interpretation in the NTK regime, where the frequency distribution controls the spectral properties and effective bandwidth of the kernel \citep{tancik2020fourier,molina2026optimizing}. This explicit connection between initialization, spectral representation, and training dynamics makes FF networks especially suitable for incorporating prior information about the frequency geometry induced by a PDE.

Within the PINN framework, FF networks have been successfully applied to a variety of scientific and engineering problems, including wind turbine power prediction \citep{aravanis2025fourier} and forward and inverse fluid mechanics problems \citep{sallam2023use}. Several FF-based variants have also been proposed to address multiscale and highly oscillatory PDEs. Examples include multiscale Fourier feature mappings for acoustic wave equations \citep{10630853}, iterative basis-enhancement strategies \citep{wu2025iterative}, finite-difference FF-PINNs for nonlinear fluid dynamics problems \citep{vasheghani2026novel}, and FF-PINNs for time-domain wave equations \citep{11045735,wu2026novel}. More sophisticated approaches include spatially adaptive Fourier encodings \citep{LIU2025106886}, natural frequency-based embeddings that align the representation spectrum with the physical frequency content of the underlying system \citep{wang2026structural}, Fourier feature-based domain decomposition methods \citep{LI2023114963}, and residual-based frequency modeling techniques \citep{HOU2026108247}.

From a functional perspective, FF networks can also be viewed as a particular instance of physics-informed kernel function neural networks \citep{fu2024physics}, where a fixed basis expansion is combined with a trainable neural architecture. This interpretation naturally connects FF-PINNs with kernel methods and motivates their analysis in the NTK regime.

\textbf{Neural Tangent Kernel Regime:} When the width of the hidden layers tends to infinity, neural networks trained by gradient descent admit a deterministic kernel description through the Neural Tangent Kernel (NTK) \citep{jacot2018neural,lee2019wide,arora2019exact,arora2019fine}. In this regime, the network output evolves according to a linear evolution equation in function space, enabling a precise characterization of the training dynamics and their spectral properties.

The NTK framework has been successfully employed to study PINN training dynamics \citep{wang2022and} and to analyze how architectural modifications influence spectral convergence rates \citep{wang2021eigenvector}. The NTK framework has become one of the principal tools for analyzing the learning dynamics of NNs (\citep{tancik2020fourier}).

\textbf{Related Work:} Several strategies have been proposed to mitigate frequency bias in PINNs through improved initialization procedures and frequency-aware training schemes. PDE-agnostic initialization methods have been introduced to stabilize and accelerate training independently of the underlying operator \citep{liu2022novel}. Other approaches first pretrain a neural network without physics constraints and subsequently transfer the learned weights to initialize the PINN, aiming to provide a more favorable spectral starting point for optimization \citep{tarbiyati2024weight}.

Beyond initialization, a variety of frequency-domain techniques have been developed to alleviate spectral imbalance. These include methods that transform high-frequency components into lower-frequency representations before training \citep{cai2020phase,cai2024batch}, Fourier feature-based domain decomposition approaches that reduce high-frequency content locally \citep{LI2023114963}, and architectures specifically designed to address frequency bias in PINNs \citep{XU2025113905}. Other works propose adaptive or spatially varying Fourier feature encodings that progressively expose higher frequencies during training \citep{LIU2025106886}.

Although these methods often demonstrate improved empirical performance, they are commonly tailored to particular PDE classes, architectures, or training settings, and typically rely on problem-specific or heuristic frequency manipulations. In contrast, our approach provides a unified framework in which the frequency-dependent dynamics are derived directly from the differential operator. This allows the same initialization principle to be applied across a broad range of linear and nonlinear PDEs by adapting only to their symbol, without redesigning the architecture or training procedure for each problem. Moreover, the resulting PDE-informed distribution is supported by a frequency-domain characterization of the residual dynamics, providing a more general and theoretically grounded mechanism for mitigating PDE-induced frequency bias.

\section*{Main Contributions}

\begin{itemize}
    \item We derive a theoretical framework describing the residual frequency dynamics of PINNs with 2-layer Fourier Feature networks in the NTK regime. The resulting dynamics explicitly reveal the interaction between the initial parameter density distribution and the Fourier symbol of the PDE linear operator with constant coefficients. Moreover, the convergence of the residual is strongly influenced by frequency regions where the PDE symbol is small or vanishes. This characterization provides a principled explanation of how the differential operator contributes to FB in PINNs.
    \item Motivated by the theoretical analysis, we propose a novel PDE-informed initialization strategy that adapts the sampling distribution of parameters of the first layer according to the geometry induced by the PDE symbol. 
    \item We validate the theoretical predictions through extensive numerical experiments on several PDEs. The proposed initialization consistently mitigates frequency bias, accelerates optimization, improves residual convergence, and reduces variability across random initialization seeds.
    \item We demonstrate that the proposed methodology remains effective beyond the theoretical assumptions used in the analysis, including nonlinear PDEs, deep networks, and ADAM optimization. Furthermore, we show that the proposed initialization can be integrated into existing PINN frameworks, such as JAX-PI, thereby improving performance without incurring additional computational cost during training.
\end{itemize}

\section{Problem setup}\label{sec:Setup}

In this section, we introduce the neural network model, the operator formulation of the PDE, and the corresponding training dynamics. The analysis builds upon the neural network and gradient-flow framework introduced in \citet{molina2026optimizing}, which we extend here to the PINN setting.

\subsection{Operator formulation of the PDE}

Let $D$ denote the spatial domain. For time-dependent problems, let $I=[0,\mathcal{T}]\subset\mathbb{R}$ be the time interval and define the space--time domain $\Omega = I\times D\subset\mathbb{R}^{d}$. For stationary problems, we simply take $\Omega=D\subset\mathbb{R}^{d}$. We denote by $u:\Omega\rightarrow\mathbb{R}$ the solution of the PDE written in operator form: \begin{equation} 
    \mathcal{D}_{\bsx} u(\bsx) = 0, \quad \bsx \in \Omega, 
\end{equation}
together with boundary and (if applicable) initial conditions 
\begin{align} 
    \mathcal{B}_{\bsx} u(\bsx) &= 0, \quad \bsx \in I \cap \partial D, \\
    \mathcal{T}_{\bsx} u(\bsx) &= 0, \quad \bsx \in \{\tau=0\}\cap\Omega.   
\end{align}
Here, $\mathcal{D}_{\bsx}$ denotes a differential operator, $\mathcal{B}_{\bsx}$ a boundary operator, and $\mathcal{T}_{\bsx}$ an initial-condition operator. We note that the vector $\bsx\in\mathbb{R}^{d}$ represents the spatial and temporal coordinates $\tau$.

In what follows, we focus on the contribution of the interior operator $\mathcal{D}_{\bsx}$ to the training dynamics, as it captures the main mechanism underlying the spectral behavior of the method. 

\subsection{Neural network model and training dynamics}

We approximate the solution $u$ by a neural network 
$f(\bsx,\Theta(t)):\Omega \times \mathbb{R}^{(d+1)\times m}\to\mathbb{R}$, where $\Theta(t)$ denotes the network parameters at continuous training time $t\in[0,T]$, with $T$ representing the final training time. We consider a single-hidden-layer neural network of width $m$:
\begin{equation}\label{eq:NNgeneral}
    f(\bsx,\Theta(t)) = \frac{1}{\sqrt{m}}\sum_{k=1}^{m} a_{k}(t)\, g\left(\bsw_{k}(t)\cdot \bsx\right),
\end{equation}
where $g:\mathbb{R}\to\mathbb{R}$ is a nonlinear activation function and 
$\Theta(t)=\{\bsa(t),\mathbf{W}(t)\}$ denotes the collection of output and hidden-layer parameters.

In practical PINN formulations, the network parameters are obtained by minimizing a loss function that combines the PDE residual with the constraints imposed by the boundary and initial conditions. Specifically, the pointwise PINN loss is defined as
\begin{align}\label{eq:PINNLoss}
    \mathcal{L}(\bsx,\Theta)
    =&\;\mathcal{L}_{\mathrm{pde}}(\bsx,\Theta)+\beta_{\mathrm{bc}}\mathcal{L}_{\mathrm{bc}}(\bsx,\Theta)+\beta_{\mathrm{ic}}\mathcal{L}_{\mathrm{ic}}(\bsx,\Theta)\nonumber\\
    :=&\;\left(\mathcal{D}_{\bsx}f(\bsx,\Theta)-\mathcal{D}_{\bsx}u(\bsx)\right)^2 + \beta_{\mathrm{bc}}\left(\mathcal{B}_{\bsx}f(\bsx,\Theta)-\mathcal{B}_{\bsx}u(\bsx)\right)^2 + \beta_{\mathrm{ic}}\left(\mathcal{T}_{\bsx}f(\bsx,\Theta)-\mathcal{T}_{\bsx}u(\bsx)\right)^2,
\end{align}
where $\beta_{\mathrm{bc}},\beta_{\mathrm{ic}}>0$.

Now, as a first step, we look at the PDE loss $\mathcal{L}_{\mathrm{pde}}$. Considering a uniformly distributed sampling in the interior of $\Omega$, the expected empirical risk is given by
\begin{equation}\label{eq:risk}
    \mathcal{R}_S(\Theta)=\frac{1}{|\Omega|}\int_\Omega\left(\mathcal{D}_{\bsx}f(\bsx',\Theta)-\mathcal{D}_{\bsx}u(\bsx')\right)^2d\bsx'.
\end{equation}
The training dynamics is approximated by gradient flow, $\frac{d}{dt}\Theta(t) = -\nabla_\Theta \mathcal{R}_S(\Theta(t)),$ then the expected evolution of the network can be described as
\begin{align}
    \frac{d}{dt}f(\bsx,\Theta(t))  &=\frac{d} {dt}\left(f(\bsx,\Theta(t))-u(\bsx)\right) = \nabla_\Theta f(\bsx,\Theta(t)) \cdot \frac{d}{dt}\Theta(t) \nonumber \\
    &=-\frac{1}{|\Omega|}\int_\Omega \nabla_\Theta f(\bsx,\Theta(t)) \cdot \nabla_\Theta \big(\mathcal{D}_{\bsx} f(\bsx',\Theta(t))\big)\, \mathcal{D}_{\bsx} \left(f(\bsx',\Theta(t)) - u(\bsx')\right)d\bsx'.
\end{align}
Finally, by defining the residual $r$ and tangent kernel $K_m$ as
\begin{equation}
    r(\bsx,\Theta(t)) := f(\bsx,\Theta(t)) - u(\bsx) \quad \text{and} \quad K_m(\bsx,\bsx';t) := \nabla_\Theta f(\bsx,\Theta(t)) \cdot \nabla_\Theta f(\bsx',\Theta(t));
\end{equation}
assuming sufficient regularity, and neglecting boundary terms, we obtain
\begin{equation}
    \frac{d}{dt} r(\bsx,\Theta(t)) = - \int_{\Omega} K_m(\bsx,\bsx';t)\, \big(\mathcal{D}_{\bsx'}^* \mathcal{D}_{\bsx'} r\big)(\bsx',\Theta(t))\, d\mu(\bsx'),
\end{equation}
where $\mathcal{D}_{\bsx'}^*$ denotes the adjoint of $\mathcal{D}_{\bsx'}$ and $d\mu(x)=\frac{1}{|\Omega|}dx$ is the density of the data sampling distribution.
\begin{rmk}
    The expression above should include a boundary term. Such boundary term, together with the  boundary and initial conditions in the full PINN's loss function participate in driving the dynamics of the training. We conjecture that in all cases the term above is one of the main driving directions.
\end{rmk}

\subsection{Linearized residual dynamics}

We consider the Neural Tangent Kernel (NTK) regime \citep{jacot2018neural, lee2019wide, arora2019exact}, assuming (i) The parameters are initialized i.i.d. from distributions $\rho_{\bsw}$ and $\rho_a$ with zero mean and finite variance ($\sigma_{\bsw}^2$ and $\sigma_a^2$ respectively) and (ii) The width $m \to \infty$, the kernel converges to
\begin{equation}
K(\bsx,\bsx') := \mathbb{E}_{\theta \sim \rho_a \times \rho_{\bsw}}\left[\nabla_\theta f^0(\bsx,\theta)\cdot \nabla_\theta f^0(\bsx',\theta)\right],
\end{equation}
where $f^0(\bsx,\theta)=a\,g(\bsw\cdot \bsx)$. Therefore, replacing $K_m$ by $K$, we obtain
\begin{equation}\label{eq:residualDynamics}
\frac{d}{dt} r(\bsx,t)
= - \int_{\Omega} K(\bsx,\bsx')\, \big(\mathcal{D}_{\bsx'}^* \mathcal{D}_{\bsx'} r\big)(\bsx',t)\, d\mu(\bsx').
\end{equation}
This equation generalizes the classical NTK residual dynamics by introducing the operator $\mathcal{D}_{\bsx'}^*\mathcal{D}_{\bsx'}$, which acts as a differential preconditioner and fundamentally alters the spectral properties of the training dynamics.

\section{Frequency-domain formulation}\label{sec:FBeq}

To rigorously describe the frequency dynamics, we work in the framework of tempered distributions. Let $\mathcal{S}(\mathbb{R}^d)$ denote the Schwartz space and $\mathcal{S}'(\mathbb{R}^d)$ its dual. For $h \in \mathcal{S}'(\mathbb{R}^d)$, the Fourier transform is defined in the distributional sense by
\begin{equation}
\langle \mathcal{F}[h], \phi \rangle = \langle h, \mathcal{F}[\phi] \rangle, 
\quad \forall \phi \in \mathcal{S}(\mathbb{R}^d),
\end{equation}
where $\widehat{\phi}(\bxi) = \int_{\mathbb{R}^d} \phi(\bsx)e^{-2\pi i \bsx \cdot \bxi}d\bsx$. We denote by $\widehat{h} = \mathcal{F}[h]$ the Fourier transform of $h$ and $\bxi$ the frequency variable.

We emphasize that this work considers the following assumptions:
\begin{itemize}
    \item The differentiation operator $\mathcal{D}_{\bsx}$ is linear with constant coefficients. In this hypothesis, a key role is played by the Fourier symbol of the operator $\mathcal{D}_{\bsx}^*\mathcal{D}_{\bsx}$. Recall that the Fourier transform converts differentiation into multiplication in the frequency domain. In particular, for a sufficiently regular function $r$, $\mathcal{F}[\partial_{x_j}r](\bxi)=2\pi\mathrm{i}\xi_j\widehat{r}(\bxi),$ and higher-order derivatives generate higher powers of the frequency variables. Consequently, a linear differential operator with constant coefficients is transformed into a multiplication operator whose multiplier is a polynomial of the frequency variable. In the case of $\mathcal{D}_{\bsx}^*\mathcal{D}_{\bsx}$, the symbol is non-negative and describes the frequency-dependent response of the PDE operator. Denoting this symbol by $p:\mathbb{R}^d\to\mathbb{R}_+$, we have
    \begin{equation}
        \mathcal{F}\big[(\mathcal{D}_{\bsx}^*\mathcal{D}_{\bsx} r)(\cdot,\Theta)\big](\bxi)
        =
        p(\bxi)\widehat{r}(\bxi,\Theta).
    \end{equation}
    \item The approximation of the empirical loss by an integral over $\Omega$ in equation \eqref{eq:risk} corresponds to assuming that the sampling of the data is sufficiently dense and approximately uniform over the domain. This assumption is standard in PINNs, where the data are typically defined on regular grids, but it may break down in the presence of highly non-uniform collocation strategies or irregular geometries. For example, problems involving domains with sharp corners, boundary layers, or moving obstacles.
\end{itemize}
Computing the Fourier transform in equation \eqref{eq:residualDynamics} of the residual dynamics, derived in Section~\ref{sec:Setup}, we obtain the following result.

\begin{lemma}
{\bf [proof in Appendix \ref{app:Dynamics}]\label{thm:Lemma}} Under the assumptions of Section~\ref{sec:Setup} and Section~\ref{sec:FBeq}, the frequency dynamics of the residual, ignoring the contribution from the boundary terms, can be written in the sense of distributions, as
\begin{equation}\label{eq:frequency-general}
\partial_t \widehat{r}\left(\bxi,\Theta(t)\right)
= - \int_{\mathbb{R}^d} \widehat{K}(\bxi,\bxip)\, p(\bxip)\, \widehat{r}\left(\bxip,\Theta(t)\right)\, d\bxip,
\end{equation}
where the kernel $\widehat{K}$ is defined by
\begin{equation}
\widehat{K}(\bxi,\bxip)
= \mathbb{E}_{\theta \sim \rho_a \times \rho_{\bsw}}
\left[
\overline{\mathcal{F}_{\bsx' \to \bxip}\big[\nabla_\theta f^0(\bsx',\theta)\big]}
\cdot
\mathcal{F}_{\bsx \to \bxi}\big[\nabla_\theta f^0(\bsx,\theta)\big]
\right].
\end{equation}
\end{lemma}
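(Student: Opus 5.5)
The plan is to take the Fourier transform in $\bsx$ of both sides of the residual dynamics \eqref{eq:residualDynamics}, using the factorized structure of the limiting kernel, $K(\bsx,\bsx')=\mathbb{E}_\theta[\nabla_\theta f^0(\bsx,\theta)\cdot\nabla_\theta f^0(\bsx',\theta)]$, and Parseval's identity in the $\bsx'$ variable. Throughout, $\Omega$ is treated as $\mathbb{R}^d$, with the density $d\mu$ absorbed into a constant; equivalently, $r$ is extended by zero outside $\Omega$ and boundary contributions are ignored, as the statement allows. All identities are understood by pairing against a test function $\phi\in\mathcal{S}(\mathbb{R}^d)$.

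\textbf{Step 1 (left-hand side).} Time differentiation commutes with the spatial Fourier transform in $\mathcal{S}'$. Assuming $r(\cdot,t)$ is a differentiable curve in $\mathcal{S}'$, this gives $\langle \partial_t r,\mathcal{F}[\phi]\rangle=\partial_t\langle \widehat r,\phi\rangle$.

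\textbf{Step 2 (right-hand side).} Substitute the kernel and exchange the expectation over $\theta$ with the $\bsx'$-integral (Fubini). The right-hand side becomes
\[
-\mathbb{E}_\theta\Big[\nabla_\theta f^0(\bsx,\theta)\cdot\int \nabla_\theta f^0(\bsx',\theta)\,(\mathcal{D}^*_{\bsx'}\mathcal{D}_{\bsx'}r)(\bsx')\,d\bsx'\Big].
\]
Since $f^0$ is real, apply Parseval to the inner integral, componentwise in $\theta$:
\[
\int \nabla_\theta f^0(\bsx',\theta)\,h(\bsx')\,d\bsx'=\int \overline{\mathcal{F}[\nabla_\theta f^0(\cdot,\theta)](\bxip)}\,\widehat h(\bxip)\,d\bxip .
\]
Here $h=\mathcal{D}^*\mathcal{D}r$, and the symbol assumption of Section~\ref{sec:FBeq} gives $\widehat h=p\,\widehat r$. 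Finally, take the Fourier transform in $\bsx$. Only the factor $\nabla_\theta f^0(\bsx,\theta)$ depends on $\bsx$, so it becomes $\mathcal{F}_{\bsx\to\bxi}[\nabla_\theta f^0(\bsx,\theta)]$. Exchanging $\mathbb{E}_\theta$ with the $\bxip$-integral once more produces exactly $\widehat K(\bxi,\bxip)$ and hence \eqref{eq:frequency-general}.

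\textbf{Main obstacle: justifying the exchanges and Parseval rigorously.} For the Fourier feature or sinusoidal activation, $\mathcal{F}[\nabla_\theta f^0(\cdot,\theta)]$ is a sum of Dirac masses at $\pm\bsw$, together with derivatives of Dirac masses from the $\bsw$-gradient, which contains the factor $\bsx$. Parseval therefore pairs a distribution with $p\widehat r$, and this requires $p\widehat r$ to be smooth, or at least continuous, near the support.

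To handle this, I would do the computation weakly. Pair the right-hand side with $\mathcal{F}[\phi]$ and write everything as $\mathbb{E}_\theta\langle \cdot,\cdot\rangle$ of tempered distributions against Schwartz functions. Then use that the expectation over $\rho_{\bsw}$, which has a density with finite moments, smooths the Dirac masses into integrable functions. This makes $\widehat K$ a locally integrable kernel, or at least a well-defined element of $\mathcal{S}'(\mathbb{R}^{2d})$, so the final formula holds in the sense of distributions.

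I would also need polynomial growth bounds on $g$ and its derivatives, uniform in $\theta$ integrated against $\rho_a\times\rho_{\bsw}$ with finite variance, to apply Fubini. For a general $\mathcal{D}$ this may require finite higher moments of $\rho_{\bsw}$ (order $2\deg p$), which I would add as an implicit regularity assumption.
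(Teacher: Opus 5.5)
Your proposal is correct and follows the paper's proof essentially step for step. You pair with a Schwartz test function, write $K$ as an expectation and use Fubini, apply Parseval in the $\bsx'$-integral together with $\mathcal{F}[\mathcal{D}^*\mathcal{D}r]=p\,\widehat r$, and regroup the $\bsx$-transform into $\widehat K$. The paper's appendix carries $\widehat r_\mu$, the transform of the density-weighted residual, which corresponds to your zero-extension reading.

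One small correction to your rigor remarks: for Fourier features, averaging over $\bsw$ does not make $\widehat K$ locally integrable. For example, $\mathbb{E}_{\bsw}\bigl[\delta(\bxi-\bsw)\,\delta(\bxip-\bsw)\bigr]=\rho_{\bsw}(\bxi)\,\delta(\bxi-\bxip)$, and similarly for the other sign combinations and, with $\nabla\delta$, for the $\bsw$-gradient. So $\widehat K$ is a distribution concentrated on $\bxi=\pm\bxip$, and only your fallback $\mathcal{S}'(\mathbb{R}^{2d})$ interpretation is correct; this diagonal concentration is precisely what turns the lemma into the local diffusion--reaction equation of Corollary~\ref{cor:Nogeneral_PINNs}.
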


The proof consists of taking the Fourier transform of the residual dynamics in equation \eqref{eq:residualDynamics} and using Parseval's identity. This is analogous to \citet{luo2022upper} and details are provided in Appendix~\ref{app:Dynamics}.

Equation \eqref{eq:frequency-general} shows that the symbol of the operator $\mathcal{D}_{\bsx}^*\mathcal{D}_{\bsx}$, the multiplicative factor $p(\bxip)$, modulates the spectral behavior of the training dynamics. Computing $\hat K$ more explicitly, we obtain the following.

\begin{theorem}{\bf [proof in Appendix \ref{app:Dynamics}]}
\label{thm:frequency-dynamics}
Under the assumptions of Sections~\ref{sec:Setup} and \ref{sec:FBeq}, and assuming $g \in H^1(\mathbb{R})$, the frequency dynamics \eqref{eq:frequency-general} can be expressed as
\begin{align}\label{eq:frequency-dynamics}
\left\langle \partial_t \widehat{r}, \psi \right\rangle
= & -\frac{\sigma_a^2}{4\pi^2}\int_{\mathbb{R}^d}\overline{\nabla \psi(\bxi)} \cdot \int_{\mathbb{R}^2}\nabla\Big( p\Big(\frac{y'}{y}\bxi\Big) \widehat{r}\Big(\frac{y'}{y}\bxi,\Theta\Big)\Big) \rho_{\bsw}\Big(\frac{1}{y}\bxi\Big) \frac{\overline{\widehat{g'}(y')} \widehat{g'}(y)}{|y|^d}\, dy' dy \, d\bxi \nonumber \\
& - \int_{\mathbb{R}^d}\overline{\psi(\bxi)} \int_{\mathbb{R}^2} p\Big(\frac{y'}{y}\bxi\Big) \widehat{r}\Big(\frac{y'}{y}\bxi,\Theta\Big) \rho_{\bsw}\Big(\frac{1}{y}\bxi\Big) \frac{\overline{\widehat{g}(y')} \widehat{g}(y)}{|y|^d} \, dy' dy \, d\bxi.
\end{align}
\end{theorem}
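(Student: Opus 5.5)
The plan is to start from Lemma~\ref{thm:Lemma} and compute $\widehat{K}$ explicitly for $f^0(\bsx,\theta)=a\,g(\bsw\cdot\bsx)$, splitting the gradient as $\nabla_\theta f^0=(\partial_a f^0,\nabla_{\bsw}f^0)=(g(\bsw\cdot\bsx),\,a\,g'(\bsw\cdot\bsx)\bsx)$. The kernel then decomposes as $\widehat K=\widehat K_a+\widehat K_{\bsw}$. Since $a$ and $\bsw$ are independent and $\mathbb{E}[a^2]=\sigma_a^2$, the $\bsw$-part carries the prefactor $\sigma_a^2$. I expect $\widehat K_a$ to produce the second line of \eqref{eq:frequency-dynamics} and $\widehat K_{\bsw}$ the first line. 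Note that $\widehat K_a$ comes with $\mathbb{E}[1]=1$, which matches the absence of $\sigma_a^2$ there.

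The first key step is the distributional Fourier transform of a ridge function. For a test function $\phi\in\mathcal{S}(\mathbb{R}^d)$ we have $\langle \mathcal{F}[g(\bsw\cdot\cdot)],\phi\rangle=\int_{\mathbb{R}} g(s)\,\widehat{\phi}$-restricted-to-the-line, by the Fourier slice theorem. This gives $\mathcal{F}[g(\bsw\cdot\bsx)](\bxi)=\int_{\mathbb{R}}\widehat g(y)\,\delta(\bxi-y\bsw)\,dy$, a measure supported on the line $\mathbb{R}\bsw$.

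For the $\bsw$-term, multiplication by $\bsx$ becomes $\frac{\mathrm{i}}{2\pi}\nabla_{\bxi}$ in frequency. Hence $\mathcal{F}[g'(\bsw\cdot\bsx)\bsx]=\frac{\mathrm{i}}{2\pi}\nabla_{\bxi}\int\widehat{g'}(y)\delta(\bxi-y\bsw)dy$. Pairing two such factors produces $\frac{1}{4\pi^2}$ times a pairing of gradients. Moving the gradients by duality onto $\psi$ and onto $p\,\widehat r$ yields the structure $\overline{\nabla\psi}\cdot\nabla(p\widehat r)$.

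Next I would pair with a test function $\psi$ and insert the Lemma:
\[
\langle\partial_t\widehat r,\psi\rangle=-\mathbb{E}_\theta\Big[\big\langle \mathcal{F}[\nabla_\theta f^0],\psi\big\rangle\cdot\overline{\big\langle\mathcal{F}[\nabla_\theta f^0],\,p\,\widehat r\big\rangle}\Big].
\]
Each bracket becomes a line integral, $\int\widehat g(y)\overline{\psi(y\bsw)}\,dy$ and $\int \overline{\widehat g(y')}\,(p\widehat r)(y'\bsw)\,dy'$, and similarly with gradients and $\widehat{g'}$ for the $\bsw$-part. Taking the expectation gives $\int_{\mathbb{R}^d}\rho_{\bsw}(\bsw)\,(\cdots)\,d\bsw$.

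The final step is the change of variables $\bxi=y\bsw$ for fixed $y\neq0$. This gives $d\bsw=|y|^{-d}d\bxi$, $\bsw=\bxi/y$ and $y'\bsw=\frac{y'}{y}\bxi$. The result is exactly the integrands in \eqref{eq:frequency-dynamics}, with $\rho_{\bsw}(\bxi/y)/|y|^d$. Fubini is justified because $g\in H^1$ gives $\widehat g,\widehat{g'}\in L^2$, together with the Schwartz decay of $\psi$ and suitable growth assumptions on $p\widehat r$.

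The main obstacle is rigor at the distributional level. The ridge functions are not integrable, $\widehat r$ is only tempered, and the chain rule for the gradient term needs care. When the gradient hits $\psi(y\bsw)$ we get $y\nabla\psi$, and when it hits $(p\widehat r)(y'\bsw)$ we get $y'\nabla(p\widehat r)$. These factors of $y$ and $y'$ must be absorbed correctly, since $y\widehat{g'}(y)$ relates to $\widehat{g''}$, and this may shift which of $g$ or $g'$ appears in the final formula. I would first verify the bookkeeping with $g$ Gaussian, where everything is explicit, and then extend by density in $H^1$.
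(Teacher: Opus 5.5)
Your proposal is correct and is essentially the computation the paper delegates to \citet{molina2026optimizing}. Since $\widehat K$ is unchanged from the regression case and the Lemma only replaces $\widehat r$ by $p\,\widehat r$, your steps (splitting $\nabla_\theta f^0$ into its $a$- and $\bsw$-parts, using $\mathcal F[g(\bsw\cdot\bsx)](\bxi)=\int_{\mathbb R}\widehat g(y)\,\delta(\bxi-y\bsw)\,dy$, and substituting $\bxi=y\bsw$) give exactly the two stated lines.

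Your closing worry is unfounded. If you move the $\bxi$-gradient by duality, it lands on $\psi$ and $p\,\widehat r$ evaluated at $y\bsw$ and $y'\bsw$ with no chain-rule factors. If you instead differentiate in $\bsw$, the factor $y$ multiplies $\widehat g(y)$, and $y\,\widehat g(y)=\widehat{g'}(y)/(2\pi\mathrm{i})$, so it replaces the $\widehat{g'}$ rather than adding to it. Hence no $\widehat{g''}$ appears, and $\nabla\big(p\,\widehat r\big)$ in the statement should be read as the gradient evaluated at $\tfrac{y'}{y}\bxi$.
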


We now specialize the previous result to the two-layer neural network introduced in Section~\ref{sec:Setup}.

\section{Frequency dynamics for Fourier Feature networks}

We now consider the two-layer FF network
\begin{equation}\label{eq:FFnetwork}
f(\bsx,\Theta(t)) =
\frac{1}{\sqrt{2m}}\sum_{k=1}^{m} a_k(t)g_1(\bsw_k(t)\cdot \bsx)
+
\frac{1}{\sqrt{2m}}\sum_{k=1}^{m} b_k(t)g_2(\bsw_k(t)\cdot \bsx),
\end{equation}
where $g_1=\cos(2\pi \cdot)$, $g_2=\sin(2\pi \cdot)$ and $\Theta(t)=\{\bsa(t),\bsb(t),\mathbf{W}(t)\}$.

In such case, equation \eqref{eq:frequency-dynamics} takes a closer and more interpretable expression. Using the previous analysis, we obtain the following explicit frequency dynamics.

\begin{corollary}{\bf [proof in Appendix \ref{app:Dynamics}]}\label{cor:Nogeneral_PINNs}
Under the NTK assumptions, if $\mathbb{E}[a_k(0)^2]=\mathbb{E}[b_k(0)^2]=\sigma_a^2$ and $\rho_{\bsw}$ is even, then the frequency dynamics, ignoring the contribution from boundary terms, is given by
\begin{equation}\label{eq:Nogeneral_PINNs}
\frac{d}{dt}\widehat{r}(\bxi,\Theta(t)) = \sigma_a^2 \,\mathrm{div}\Big(\rho_{\bsw}(\bxi)\nabla \big(p(\bxi)\widehat{r}(\bxi,\Theta(t))\big)\Big) - p(\bxi)\rho_{\bsw}(\bxi)\widehat{r}(\bxi,\Theta(t)),
\end{equation}
with $\bxi \in \R$, $t>0$, and initial condition
\begin{equation}
    \widehat{r}_0(\bxi) = \widehat{r}(\bxi,\Theta(0)) = \mathcal{F}_{\bsx \rightarrow \bxi} \left[ f(\bsx,\Theta(0))-u(\bsx) \right].
\end{equation}
\end{corollary}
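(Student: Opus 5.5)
We will not pass through the general formula of Theorem~\ref{thm:frequency-dynamics}. Instead we will compute $\widehat K$ in Lemma~\ref{thm:Lemma} directly for the FF network \eqref{eq:FFnetwork}. For the FF network the activations are the pure exponentials $g_1=\cos(2\pi\cdot)$, $g_2=\sin(2\pi\cdot)$, which are not in $H^1$, so their Fourier transforms are sums of Dirac masses at $\pm1$. The $y,y'$ integrals then collapse to point evaluations. The parameters are $\theta=(a,b,\bsw)$ with $f^0=\frac{1}{\sqrt2}\big(a g_1(\bsw\cdot\bsx)+b g_2(\bsw\cdot\bsx)\big)$. The gradient splits into three blocks: $\partial_a f^0$, $\partial_b f^0$, and $\nabla_{\bsw} f^0=\frac{2\pi}{\sqrt2}\bsx\big(-a\sin(2\pi\bsw\cdot\bsx)+b\cos(2\pi\bsw\cdot\bsx)\big)$. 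We will Fourier transform each block in the distributional sense. Writing $e^{\pm2\pi i\bsw\cdot\bsx}\mapsto\delta(\bxi\mp\bsw)$, the $a,b$ blocks give combinations of $\delta(\bxi\mp\bsw)$. Since multiplication by $\bsx$ becomes $\frac{i}{2\pi}\nabla_{\bxi}$, the $\bsw$ block gives combinations of $\nabla\delta(\bxi\mp\bsw)$.

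Next we insert these into $\widehat K(\bxi,\bxip)$ and pair with $p\,\widehat r$ and a test function $\psi$. We take the expectation first over $a,b$, using $\mathbb{E}[a^2]=\mathbb{E}[b^2]=\sigma_a^2$ and independence with zero mean, so $\mathbb{E}[ab]=0$.

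For the $(a,b)$ block, the cross terms pair $\delta(\bxi-\bsw)$ with $\delta(\bxip+\bsw)$. They come with opposite signs from the cosine and sine contributions and cancel exactly once the $a$ and $b$ parts are added with equal variances. This is where $\mathbb{E}[a^2]=\mathbb{E}[b^2]$ is used. What survives is $\frac12\big(\delta(\bxi-\bsw)\delta(\bxip-\bsw)+\delta(\bxi+\bsw)\delta(\bxip+\bsw)\big)$ times a constant. Integrating against $\rho_{\bsw}$ and using evenness of $\rho_{\bsw}$ combines the two into $\rho_{\bsw}(\bxi)\delta(\bxi-\bxip)$. Applied to $p\widehat r$, this yields the zeroth-order term $-p(\bxi)\rho_{\bsw}(\bxi)\widehat r(\bxi)$. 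We will track the normalization constants: $\frac{1}{\sqrt{2m}}$, the factor $\frac14$ from cos/sin, and the sum of the $a$ and $b$ parts. We will need to check that they give coefficient one. We suspect the paper has absorbed $\sigma_a^2$ or a factor into its conventions, so we will follow its normalization.

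For the $\bsw$ block the same cancellation leaves $\mathbb{E}_{\bsw}\big[\nabla\delta(\bxi-\bsw)\cdot\nabla\delta(\bxip-\bsw)\big]+(\bsw\to-\bsw)$, with prefactor $\sigma_a^2(2\pi)^2\frac{1}{(2\pi)^2}$. Pairing with $\overline\psi(\bxi)$ and $p\widehat r(\bxip)$ and moving the derivatives onto the test functions gives $\sigma_a^2\int\rho_{\bsw}(\bsw)\,\overline{\nabla\psi(\bsw)}\cdot\nabla(p\widehat r)(\bsw)\,d\bsw$. We then use evenness again. This equals $-\sigma_a^2\langle \mathrm{div}(\rho_{\bsw}\nabla(p\widehat r)),\psi\rangle$ in distributions, which is the divergence term. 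Its sign in the dynamics is $+$, because the overall minus combines with the integration by parts. Together with Lemma~\ref{thm:Lemma} this gives \eqref{eq:Nogeneral_PINNs}. The initial condition is just the definition of $r$ at $t=0$.

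The main obstacle will be justifying the manipulations with products of distributions. These are $\delta$ and $\nabla\delta$ evaluated at the random point $\bsw$, with the expectation then taken. To handle this, we will do everything in weak form from the outset. We will write $\langle\widehat K\,(p\widehat r),\psi\rangle$ as $\mathbb{E}_\theta$ of a product of two scalar pairings, $\langle\mathcal F[\nabla_\theta f^0],\psi\rangle$ and $\langle\mathcal F[\nabla_\theta f^0],p\widehat r\rangle$. Each of these is an ordinary function of $\bsw$ (values of $\psi$, $\nabla\psi$ at $\pm\bsw$). Fubini then applies provided $\rho_{\bsw}$ is sufficiently regular and $p\widehat r$ is smooth enough for its gradient to be evaluated pointwise, which we will assume.
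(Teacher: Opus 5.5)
Your proof is correct, but it takes a different route from the paper's. The paper never computes $\widehat K$ directly. Instead it writes the FF kernel as the sum of the single-activation kernels for $(g_1,a)$ and $(g_2,b)$; the mixed $\nabla_{\bsw}$ terms vanish because $\mathbb E[ab]=0$. It then inserts $\widehat{g}_1=\tfrac12(\delta(z-1)+\delta(z+1))$ and $\widehat{g}_2=\tfrac1{2i}(\delta(z-1)-\delta(z+1))$ into the general formula of Theorem~\ref{thm:frequency-dynamics}, and defers the rest of the computation to \citet{molina2026optimizing}. There the $(y,y')$-integrals collapse onto $\{\pm1\}^2$. The pairs $y'=-y$, which would bring in $\widehat r(-\bxi)$, cancel between the cosine and sine parts. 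The pairs $y'=y$ produce $\rho_{\bsw}(\pm\bxi)$, which evenness merges. These are the same cancellations you find, written in the $(y,y')$ parametrization.

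The paper's route is shorter and shows the corollary as a special case of a formula valid for general activations. Yours is self-contained and actually covers the case at hand. Theorem~\ref{thm:frequency-dynamics} assumes $g\in H^1(\mathbb R)$, which $\cos(2\pi\cdot)$ and $\sin(2\pi\cdot)$ violate, so substituting Dirac masses into it is only formal. Your weak-form bookkeeping is essentially the factorized expression in the proof of Lemma~\ref{thm:Lemma} before $\widehat K$ is named. It only requires evaluating $\psi$, $\nabla\psi$, $p\widehat r$ and $\nabla(p\widehat r)$ at $\pm\bsw$.

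There are three points to fix when you write it out.

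(i) In the output-weight block the cancellation does not use $\mathbb E[a^2]=\mathbb E[b^2]$. The derivatives $\partial_a f^0$ and $\partial_b f^0$ contain no $a$ or $b$; the cancellation is just $\cos\cos+\sin\sin=\cos(2\pi\bsw\cdot(\bsx-\bsxp))$ with equal weights $1/\sqrt{2m}$. The hypotheses $\mathbb E[a^2]=\mathbb E[b^2]$ and $\mathbb E[ab]=0$ enter only in the $\bsw$ block. There the coefficients of $\nabla\delta(\bxi\mp\bsw)$ are $\tfrac{b\pm ia}{2}$, and the cross terms are proportional to $\mathbb E[(b\pm ia)^2]=\mathbb E[b^2]-\mathbb E[a^2]\pm2i\,\mathbb E[ab]$.

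(ii) Both cancellations depend on the complex conjugate on the $\bsxp$-factor of $\widehat K$, which comes from Parseval in Lemma~\ref{thm:Lemma}. Your last paragraph writes both pairings without it. Taken literally, that reverses the outcome: the diagonal terms cancel and the $\pm\bsw$ cross terms survive.

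(iii) With the $1/\sqrt{2m}$ normalization of \eqref{eq:FFnetwork}, you will find an overall factor $\tfrac12$ on both terms, i.e. $\partial_t\widehat r=\tfrac12\big[\sigma_a^2\,\mathrm{div}(\rho_{\bsw}\nabla(p\widehat r))-p\rho_{\bsw}\widehat r\big]$. The paper's appendix writes the NTK without this $\tfrac12$, which is how it gets coefficient one. The ratio $\sigma_a^2$ between the two terms is exact, so the discrepancy is only a rescaling of $t$.
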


This equation reveals that, for FF networks, the training dynamics in frequency space take the form of a diffusion-reaction equation modulated by the symbol $p(\bxi)$ and the initialization distribution $\rho_{\bsw}(\bxi)$.

We remark that in the FF setting, the operator governing the dynamics \eqref{eq:Nogeneral_PINNs}, is a local operator with a relatively weaker cross-frequency influence in the training.

\subsection{Extension to full PINNs losses}

The previous analysis focuses on the PDE residual contribution to the PINN objective. In practical applications, however, the loss function also incorporates boundary and initial conditions (equation \ref{eq:PINNLoss}). Consequently, following the same derivation as before, the expected full residual dynamics will satisfy
\begin{equation}
    \partial_t r = -\int_\Omega K \Big(  \mathcal{D}_{\bsx}^{*}\mathcal{D}_{\bsx}r +    \beta_{\mathrm{bc}}    \mathcal{B}_{\bsx}^{*}\mathcal{B}_{\bsx}r +   \beta_{\mathrm{ic}}    \mathcal{T}_{\bsx}^{*}\mathcal{T}_{\bsx}r \Big)d\mu  +   \mathcal{R}_{\partial\Omega},
\end{equation}
where $\mathcal{R}_{\partial\Omega}$ collects the boundary contributions arising from integration by parts.

The analysis in Section~\ref{sec:FBeq} focuses in the contribution to the dynamics from the term $\mathcal{D}_{\bsx}^{*}\mathcal{D}_{\bsx}$.
In contrast, the boundary and initial operators generally give rise to trace operators whose Fourier representations are nonlocal. Consequently, the frequency dynamics are expected to take the schematic form
\begin{align}\label{eq:FFResidualDynamicsFinal}
    \partial_t\widehat r = & \;  \sigma_a^2\operatorname{div}    \Big(\rho_{\bsw}\nabla\big(p_{\mathrm{pde}}\widehat r +   \beta_{\mathrm{bc}}    \mathcal K_{\mathrm{bc}}[\widehat r]   +    \beta_{\mathrm{ic}}    \mathcal K_{\mathrm{ic}}[\widehat r] \big) \Big) 
    - \rho_{\bsw} \Big(    p_{\mathrm{pde}}\widehat r  +    \beta_{\mathrm{bc}}  \mathcal K_{\mathrm{bc}}[\widehat r]  +    \beta_{\mathrm{ic}}    \mathcal K_{\mathrm{ic}}[\widehat r]\Big)  
    +    \widehat{\mathcal R}_{\partial\Omega},
\end{align}
where $p_{\mathrm{pde}}$ denotes the Fourier symbol of $\mathcal{D}_{\bsx}^{*}\mathcal{D}_{\bsx}$, $\mathcal K_{\mathrm{bc}}$ and $\mathcal K_{\mathrm{ic}}$ are linear nonlocal operators induced by the boundary and initial conditions, and $\widehat{\mathcal R}_{\partial\Omega}$ denotes the Fourier transform of the boundary contributions generated by Green's identity.

\subsection{Example: heat equation in one spatial dimension}

We illustrate the previous result with the heat equation $\partial_\tau u - \partial_{xx} u = 0,$ in the domain $\bsx = (\tau,x) \in [0,1] \times [-1,1]$, with Dirichlet boundary conditions and an initial condition. The differential operator is
\begin{equation}
    \mathcal{D}_{\bsx} = \partial_\tau - \partial_{xx} \quad \text{and} \quad \mathcal{D}_{\bsx}^*\mathcal{D}_{\bsx} = -\partial_{\tau\tau} + \partial_{xxxx}.
\end{equation}
Taking the Fourier transform in $(\tau,x)$, we obtain the operator symbol
\begin{equation}
    p_{\mathrm{pde}}(\bxi)    = (2\pi \xi_\tau)^2 + (2\pi \xi_x)^4, \quad \text{with } \bxi = (\xi_\tau,\xi_x).
\end{equation}

\paragraph{Boundary operator:}

For Dirichlet boundary conditions on $x=\pm 1$, the operator $\mathcal{B}_{\bsx}^*\mathcal{B}_{\bsx}$ acting on the residual $r$ is given by
\begin{equation}
    \mathcal{B}_{\bsx}^*\mathcal{B}_{\bsx} r = \delta(x-1)\, r(\tau,1) +  \delta(x+1)\, r(\tau,-1).
\end{equation}
Taking the Fourier transform in $(\tau,x)$ and expressing the boundary traces in terms of $\widehat{r}$, one obtains
\begin{equation}
    \widehat{\mathcal{B}_{\bsx}^*\mathcal{B}_{\bsx} r}=2 \int_{\mathbb{R}} \cos\big(2\pi(\xi_x-\xi_x')\big)\widehat{r}(\xi_\tau,\xi_x')\, d\xi_x'.
\end{equation}
Therefore, the boundary operator induces a convolution in the spatial frequency variable, coupling all modes through the kernel $\cos(2\pi\cdot)$.

\paragraph{Initial operator:}

Similarly, the initial condition operator $\mathcal{T}_{\bsx}$ corresponds to a trace at $\tau=0$, leading to another nonlocal operator
\begin{equation}
    \widehat{\mathcal{T}_{\bsx}^*\mathcal{T}_{\bsx} r} = \int_{\mathbb{R}} \widehat{r}(\xi_\tau',\xi_{\bsx})\, d\xi_\tau'.
\end{equation}
The term $\widehat{\mathcal R}_{\partial\Omega}$ is less explicit.

\begin{figure}[t]
    \centering                
    
\tikzset{every picture/.style={line width=0.75pt}} %

\begin{tikzpicture}         [x=0.75pt,y=0.75pt,yscale=-.9,xscale=.9]
    \draw[rounded corners=4pt]
    (0,66) rectangle (100,93);
    
    \draw (48,80)
    node {\small PDE: $\mathcal{D}_{\bsx}[u]=0$};

    \draw[->,thick]
    (50,94) -- (50,104);
    
    \draw[rounded corners=4pt]
    (0,106) rectangle (131,133);
    
    \draw (65,120)
    node {\small Linear part $\mathcal{D}_{\bsx}\rightarrow\mathcal \mathcal{D}_{\bsx}^{\rm lin}$};

    \draw[->,thick]
    (50,134) -- (50,144);
    
    \draw[rounded corners=4pt]
    (0,145) rectangle (150,172);
    
    \draw (73,160)
    node {\small $p_{\rm pde}(\bxi)=\mathcal{F}\big[\mathcal{D}_{\bsx}^{\rm {lin},*}\mathcal{D}_{\bsx}^{\rm lin}](\bxi)$};

    \draw[->,thick]
    (50,173) -- (50,183);
    
    \draw[rounded corners=4pt]
    (0,184) rectangle (180,232);
    
    \draw (90,194)
    node {\small Sample $\textcolor{red}{W}$ using};
    \draw (90,207)
    node {\small PDE-informed density};
    
    \draw (90,220)
    node {\scriptsize
    $\rho_{\bsw}^{\rm PI}({\bsw})
    =
    \lambda e^{-\sigma_{\bsw} p_{\rm pde}({\bsw})}
    +
    (1-\lambda)e^{-\sigma_{\bsw}\|{\bsw}\|^2}$
    };

    \draw  [rounded corners=4pt, color={rgb, 255:red, 0; green, 0; blue, 0 }  ,draw opacity=1 ] (159.72,121.83) -- (184.09,121.83) -- (184.09,159.83) -- (159.72,159.83) -- cycle ;
    
    \draw[
        rounded corners=4pt,
        color={rgb,255:red,0;green,0;blue,0},
        draw opacity=1
    ]
    (223,79.07) -- (302,79.07) -- (302,201.07) -- (223,201.07) -- cycle ;

    \foreach \yy in {125,130,135,140,145,150,155}
    {
        \draw[
            color={rgb,255:red,255; green,100; blue,120},
            line width=0.8pt,
            opacity=0.55
        ]
        (183.09,\yy)
        --
        (222,{82 + (\yy-125)*3.9});
    }

    \foreach \yy in {85,105,120,135,150,165,180,195}
    {
        \foreach \zz in {105,125,145,165}
        {
            \draw[
                color={rgb,255:red,90;green,170;blue,255},
                line width=0.45pt,
                opacity=0.35
            ]
            (305,\yy) -- (318,\zz);
        }
    }

    \foreach \yy in {85,105,120,135,150,165,180,195}
    {
        \foreach \zz in {100,120,140,160,180}
        {
            \draw[
                color={rgb,255:red,90;green,170;blue,255},
                line width=0.45pt,
                opacity=0.35
            ]
            (305,\yy) -- (324,\zz);
        }
    }
    
    \foreach \yy in {100,120,140,160,180}
    {
        \filldraw[
            fill=white,
            draw={rgb,255:red,90;green,170;blue,255},
            line width=0.7pt
        ]
        (324,\yy) circle (1.6pt);
    }
    
    \draw (333,140) node {$\cdots$};
    
    \foreach \yy in {100,120,140,160,180}
    {
        \filldraw[
            fill=white,
            draw={rgb,255:red,90;green,170;blue,255},
            line width=0.7pt
        ]
        (342,\yy) circle (1.6pt);
    }
    
    \foreach \yy in {100,120,140,160,180}
    {
        \draw[
            color={rgb,255:red,90;green,170;blue,255},
            line width=0.7pt,
            opacity=0.55
        ]
        (342,\yy) -- (354.5,140.33);
    }

    \draw (165,136.4)
    node [anchor=north west][inner sep=0.75pt]
    {$\boldsymbol{x}$};
    
    \draw (196,111.4)
    node [anchor=north west][inner sep=0.75pt]
    [color={rgb,255:red,255;green,2;blue,27}]
    {$W$};
    
    \draw (196,155.4)
    node [anchor=north west][inner sep=0.75pt]
    [color={rgb,255:red,255;green,2;blue,27}]
    {$W$};
    
    \draw (233.5,100.4)
    node [anchor=north west][inner sep=0.75pt]
    {$\sin(\textcolor{red}{W}\cdotp\boldsymbol{x})$};
    
    \draw (233.5,163.4)
    node [anchor=north west][inner sep=0.75pt]
    {$\cos(\textcolor{red}{W}\cdotp\boldsymbol{x})$};
    
    \draw (360,132.4)
    node [anchor=north west][inner sep=0.75pt]
    {$f(\boldsymbol{x},\Theta)$};
    
    \draw (164,104.4)
    node [anchor=north west][inner sep=0.75pt]
    {$\mathbb{R}^{d}$};
    
    \draw (229,59.4)
    node [anchor=north west][inner sep=0.75pt]
    {$\mathbb{R}^{2m};\ m\gg1$};

    \draw[->] (405,130) -- (415,93);
    \draw[->] (405,140) -- (417,140);
    \draw[->] (405,150) -- (415,198);
    
    \draw  [color={rgb,255:red,0;green,0;blue,0}] 
    (421,182) rectangle (518,218);
    
    \draw (432,200) node [anchor=west] {$\mathcal{D}_{\bsx} [f(\boldsymbol{x},\Theta)]$};
    \draw (468,174) node {PDE residual};

    \draw  [color={rgb,255:red,0;green,0;blue,0}] 
    (421,76) rectangle (518,112);
    
    \draw (432,94) node [anchor=west] {$\mathcal{B}_{\bsx}[f(\boldsymbol{x},\Theta)]$};
    \draw (469.5,70) node {Boundary residual};

    \draw  [color={rgb,255:red,0;green,0;blue,0}] 
    (421,129) rectangle (518,165);
    
    \draw (432,146) node [anchor=west] {$\mathcal{T}_{\bsx}[f(\boldsymbol{x},\Theta)]$};
    \draw (470,122) node {Initial residual};

    \draw[->] (520,95) -- (530,95);
    \draw[->] (520,145) -- (530,145);
    \draw[->] (520,195) -- (530,195);
    
    \draw  [color={rgb,255:red,0;green,0;blue,0}] 
    (532,88) rectangle (682,200);
    
    \draw (550,150) node [anchor=west] {$\mathcal{L}_{\text{pde}}=\|\mathcal{D}_{\bsx}[f]\|^2$};
    \draw (550,106) node [anchor=west] {$\mathcal{L}_{\text{bc}}=\|\mathcal{B}_{\bsx}[f]\|^2$};
    \draw (550,126) node [anchor=west] {$\mathcal{L}_{\text{ic}}=\|\mathcal{T}_{\bsx}[f]\|^2$};
    \draw (528,180) node [anchor=west] {$\mathcal{L}=\mathcal{L}_{\text{pde}} + \beta_{\text{bc}} \mathcal{L}_{\text{bc}} + \beta_{\text{ic}} \mathcal{L}_{\text{ic}}$};
\end{tikzpicture}
\caption{Pipeline of the proposed PDE-informed initialization. The governing PDE determines the principal symbol $p_{\mathrm{pde}}$, which is used to sample the Fourier frequencies $W$ through the proposed initialization density. The sampled frequencies define the Fourier Feature embedding of the PINN, whose parameters are trained by minimizing the PDE, boundary, and initial-condition residual losses.}
\label{fig:FFmodel}
\end{figure}

\section{Qualitative analysis of the residual frequency dynamics}
\label{sec:DesignPropose}

To gain insight into the role of the initialization distribution, we first consider the frozen Fourier Features setting, in which the first-layer weights $\mathbf W$ remain fixed at their initial values. Under this assumption, the diffusion term in equation \eqref{eq:FFResidualDynamicsFinal} vanishes, and the residual frequency dynamics reduces to
\begin{align}\label{eq:FreezeDynamics}
    \frac{d}{dt}\widehat{r}(\bxi,\Theta(t)) = & - \rho_{\bsw}(\bxi)\Big(p_{\mathrm{pde}}(\bxi)\widehat{r}(\bxi,\Theta)
    + \beta_{\mathrm{bc}}\mathcal{K}_{\mathrm{bc}}[\widehat{r}](\bxi)
    + \beta_{\mathrm{ic}}\mathcal{K}_{\mathrm{ic}}[\widehat{r}](\bxi)\Big)
    + \widehat{\mathcal R}_{\partial\Omega}
    [\widehat{K},\widehat{r}](\bxi).
\end{align}
Although simplified, this setting remains closely related to the full dynamics, as NTK theory predicts that, in the wide-network regime, the first-layer parameters remain close to their initialization throughout training \citep{yang2021tensor}. Similar behavior has also been observed empirically for Fourier Feature networks \citep{tancik2020fourier}.

Equation~\eqref{eq:FreezeDynamics} highlights the central role of the initialization density $\rho_{\bsw}$. In the frozen-feature regime, $\rho_{\bsw}(\bxi)$ acts as a frequency-dependent scaling of the dynamics: frequencies receiving little probability mass at initialization evolve slowly or, in the limiting case, not at all. Therefore, the initialization directly influences which frequency components can be efficiently represented and learned.

The PDE symbol $p_{\mathrm{pde}}$ introduces an additional frequency-dependent modulation through the interior residual term. Frequencies for which $p_{\mathrm{pde}}(\bxi)$ is small receive only a weak contribution from the PDE residual, whereas frequencies associated with larger values of $p_{\mathrm{pde}}(\bxi)$ are strongly constrained by the interior equation. Importantly, $p_{\mathrm{pde}}$ characterizes only the interior PDE contribution. When $p_{\mathrm{pde}}(\bxi)=0$, this contribution vanishes in \eqref{eq:FreezeDynamics}, so the evolution of the corresponding frequency components depends entirely on the remaining boundary, initial-condition, and boundary-integral terms. Thus, the interior differential operator identifies spectral directions that are weakly constrained by the PDE residual, while the auxiliary conditions provide the additional information required to determine the physically admissible solution \citep{evans2010partial}.

This interpretation can also be expressed through the quadratic response of the interior residual loss to Fourier perturbations. Let $\sigma_{\mathcal D}(\bxi)$ denote the Fourier symbol of the linear constant-coefficient operator $\mathcal D_{\bsx}$, $\mathcal L_{\mathrm{pde}}(v)=\frac{1}{2}\left|\mathcal D_{\bsx}v\right|_{L^2}^{2}$, and a Fourier perturbation $v(\bsx)=a e^{2\pi i\bxi\cdot\bsx}$. Since $\mathcal D_{\bsx}v=\sigma_{\mathcal D}(\bxi)a e^{2\pi i\bxi\cdot\bsx}$, its quadratic contribution to the loss is proportional to
\begin{equation}
    |\sigma_{\mathcal D}(\bxi)|^2 |a|^2 = p_{\mathrm{pde}}(\bxi)|a|^2.
\end{equation}
Equivalently, the second variation of the interior residual loss along this Fourier direction is weighted by $p_{\mathrm{pde}}(\bxi)$. Consequently, frequencies for which $p_{\mathrm{pde}}(\bxi)$ is small correspond to weakly penalized, low-curvature directions of the interior residual loss, while the set $\{p_{\mathrm{pde}}=0\}$ defines flat directions of the interior residual loss in this idealized Fourier description. This is consistent with equation \eqref{eq:FreezeDynamics}, where the same factor $p_{\mathrm{pde}}(\bxi)$ controls the strength of the interior residual dynamics.

Figure~\ref{fig:Psymbol} illustrates this behavior empirically. The largest residual errors tend to persist around regions where $p_{\mathrm{pde}}$ is small, i.e., precisely where the interior PDE residual provides the weakest frequency-dependent control. This observation should therefore not be interpreted as implying that the interior term dominates the dynamics in those regions; rather, the interior symbol determines their spectral geometry, while the boundary and initial-condition contributions remain essential for determining the physically admissible solution. Additional evidence is provided in Appendix~\ref{app:transport_symbol_mismatch}, where we show that the training dynamics progressively deteriorate when the polynomial used to construct the PDE-informed initialization is increasingly misaligned with the symbol of the true differential operator.

The PDE symbol also carries information about the spectral geometry of the solution. Consider a homogeneous linear constant-coefficient PDE on $\mathbb{R}^d$, $\mathcal{D}_{\bsx}u=0$. Let $\sigma_{\mathcal D}(\bxi)$ denote the Fourier symbol of $\mathcal D_{\bsx}$. Then $\sigma_{\mathcal D}(\bxi)\widehat u(\bxi)=0$ in the sense of distributions. Since $p_{\mathrm{pde}}(\bxi)=|\sigma_{\mathcal D}(\bxi)|^2$, we define
\begin{equation}
\label{eq:p=0}
    \mathcal C_{\mathrm{pde}}:=\{\bxi:p_{\mathrm{pde}}(\bxi)=0\}
    =\{\bxi:\sigma_{\mathcal D}(\bxi)=0\}.
\end{equation}
It follows that $\operatorname{supp}\widehat u \subseteq \mathcal C_{\mathrm{pde}}$. Thus, in this idealized whole-space homogeneous setting, $\mathcal C_{\mathrm{pde}}$ is relevant not only to the residual dynamics but also to the spectral support of the solution.

On a bounded domain, this exact support relation is broadened by spatial localization. Indeed, if $\chi$ denotes a window restricting the solution to the computational domain, then $\widehat{\chi u} = \widehat{\chi}*\widehat u$, so an ideal spectrum supported on $\mathcal C_{\mathrm{pde}}$ is spread into a neighborhood of this set. In the presence of a source, $\mathcal D_{\bsx}u=h$, the whole-space relation becomes
\begin{equation}
    \widehat u(\bxi)= \frac{\widehat h(\bxi)} {\sigma_{\mathcal D}(\bxi)}, \qquad \sigma_{\mathcal D}(\bxi)\neq0,
\end{equation}
showing that low-symbol regions may amplify spectral components of the source.

The transport equation provides a particularly transparent example. Consider $u_\tau+c u_x=0$ on $\mathbb{R}^2$. Its solutions have the form $u(\tau,x)=f(x-c\tau)$. Taking the Fourier transform in both variables gives, up to the normalization convention of the Fourier transform,
\begin{equation}
    \widehat{u}(\xi_\tau,\xi_x) = \widehat{f}(\xi_x) \delta(\xi_\tau+c\xi_x).
\end{equation}
Hence, the spectrum is supported on the characteristic line $\xi_\tau+c\xi_x=0$, which coincides with the zero set of $    p_{\mathrm{pde}}(\xi_\tau,\xi_x) =4\pi^2(\xi_\tau+c\xi_x)^2$. 
A standard isotropic Gaussian initialization does not exploit this characteristic geometry.

\begin{figure*}[t]
    \centering
    \includegraphics[width=0.95\textwidth]{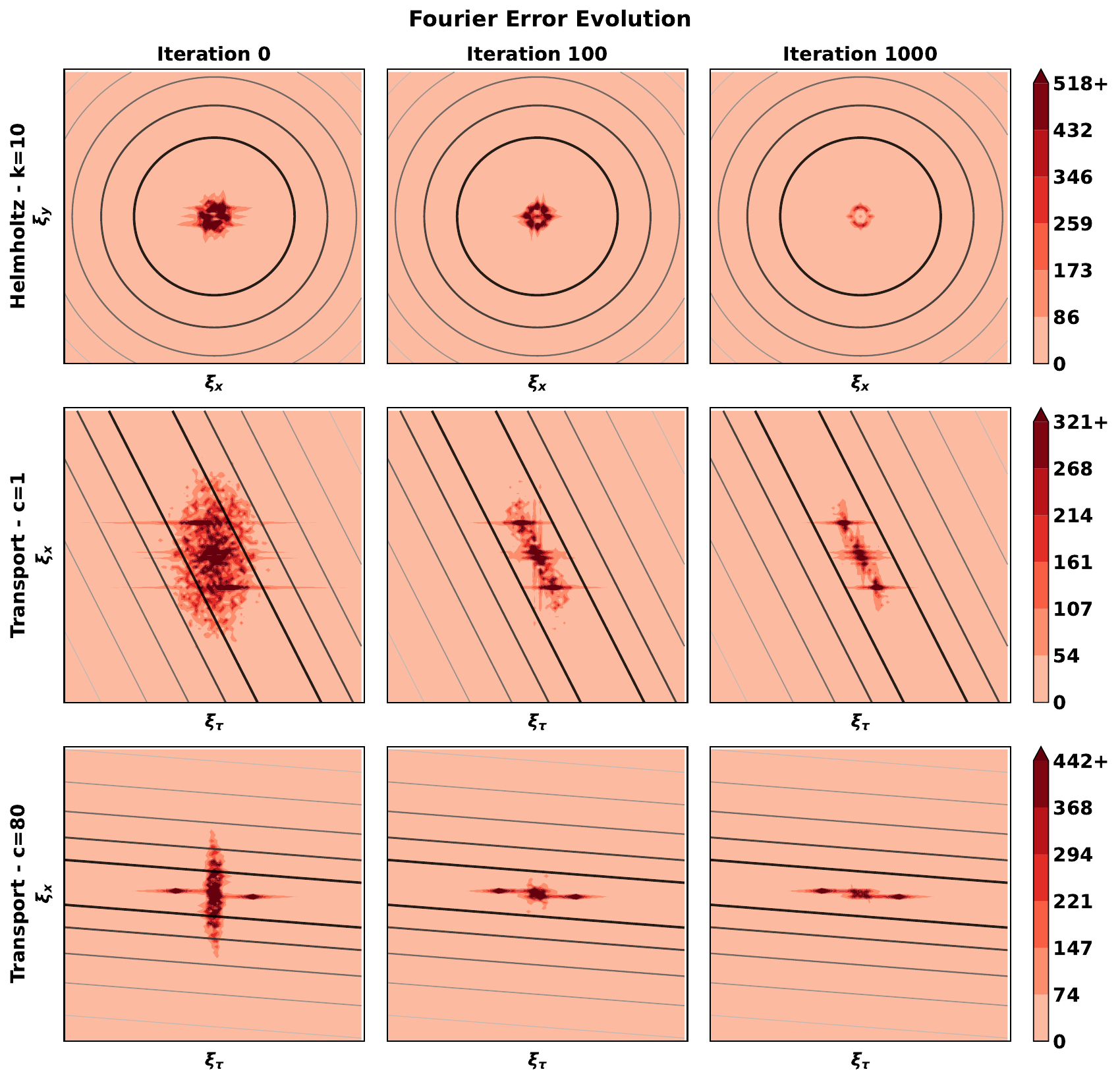}
    \caption{Frequency-domain (modulus) error dynamics for three PDEs solved with PINNs. From top to bottom: the 2D Helmholtz equation $u_{xx}+u_{yy}+k^2u=0$, with $k=10$, and the 1D transport equation $u_\tau+c u_x=0$ with $c=1$ and $c=80$. The corresponding polynomials are $p_{\mathrm{pde}}=(4\pi^2(\xi_x^2+\xi_y^2)-k^2)^2$ for the Helmholtz equation and $p_{\mathrm{pde}}=4\pi^2(\xi_\tau+c\xi_x)^2$ for the transport equation. The results show that the largest residual errors tend to persist in frequency regions where $p_{\mathrm{pde}}$ is small or vanishes, consistently with the weak interior control predicted by equation~\eqref{eq:FreezeDynamics}.}\label{fig:Psymbol}
\end{figure*}

\subsection{PDE-informed initialization to mitigate the frequency bias}

The analysis of the previous section shows that the low-level sets of $p_{\mathrm{pde}}$ play a distinguished role from two complementary perspectives. First, they identify frequency directions that are weakly constrained by the interior PDE residual and therefore exhibit slow interior dynamics. Second, for homogeneous linear constant-coefficient equations, the zero set of $p_{\mathrm{pde}}$ characterizes the regions in frequency space on which the solution may be supported. Motivated by these approximation and optimization considerations, we propose a family of PDE-informed initialization distributions that increases the probability of sampling Fourier frequencies in these regions. Specifically, we define
\begin{equation}\label{eq:rhoPI_general}
    \rho_{\bsw}^{\mathrm{PI}}(\bsw;\sigma_{\bsw},\lambda,k)
    = \frac{\lambda}{C_1} e^{-\sigma_{\bsw}p_{\mathrm{pde}}(\bsw)}     + \frac{1-\lambda}{C_2} e^{-k\,\sigma_{\bsw}|\bsw|^2},
\end{equation}
where $\sigma_{\bsw}>0$ controls the concentration of the distribution and $\lambda\in[0,1]$ determines the relative weight of the PDE-informed component. The normalization constants
\begin{equation}
    C_1 = \int_{\mathbb{R}^{d}}e^{-\sigma_{\bsw}p_{\mathrm{pde}}(\bsw)},d\bsw,
    \qquad C_2 = \int_{\mathbb{R}^{d}}e^{-k\,\sigma_{\bsw}|\bsw|^2},d\bsw,
\end{equation}
ensure that both components define valid probability densities.

The proposed initialization pipeline is summarized in Figure~\ref{fig:FFmodel}. Starting from the governing PDE, we identify its principal linear operator, namely, the linear differential operator containing the highest-order derivative terms of the equation \citep{evans2010partial}. For linear PDEs, this corresponds to the differential operator itself, whereas for the semilinear problems considered here it corresponds to the linear highest-order part of the PDE. We then compute the associated symbol $p_{\mathrm{pde}}$ and use it to construct the initialization density from which the Fourier frequencies $\mathbf W$ are sampled. These frequencies define the Fourier Feature embedding of the PINN, which is subsequently trained using the standard loss containing the PDE, boundary, and initial-condition residuals.

The PDE-informed component in equation \eqref{eq:rhoPI_general} provides a smooth concentration around the low-level sets of $p_{\mathrm{pde}}$. In this way, it increases the representational density precisely in directions for which the interior PDE residual provides weak control and which, for linear homogeneous problems, are also associated with the characteristic spectral geometry of the solution. The Gaussian component preserves broader isotropic coverage of frequency space. The latter is particularly useful for nonlinear problems, for which nonlinear interactions may generate spectral components not directly characterized by the symbol of the principal linear operator.

For linear PDEs, the symbol $p_{\mathrm{pde}}$ completely characterizes the frequency action of the differential operator. In this setting, we employ the pure PDE-informed initialization
\begin{equation}\label{eq:rhoPI_linear}
    \rho_{\bsw}^{\mathrm{PI}}(\bsw;\sigma_{\bsw},1)=\frac{e^{-\sigma_{\bsw}p_{\mathrm{pde}}(\bsw)}}{C_1},
\end{equation}
corresponding to $\lambda=1$ in
equation~\eqref{eq:rhoPI_general}.

For nonlinear PDEs, however, the symbol associated with the principal linear operator provides only a partial description of the frequency dynamics. Nonlinear interactions may generate additional frequencies that are not captured by $p_{\mathrm{pde}}$. In long training regimes, we observed that a pure PDE-informed initialization may therefore become overly specialized. To retain broader spectral coverage, we use equation~\eqref{eq:rhoPI_general} with $\lambda=0.5$, balancing the PDE-informed and Gaussian components.

If our sole objective were to compensate directly for the multiplicative factor $p_{\mathrm{pde}}$ appearing in the interior residual dynamics, a natural formal alternative would be an inverse-symbol weighting. For instance, one could consider $\rho_{\bsw}^{\mathrm{PI}}(\bsw;\sigma_{\bsw},\epsilon) = 1/(\sigma_{\bsw}p_{\mathrm{pde}}(\bsw)+\epsilon)$ for $|\bsw|<1$, and $\rho_{\bsw}^{\mathrm{PI}}=1$. Such a choice is more directly related to compensating the factor $p_{\mathrm{pde}}$ in the interior convergence rate. However, it introduces significant practical difficulties. In particular, the density becomes highly sensitive to $\epsilon$ near the zeros of $p_{\mathrm{pde}}$ and may become excessively concentrated in these regions, reducing the diversity of sampled frequencies. In our experiments, this inverse formulation consistently produced worse performance than the proposed exponential construction.

\begin{rmk}
    Throughout this work $\sigma_{\bsw}$ is used as the name for two independent hyperparameters. When considering a Gaussian initialization, $\sigma_{\bsw}$ denotes the standard deviation of the distribution $\mathcal{N}(0,\sigma_{\bsw}^{2})$, whereas in equation \eqref{eq:rhoPI_general}, $\sigma_{\bsw}$ is a concentration parameter rather than the standard deviation of $\rho_{\bsw}^{\mathrm{PI}}$. In both cases $\sigma_{\bsw}$ is the primary hyperparameter that controls the initialization distribution and has a decisive influence on the optimization dynamics.
\end{rmk}

\begin{figure}[t]
    \centering
    \includegraphics[width=\linewidth]{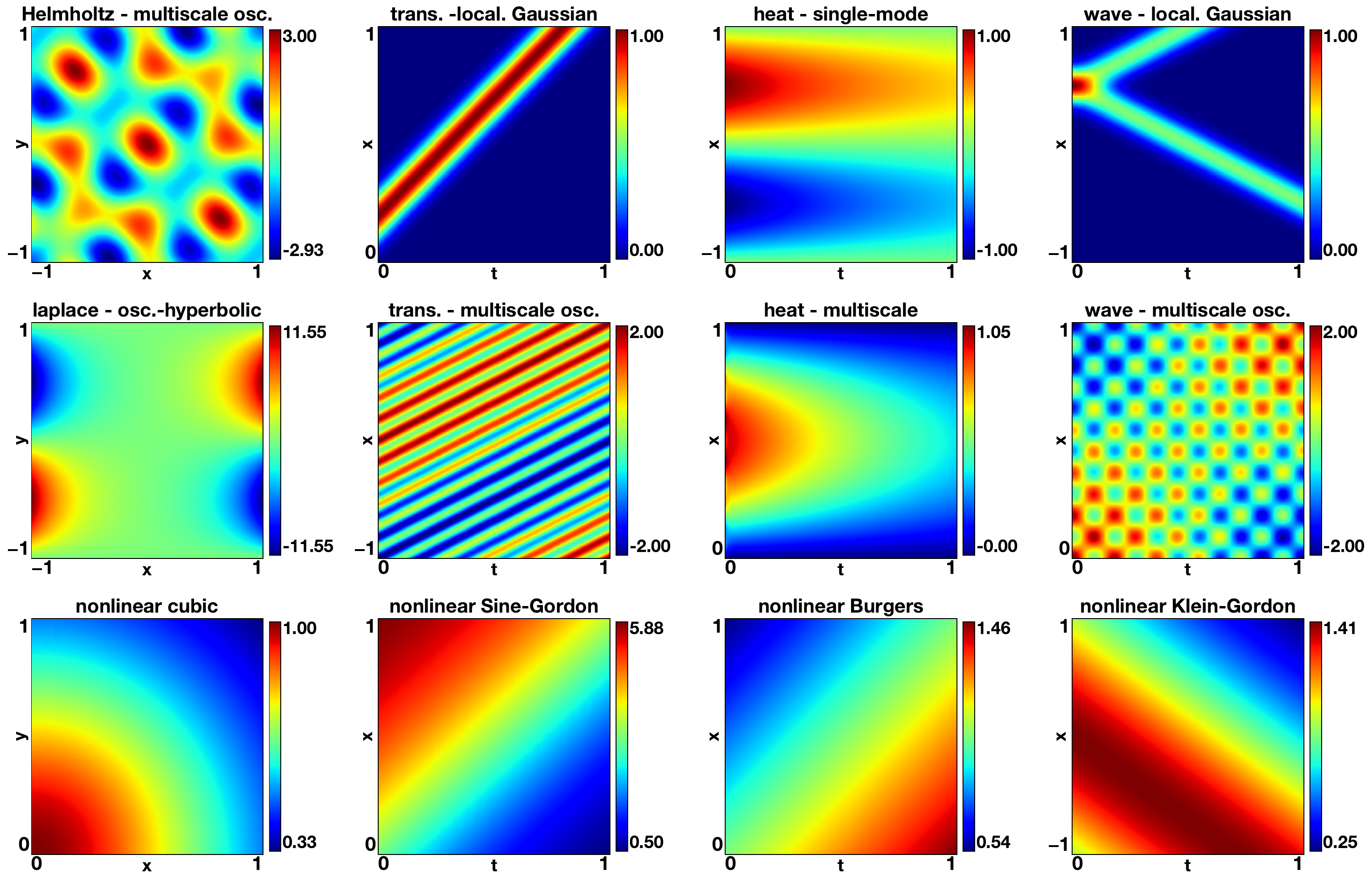}
    \caption{Visualization of the exact solutions listed in Table~\ref{tab:examples}. The examples cover a diverse range of spatial structures and spectral characteristics representative of the benchmark PDEs considered in this work.}
    \label{fig:GTs}
\end{figure}

\begin{table}[h]
    \centering
    \caption{Benchmark PDEs used for validation. The exact solutions are selected to exhibit diverse spectral characteristics representative of each PDE. The computational domains are $\Omega_1=[-1,1]^2$, $\Omega_2=[0,1]^2$, and $\Omega_3=[0,1]\times[-1,1]$. For simplicity, multiplicative constants in the polynomial $p_{\mathrm{pde}}$ are omitted.}\label{tab:examples}
\renewcommand{\arraystretch}{0.9}
\begin{tabularx}{\textwidth}{
>{\centering\arraybackslash}m{2.6cm}
>{\centering\arraybackslash}m{2.3cm}
>{\raggedright\arraybackslash}X
}
\toprule
PDE & $p_{\mathrm{pde}}$ & Exact solutions \\
\midrule

\makecell[c]{
\textbf{Transport} $(c=1)$ \\
$u_\tau+c\,u_x=0$}
& $(\xi_\tau+c\xi_x)^2$
&
\begin{tabular}{@{}l@{}}
\textbf{i) Multiscale oscillatory:}
{\small $u(\tau,x)=\cos(11\pi(x-c\tau))+\cos(\pi(x-c\tau))$ in $\Omega_3$}\\[0.4em]
\textbf{ii) Localized Gaussian:}
{\small $u(\tau,x)=e^{-100(x-c\tau-0.2)^2}$ in $\Omega_2$}
\end{tabular}
\\

\midrule

\makecell[c]{
\textbf{Heat} $(\alpha=0.1)$ \\
$u_\tau-\alpha u_{xx}=0$}
& $\xi_\tau^2+4\pi^2\alpha^2\xi_x^4$
&
\begin{tabular}{@{}l@{}}
\textbf{i) Single-mode:}
{\small $u(\tau,x)=e^{-\pi^2\alpha\tau}\sin(\pi x)$ in $\Omega_3$}\\[0.4em]
\textbf{ii) Multiscale:}
{\small $u(\tau,x)=e^{-\pi^2\alpha\tau}
\sin(\pi x)+\frac{1}{11}e^{-121\pi^2\alpha\tau}\sin(11\pi x)$ in $\Omega_2$}
\end{tabular}
\\

\midrule

\makecell[c]{
\textbf{Wave} $(c=1)$ \\
$u_{\tau\tau}-c^2u_{xx}=0$}
& $(\xi_\tau^2-c^2\xi_x^2)^2$
&
\begin{tabular}{@{}l@{}}
\textbf{i) Multiscale oscillatory:}
{\small $u(\tau,x)=\cos(\pi\tau)\cos(\pi x)
+\cos(11\pi\tau)\cos(11\pi x)$ in $\Omega_2$}\\[0.4em]
\textbf{ii) Traveling Gaussian:}
{\small $u(\tau,x)=\frac12\left(
e^{-50(x-c\tau-\frac12)^2}
+e^{-50(x+c\tau-\frac12)^2}
\right)$ in $\Omega_3$}
\end{tabular}
\\

\midrule

\makecell[c]{
\textbf{Laplace} \\
$-\Delta u = 0$}
& $(\xi_x^2+\xi_y^2)^2$
&
\textbf{i) Oscillatory-hyperbolic:}
{\small $u(x,y)=\sin(\pi x)\sinh(\pi y)$ in $\Omega_1$}
\\

\midrule

\makecell[c]{
\textbf{Helmholtz} ($k=10$) \\
$\Delta u + k^2u= 0$}
& $(4\pi^2(\xi_x^2+\xi_y^2)-k^2)^2$
&
\textbf{i) Multiscale oscillatory:}
{\small $u(x,y)=\cos(kx)+\cos(ky)+\cos\left(\frac{\sqrt{2}k}{2}(x+y)\right)$ in $\Omega_1$}
\\

\midrule

\makecell[c]{
\textbf{Nonlinear Laplace} \\
$\Delta u+u^3=0$}
& $(\xi_x^2+\xi_y^2)^2$
&
{\small $u(x,y)=\dfrac{1}{1+x^2+y^2}$ in $\Omega_2$}
\\

\midrule

\makecell[c]{
\textbf{Burgers} $(\nu=1)$ \\
$u_\tau+uu_x-\nu u_{xx}=0$}
& $\xi_\tau^2+4\pi^2\nu^2\xi_x^4$
&
{\small $u(\tau,x)=1-\tanh\!\left(\dfrac{x-\tau}{2\nu}\right)$ in $\Omega_2$}
\\

\midrule

\makecell[c]{
\textbf{Sine--Gordon} \\
$u_{\tau\tau}-u_{xx}+\sin(u)=0$}
& $(\xi_\tau^2-\xi_x^2)^2$
&
$u(\tau,x)=
4\arctan\!\left(
\exp\!\left(\frac{-(x-0.9\tau)}{\sqrt{0.19}}\right)
\right)$ in $\Omega_2$
\\

\midrule

\makecell[c]{
\textbf{Klein--Gordon} \\
$u_{\tau\tau}-u_{xx}+u-u^3=0$}
& $(\xi_\tau^2-\xi_x^2)^2$
&
{\small
$u(\tau,x)=
\dfrac{\sqrt{2}}
{\cosh\!\left(x+\sqrt{2}\tau\right)}
$ in $\Omega_3$}
\\

\bottomrule
\end{tabularx}
\end{table}

\section{Numerical Experiments}

\subsection{Training remarks}

The initial network parameters are sampled from the probability density functions $\rho_{a} \sim \mathcal{N}(0,\sigma_a^2)$, for the parameter $a$, and for the Fourier frequencies $W$ they are either sampled from the standard Gaussian initialization (GI) $\rho^{\mathrm{GI}}_{\bsw} \sim \mathcal{N}(0,\sigma_{\bsw}^2)$ or from the proposed PDE-informed initialization $\rho^{\mathrm{PI}}_{\bsw}$ defined in equation \eqref{eq:rhoPI_general}. In the case of $\rho^{\mathrm{PI}}_{\bsw}$, the frequencies are sampled numerically using rejection sampling over a bounded frequency domain (see Remark 3).

The two-layer neural networks are trained with a width $m=2000$ and without freezing the first layer. The collocation points are chosen as a uniform discretization of $100\times100$ points over the corresponding domain $\Omega_i$ ($i=1,2,3$). The weights $\beta_{\mathrm{bc}}$ and $\beta_{\mathrm{ic}}$ in the loss function are chosen to balance the gradients at initialization, and are kept fixed throughout the training. More precisely, their value is chosen such that
\begin{equation}
    \|\nabla_{\Theta} \mathcal{L}_{\mathrm{pde}}(\Theta(0))\|
    =\|\beta_{\mathrm{bc}}\nabla_{\Theta} \mathcal{L}_{\mathrm{bc}}(\Theta(0))\|
    =\|\beta_{\mathrm{ic}}\nabla_{\Theta} \mathcal{L}_{\mathrm{ic}}(\Theta(0))\|.
\end{equation}

Unless otherwise stated, training is performed without batching. Moreover, all figures reporting training dynamics are obtained from $10$ different random seeds. The solid line represents the median performance, while the shaded region indicates the range between the best and worst runs. All models are implemented and trained using JAX \citep{jax2018github}.

\begin{rmk}
     A practical consideration is the choice of the bounded frequency domain used for rejection sampling. Throughout this work, we use the Nyquist frequency associated with the computational grid, as it represents the largest frequency that can be learned without aliasing. However, this bound is determined solely by the spatial discretization and does not necessarily reflect the actual spectral content of the PDE solution. For example, the multiscale oscillatory solution of the wave equation contains frequencies only up to 5.5, whereas the corresponding Nyquist frequency is approximately 50. Consequently, the rejection sampling procedure allocates probability over a frequency range that is substantially larger than necessary. Restricting the sampling domain to a frequency range better matched to the solution could improve the efficiency of the proposed initialization by concentrating Fourier features in the relevant spectral region. Nevertheless, the Nyquist frequency provides a simple, problem-independent choice that proved sufficiently adequate for all the experiments considered in this work.
\end{rmk}

\begin{table*}[tb]
    \centering
    \small
    \caption{Learning rates used in the experiments and Optuna search intervals for the proposed PDE-informed initialization. The examples correspond to Figure~\ref{fig:SGD2layer}.}
    \label{tab:optuna_hyperparameters_sgd}
\begin{tabular}{|c|c|c|c|c|c|c|}
\hline
example &
$\eta_{\mathrm{GI}}$ &
$\sigma_{\bsw}^{\mathrm{GI}}$ &
GI range &
$\eta_{\mathrm{PI}}$ &
$\sigma_{\bsw}^{\mathrm{PI}}$ &
PI range \\
\hline

Helmholtz Multiscale Osc.
& $5\cdot10^{-7}$
& $1.42$
& $[0.3,3]$ 
& $5\cdot10^{-4}$
& $0.028$
& $[0.005, 0.5]$\\
\hline

Laplace Osc.--Hyperbolic
& $8\cdot10^{-7}$
& $1.4$
& $[0.5,5]$ 
& $2\cdot10^{-6}$
& $7.3\cdot10^{-5}$
& $[5\cdot10^{-5},5\cdot10^{-4}]$\\
\hline

Trans. Local. Gaussian
& $10^{-5}$
& $3.0$
& $[0.1,5]$ 
& $2\cdot10^{-3}$
& $0.62$
& $[5\cdot10^{-2},5]$\\
\hline

Trans. Multiscale Osc.
& $10^{-6}$
& $2.1$
& $[1,3]$ 
& $8\cdot10^{-5}$
& $2.6$
& $[0.1,10]$\\
\hline

Heat Single-Mode
& $5\cdot10^{-5}$
& $0.34$
& $[0.1,5]$ 
& $5\cdot10^{-5}$
& $7.7\cdot10^{-3}$
& $[10^{-4},10^{-2}]$\\
\hline

Heat Multiscale
& $6\cdot10^{-5}$
& $0.25$
& $[0.1,1]$ 
& $6\cdot10^{-4}$
& $0.78$
& $[0.1,1]$\\
\hline

Wave Local. Gaussian
& $5\cdot10^{-8}$
& $3.9$
& $[0.6,10]$ 
& $2\cdot10^{-5}$
& $6.7\cdot10^{-3}$
& $[10^{-5},10^{-2}]$\\
\hline

Wave Multiscale Osc.
& $7\cdot10^{-9}$
& $1.7$
& $[2,4]$ 
& $8\cdot10^{-6}$
& $7.6\cdot10^{-3}$
& $[10^{-5},10^{-2}]$\\
\hline

Nonlinear Cubic
& $5\cdot10^{-7}$
& $0.64$
& $[0.5,5]$ 
& $5\cdot10^{-6}$
& $3.5\cdot10^{-3}$
& $[5\cdot10^{-4},5\cdot10^{-3}]$\\
\hline

Nonlinear Burgers
& $8\cdot10^{-5}$
& $0.11$
& $[0.05,0.5]$ 
& $3\cdot10^{-5}$
& $7.47$
& $[5,50]$ \\
\hline

Nonlinear Sine--Gordon
& $8\cdot10^{-7}$
& $0.3$
& $[0.1,1]$ 
& $3\cdot10^{-5}$
& $0.037$
& $[5\cdot10^{-3},5\cdot10^{-2}]$ \\
\hline

Nonlinear Klein--Gordon
& $2\cdot10^{-6}$
& $0.26$
& $[0.05,5]$ 
& $2\cdot10^{-5}$
& $4.9\cdot10^{-3}$
& $[8\cdot10^{-5},0.008]$\\
\hline

\end{tabular}
\end{table*}

\subsubsection{Dependence of the learning rate on the initialization weights.}

In general, the stability of the gradient descent optimization used in the training of a neural network depends on the appropriate choice of the learning rate, which needs to be compatible with the loss function and the magnitude of its gradients.
This implies a key observation for Fourier Feature PINNs since the structure of the loss function will depend on the initialization of the network parameters: the appropriate learning rate for the PINNs training 
will depend on the distribution used to sample the frequencies $\bsw_k$ (the magnitude of the parameters $a_k$ do not produce changes in the loss function beyond a scaling factor).

To illustrate this phenomenon, consider the one-dimensional Laplace equation $-u_{xx}(x)=h(x),$ $x\in\Omega,$ where $h$ is a known source term. Suppose that the solution is approximated using a Fourier Feature network (equation \eqref{eq:FFnetwork}) where the frequencies are initialized according to $w_k \sim \mathcal N(0,\sigma_{w}^{2}).$ Taking derivatives with respect to $x$ yields
\begin{equation}
    f_{xx}(x, \Theta)=-\sum_{k=1}^m \left(a_k w_k^2 \sin(w_k x) + b_k w_k^2 \cos(w_k x)\right),
\end{equation}
thus, each Fourier mode is amplified by a factor $w_k^2$ under the action of the Laplace operator. Consequently, increasing the initialization variance $\sigma_{w}^{2}$ increases the probability of sampling high frequencies, which in turn produces larger values in the PDE residual. Using a standard squared residual loss (equation \eqref{eq:risk}), the residual scales proportionally to $w_k^2$, while the curvature of the loss scales approximately as $\lambda_{\max}\propto\max_k w_k^4,$ where $\lambda_{\max}$ denotes the dominant eigenvalue of the loss Hessian.
Stability of the gradient descent scheme requires sufficiently small learning rates whenever the Hessian eigenvalues become large, which does not scale proportionally to the gradients of the loss function. This example shows that the learning rate and the Fourier Feature initialization distribution should not be selected independently and it is not just a proportionality or power relation.

Such coupling becomes particularly relevant in the experiments presented in this paper, where we compare the standard initialization $\rho^{\mathrm{GI}}_{\bsw}$ with the proposed PDE-informed initialization $\rho^{\mathrm{PI}}_{\bsw}$. Since the effective frequency distributions differ significantly between the two initializations, the optimal learning rate will differ in each case.

\subsection{Training dynamics and solution accuracy}

A small empirical PINN loss does not necessarily imply an accurate approximation of the underlying PDE solution. The loss measures violations of the PDE and auxiliary constraints at the selected collocation points, whereas the relative $L^2$ error measures the discrepancy with the reference solution over the computational domain. This mismatch can lead to inaccurate or spurious solutions despite apparently successful optimization \citep{wang2026pinns}.

The geometry of the region in frequency space identified in Section~\ref{sec:DesignPropose} also helps explain why favorable training dynamics do not necessarily imply accurate predictions. In particular, modes near the zero-symbol set $\mathcal{C}_{\mathrm{pde}}$ (see equation \ref{eq:p=0}) are weakly constrained by the interior residual, while modes on $\mathcal{C}_{\mathrm{pde}}$ receive no direct interior contribution. Their amplitudes must therefore be selected by the boundary and initial-condition terms together with the empirical collocation constraints.

The initialization scale $\sigma_{\bsw}$ influences which portions of these weakly constrained spectral regions are effectively represented by the network. For the transport equation, where $\mathcal{C}_{\mathrm{pde}}$ is the characteristic line $\xi_\tau+c\xi_x=0$, changing $\sigma_{\bsw}$ modifies the spectral coverage around and along this direction. This may expose the network to high-frequency modes that incur little interior PDE penalty. Such modes can enrich the approximation space but may also yield low-loss solutions with poor predictive accuracy if they are not sufficiently constrained by the remaining terms.

This behavior is observed empirically in the multiscale transport problem: Gaussian initialization with $\sigma_{\bsw}\in[8,12]$ frequently yields solutions with small training loss but large prediction error, whereas $\sigma_{\bsw}\in[2,6]$ produces substantially more reliable approximations. A similar sensitivity is also observed for the PDE-informed initialization (Figure~\ref{fig:Failure}).

This observation also motivates the search ranges used for $\sigma_{\bsw}$ in our hyperparameter optimization. The bounds are selected empirically to avoid regimes that repeatedly exhibit low-training-loss but inaccurate solutions, rather than being arbitrary numerical choices.

\begin{figure}[htb]
    \centering
    \includegraphics[width=0.70\textwidth]{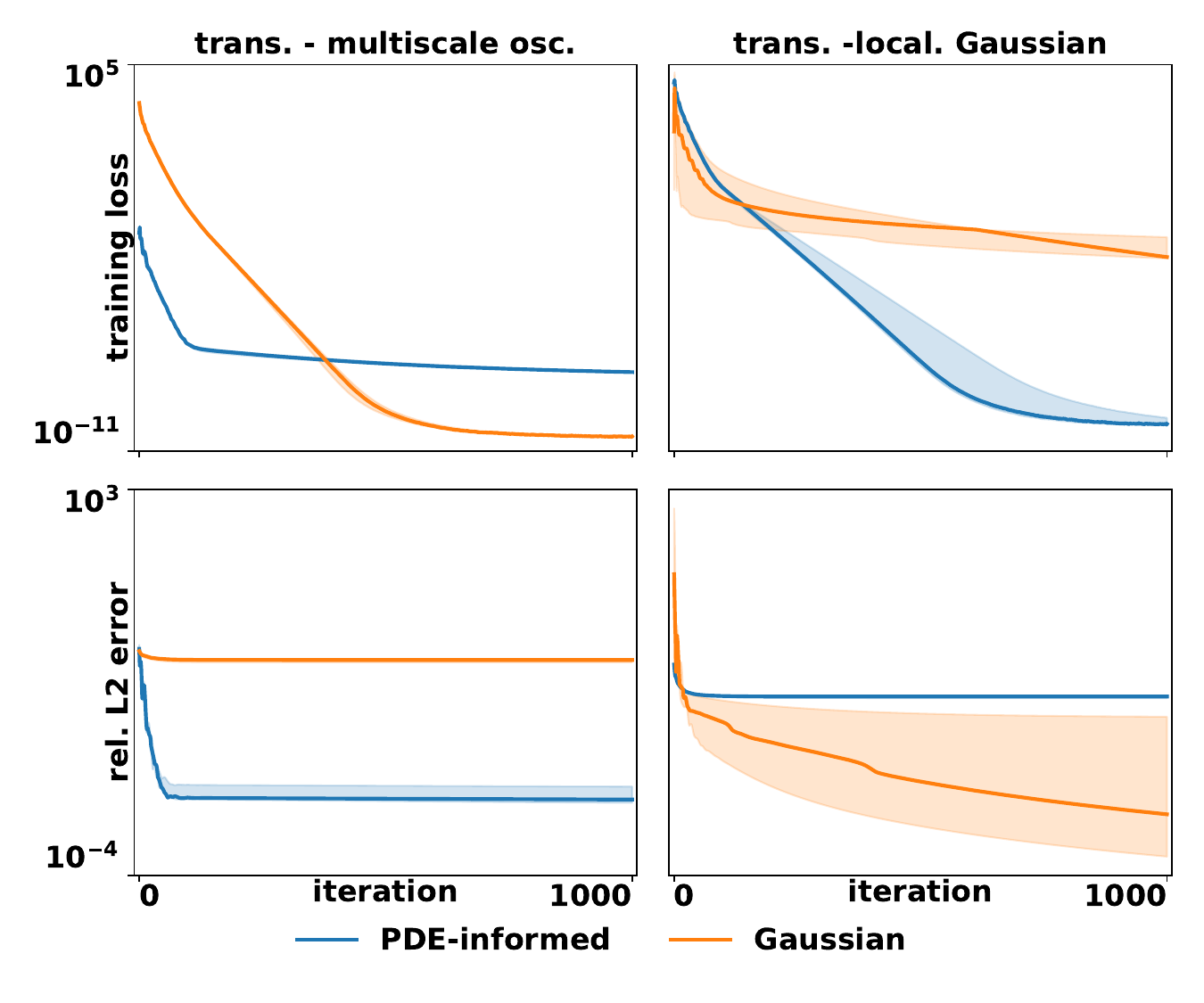}
    \caption{Examples illustrating the decoupling between empirical training loss and prediction accuracy. The first row reports the training loss, while the second row shows the relative $L^2$ prediction error. In the first column, the Gaussian initialization achieves lower training loss but a larger prediction error. In the second column, the PDE-informed initialization achieves lower training loss, whereas the Gaussian initialization yields the more accurate solution.}
    \label{fig:Failure}
\end{figure}

\subsection{PDE-Informed vs.\ Standard Initialization}

In this section, we compare the proposed PDE-informed initialization with the standard Gaussian initialization through a qualitative analysis of the training dynamics. In particular, we investigate whether the proposed initialization leads to: (i) lower final training loss, (ii) lower prediction error, (iii) more stable dynamics across different random seeds, and (iv) faster convergence toward a stable minimum. We also examine the evolution of different spectral regions during training to assess whether the proposed initialization reduces frequency-dependent disparities in the learning dynamics.

Our experiments provide empirical evidence supporting the following observations:
\begin{itemize}
    \item The theoretical framework developed in the previous sections provides meaningful insight into the optimization dynamics of PINNs.
    \item The proposed PDE-informed initialization mitigates frequency bias, often resulting in faster convergence, improved predictive accuracy, and more stable optimization dynamics.
    \item The proposed initialization generally exhibits more stable behavior across random seeds than the standard Gaussian initialization.
    \item The observed improvements extend beyond the theoretical setting to semilinear PDEs, multilayer neural networks, and ADAM optimization. Moreover, the proposed initialization can improve the performance of established PINN frameworks such as JAX-PI without introducing additional computational cost.
\end{itemize}

We begin by comparing both initialization strategies within the theoretical setting described in the previous sections. To remain consistent with the assumptions underlying the residual dynamics analysis, these experiments are performed using gradient descent (GD), two-layer neural networks with a large hidden width ($m=2000$), and linear PDEs. We consider the Laplace, Helmholtz, transport, heat, and wave equations, with multiple exact solutions used across the experiments.

For both initialization strategies, the initialization scale $\sigma_{\bsw}$ is selected using \texttt{Optuna} \citep{akiba2019optuna}. For each experiment, we first identify a search interval that consistently produces physically meaningful solutions. Within this interval, \texttt{Optuna} performs 100 trials for each random seed to estimate the optimal value of $\sigma_{\bsw}$. The resulting search ranges and selected hyperparameters are reported in Table~\ref{tab:optuna_hyperparameters_sgd}.

The analysis \ref{sec:DesignPropose} suggests that the effectiveness of the proposed initialization can be interpreted through the dimension and geometry of the zero set $\mathcal{C}_{\mathrm{pde}}$, as well as the anisotropy with which $p_{\mathrm{pde}}$ grows away from this set. These properties determine the shape of the PDE-informed distribution and, consequently, the extent to which it differs from an isotropic Gaussian initialization.

The considered linear equations exhibit substantially different geometries of $\mathcal{C}_{\mathrm{pde}}$. For the Laplace equation, $\mathcal{C}_{\mathrm{pde}}$ consists only of the origin and the level sets of $p_{\mathrm{pde}}$ are isotropic. The heat equation also has an isolated zero at the origin, but its low-level sets are anisotropic due to the different scaling of the temporal and spatial frequencies. For the transport equation, $\mathcal{C}_{\mathrm{pde}}$ is a characteristic line, while for the wave equation it consists of characteristic branches. Finally, for the Helmholtz equation, the set $\mathcal{C}_{\mathrm{pde}}$ forms a circle in frequency space.

Figure~\ref{fig:SGD2layer} presents the gradient descent training dynamics under both initialization strategies. For the Laplace equation, the two initializations exhibit comparable dynamics. Moderate but consistent improvements are observed for the heat equation, particularly in prediction accuracy. More pronounced gains are obtained for the Helmholtz, transport, and wave equations, where the PDE-informed initialization achieves faster convergence and lower prediction errors.

\begin{figure}[p]
    \centering
    \includegraphics[width=\textwidth]{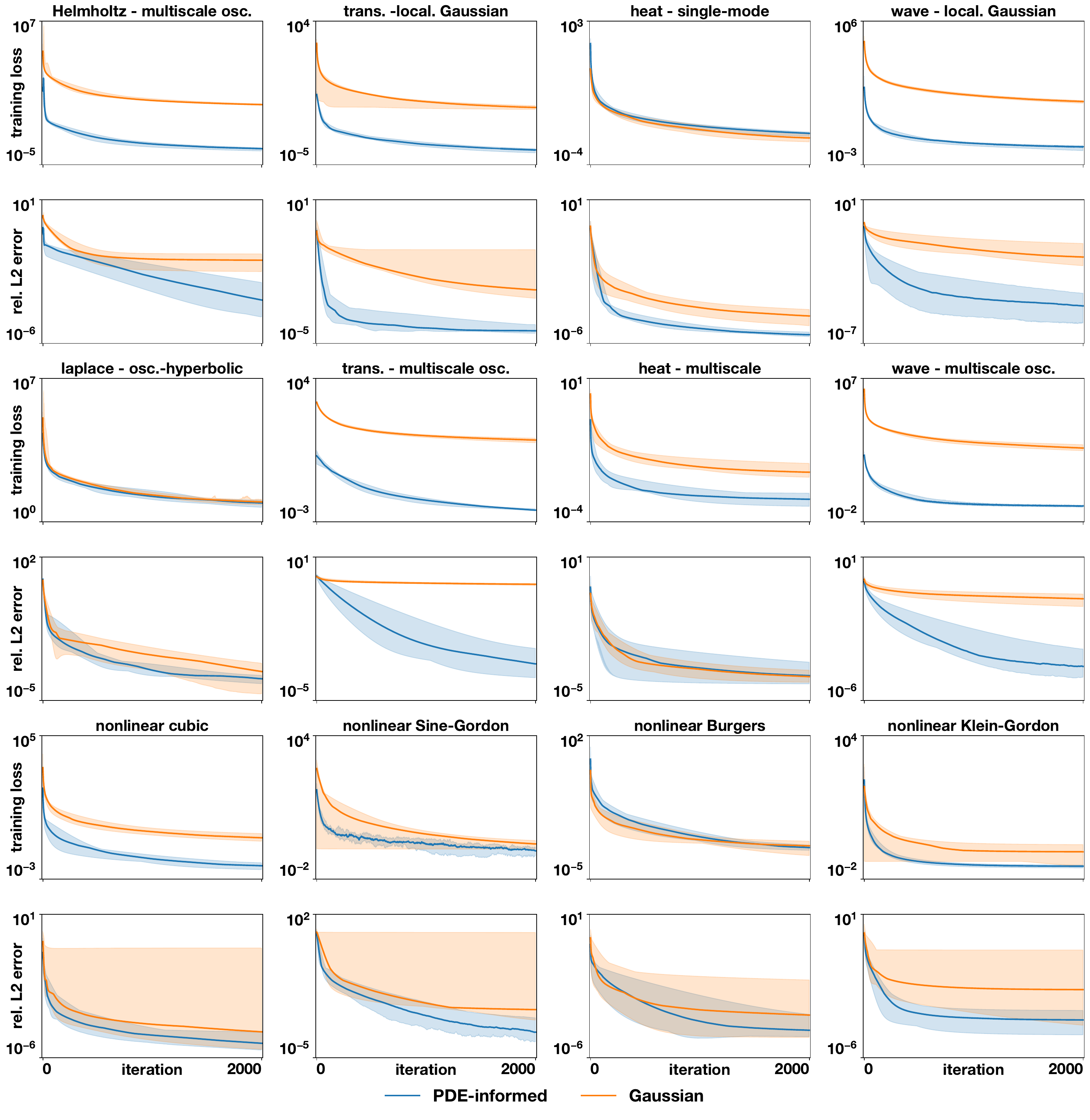}
    \caption{GD training dynamics for linear and nonlinear PDEs under the standard Gaussian and the proposed PDE-informed initialization. The odd rows report the training loss, while the even rows show the relative $L^2$ prediction error. The first two pairs of rows correspond to linear PDEs. The first pair includes the Helmholtz, transport, heat, and wave equations, while the second pair includes the Laplace, transport, heat, and wave equations. For the transport, heat, and wave equations, the two pairs use different exact solutions, thereby assessing the initialization across distinct spectral contents for the same governing PDE. The last pair of rows presents four nonlinear examples: a manufactured nonlinear Laplace equation, the Burgers equation, the Sine--Gordon equation, and the nonlinear Klein--Gordon equation. For the Laplace equation, both initialization strategies exhibit similar dynamics. In contrast, the proposed PDE-informed initialization yields moderate improvements for the heat equation and more pronounced gains for the Helmholtz, transport, and wave equations. For the nonlinear problems, the proposed initialization generally improves prediction accuracy and stability across random seeds.}\label{fig:SGD2layer}
\end{figure}

\begin{figure}[tb]
    \centering
    \includegraphics[width=\textwidth]{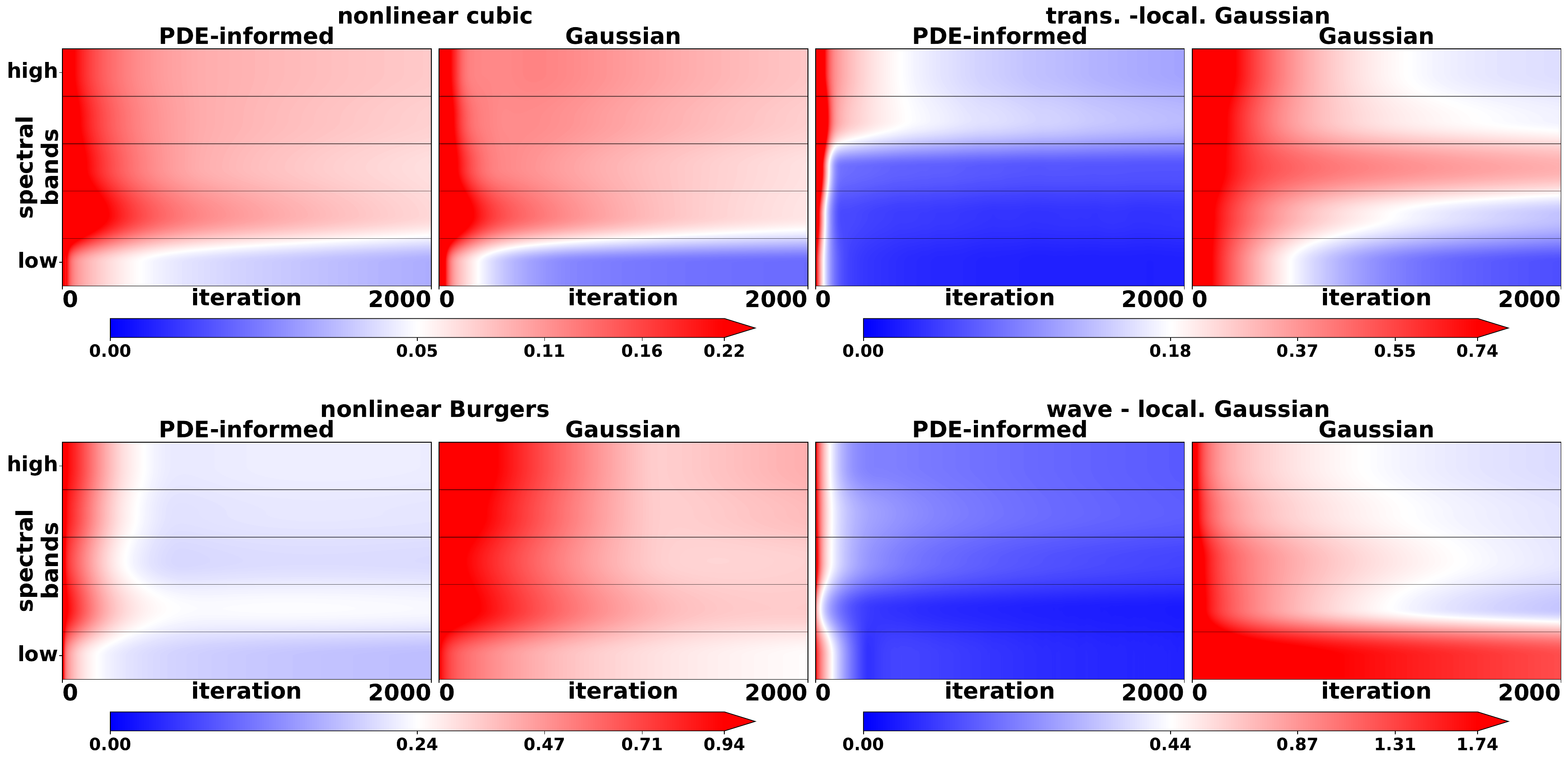}
    \caption{Evolution of the relative ring-wise spectral error during training for four representative PDEs under the proposed PDE-informed and standard Gaussian initializations. At each iteration, the prediction error is transformed into the Fourier domain, and its relative energy is computed over five concentric radial-frequency bands with logarithmically spaced radii, excluding the zero-frequency mode. The vertical axis orders these bands by increasing radial frequency, from the innermost low-frequency band to the outermost high-frequency band, while the horizontal axis denotes the training iteration. Thus, the vertical coordinate represents an ordered frequency band rather than the actual frequency magnitude or ring radius. For each PDE, the PDE-informed and Gaussian heatmaps share the same color scale, allowing a direct comparison of their spectral learning dynamics. The PDE-informed initialization generally produces a more uniform reduction of the spectral error across frequency bands, with larger differences for problems whose low-symbol regions exhibit stronger anisotropy or nontrivial characteristic geometries.}
    \label{fig:FBmitigation}
\end{figure}

\begin{table*}[htb]
\centering
\small
\caption{Learning rates, selected initialization scales, and Optuna search intervals used in the experiments. The examples correspond to Figure~\ref{fig:ADAM2layer}.}
\label{tab:optuna_hyperparameters_adam}
\begin{tabular}{|c|c|c|c|c|c|c|}
\hline
example &
$\eta_{\mathrm{GI}}$ &
$\sigma_{\bsw}^{\mathrm{GI}}$ &
GI range &
$\eta_{\mathrm{PI}}$ &
$\sigma_{\bsw}^{\mathrm{PI}}$ &
PI range \\
\hline

Helmholtz Multiscale Osc.
& $5\cdot10^{-3}$
& $0.97$
& $[0.3,3]$ 
& $1\cdot10^{-3}$
& $0.71$
& $[0.01,1]$\\
\hline

Trans. Local. Gaussian
& $5\cdot10^{-4}$
& $2.1$
& $[0.1,10]$
& $5\cdot10^{-4}$
& $0.096$
& $[10^{-4},1]$ \\
\hline

Heat Single-Mode
& $10^{-4}$
& $0.27$
& $[0.1,10]$
& $10^{-4}$
& $0.086$
& $[10^{-5},0.1]$ \\
\hline

Wave Multiscale Osc.
& $10^{-3}$
& $2.6$
& $[0.5,10]$
& $7\cdot10^{-4}$
& $53.5$
& $[0.01,100]$ \\
\hline

Nonlinear Cubic
& $7\cdot10^{-5}$
& $0.095$
& $[0.01,10]$
& $10^{-4}$
& $0.03$
& $[0.005,0.5]$ \\
\hline

Nonlinear Burgers
& $10^{-4}$
& $0.11$
& $[0.05,0.5]$
& $10^{-4}$
& $7.78$
& $[1,10]$ \\
\hline

Nonlinear Sine--Gordon
& $5\cdot10^{-4}$
& $0.27$
& $[0.1,1]$
& $5\cdot10^{-4}$
& $0.074$
& $[0.01,0.1]$ \\
\hline

Nonlinear Klein--Gordon
& $2\cdot10^{-3}$
& $0.29$
& $[0.01,10]$
& $7\cdot10^{-3}$
& $0.0037$
& $[10^{-4},1]$ \\
\hline

\end{tabular}
\end{table*}

\begin{figure}[tb]
    \centering
    \includegraphics[width=\textwidth]{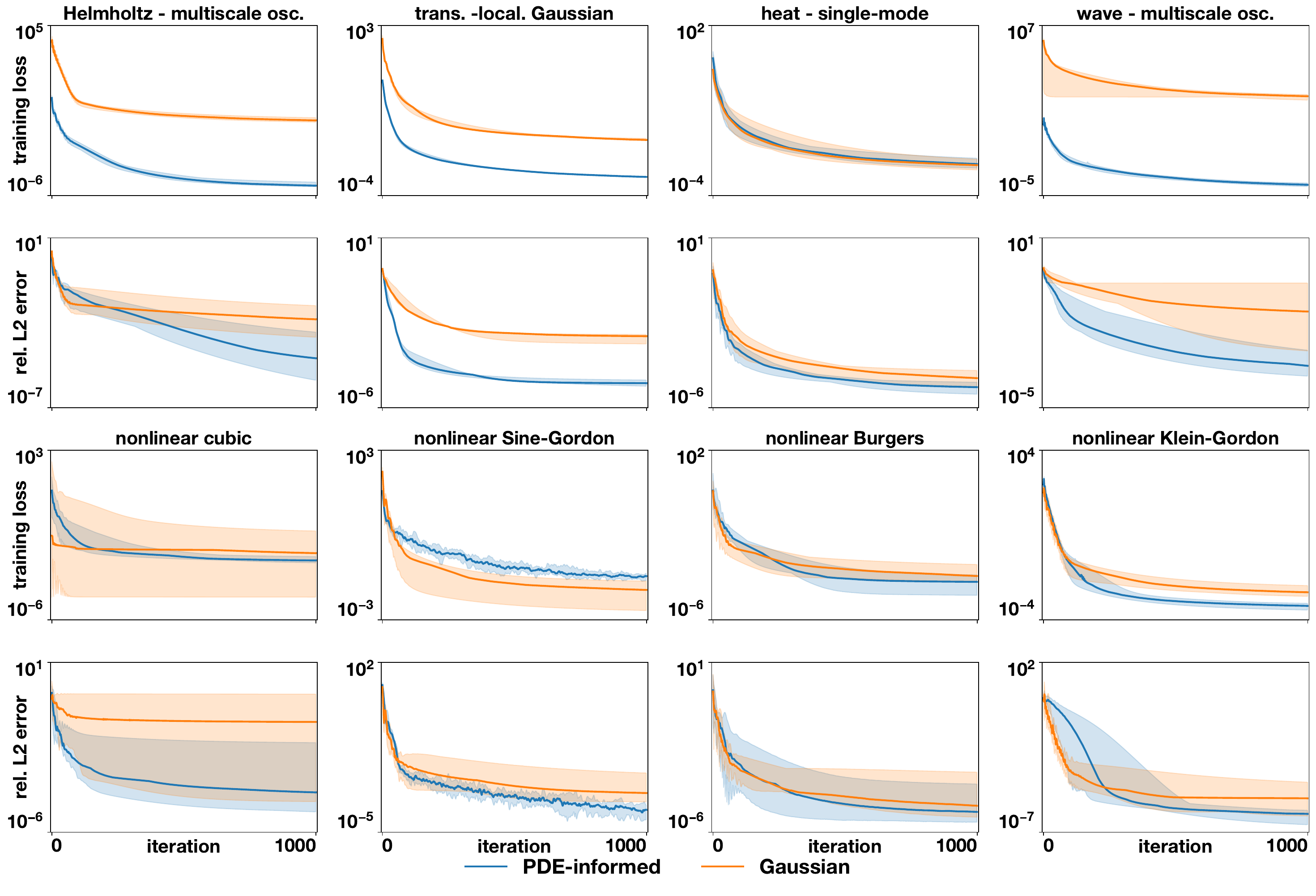}
    \caption{ADAM training dynamics for linear and nonlinear PDEs under the standard Gaussian and the proposed PDE-informed initialization. The odd rows report the training loss, while the even rows show the relative $L^2$ prediction error. The linear experiments include the Helmholtz, transport, heat, and wave equations, while the remaining rows present four nonlinear PDEs: a manufactured nonlinear Laplace equation, the Burgers equation, the Sine--Gordon equation, and the nonlinear Klein--Gordon equation. Consistent with the GD experiments, the proposed initialization improves the optimization dynamics and prediction accuracy for the linear problems, with moderate gains for the heat equation and more pronounced improvements for the Helmholtz, transport, and wave equations. Similar improvements are observed for most of the nonlinear problems, with generally faster convergence, lower prediction errors, and more stable behavior across random seeds.}
    \label{fig:ADAM2layer}
\end{figure}

These results are consistent with the shape of $\mathcal{C}_{\mathrm{pde}}$. When the zero set is an isolated point and the surrounding level sets are isotropic, as in the Laplace equation, the PDE-informed distribution remains qualitatively similar to radial Gaussian sampling. Introducing anisotropy, as in the heat equation, produces a more structured initialization and leads to a clearer improvement. The largest differences arise when $\mathcal{C}_{\mathrm{pde}}$ itself has a nontrivial geometry, such as the characteristic lines or branches of the transport and wave equations or the circular zero set of the symbol of Helmholtz equation. Overall, the observed gains are closely related to the mismatch between the geometry $\mathcal{C}_{\mathrm{pde}}$ induced by $p_{\mathrm{pde}}$ and the geometry of isotropic Gaussian sampling.

Following the analysis of the optimization dynamics, we next investigate whether the proposed initialization mitigates the frequency bias during training. To this end, we monitor the evolution of the prediction error in the Fourier domain throughout training. At selected training iterations, we compute the two-dimensional discrete Fourier transform of the prediction error and evaluate its relative energy over concentric frequency rings. Specifically, letting $R_k$ denote the $k$-th radial ring in the frequency domain, excluding the zero-frequency mode, we define
\begin{equation}
    E_k = \frac{\|\widehat{f}_{\theta}-\widehat{u}\|_{R_k}}{\|\widehat{u}\|_{R_k}},
\end{equation}
where $\|\cdot\|_{R_k}$ denotes the $L^2(R_k)$ Euclidean norm restricted to the frequencies contained in $R_k$. The Fourier domain is partitioned into five concentric rings with logarithmically spaced radii on a $101\times101$ grid, with the first ring beginning immediately after the zero-frequency component.

Figure~\ref{fig:FBmitigation} shows the evolution of $E_k$ during training for four representative PDEs. The two initializations exhibit similar behavior for the nonlinear cubic equation, whereas the PDE-informed initialization produces a more uniform reduction of the spectral error across frequency bands for Burgers, transport, and wave, reaching low error levels earlier than the standard Gaussian initialization. These results provide empirical evidence that the proposed initialization mitigates the PDE-induced frequency bias.

\subsection{PDE-Informed vs.\ Gaussian initialization beyond the theoretical setting}

Having validated the proposed initialization within the theoretical regime, we now investigate whether the observed behavior persists in more realistic training settings. In particular, we consider nonlinear PDEs, ADAM optimization, and multilayer neural network architectures.

\subsubsection{Nonlinear dynamics}

We next investigate whether the previous results extend to nonlinear equations. In particular, we consider semilinear PDEs for which $p_{\mathrm{pde}}$ is constructed from the symbol associated with the principal linear differential operator. Although this symbol does not fully characterize the frequency dynamics of a nonlinear equation, it provides the geometric information used to construct the PDE-informed initialization.

Figure~\ref{fig:SGD2layer} presents the results for a manufactured nonlinear Laplace equation, the Burgers equation with $\nu=1$, the Sine--Gordon equation \citep{ablowitz1997numerical}, and the nonlinear Klein--Gordon equation \citep{alsisi2024analytical}. These examples inherit different low-symbol geometries from their principal linear operators. In particular, the nonlinear Laplace equation has a Laplace-like symbol, the Burgers equation has a heat-like symbol, while the Sine--Gordon and Klein--Gordon equations have wave-like symbols.

The observed behavior reflects these differences. For the nonlinear Laplace and Burgers equations, the two initializations exhibit relatively similar training-loss dynamics, while the PDE-informed initialization produces a clearer improvement in prediction accuracy. In contrast, for the Sine--Gordon and Klein--Gordon equations, whose principal linear operators induce more structured wave-like low-symbol geometries, the improvement is more pronounced and is observed in both the training loss and prediction error. In general, the proposed density $\rho^{\mathrm{PI}}_{\bsw}$ improves predictive accuracy across the nonlinear problems considered.

\begin{figure}[ht]
    \centering
    \includegraphics[width=\textwidth]{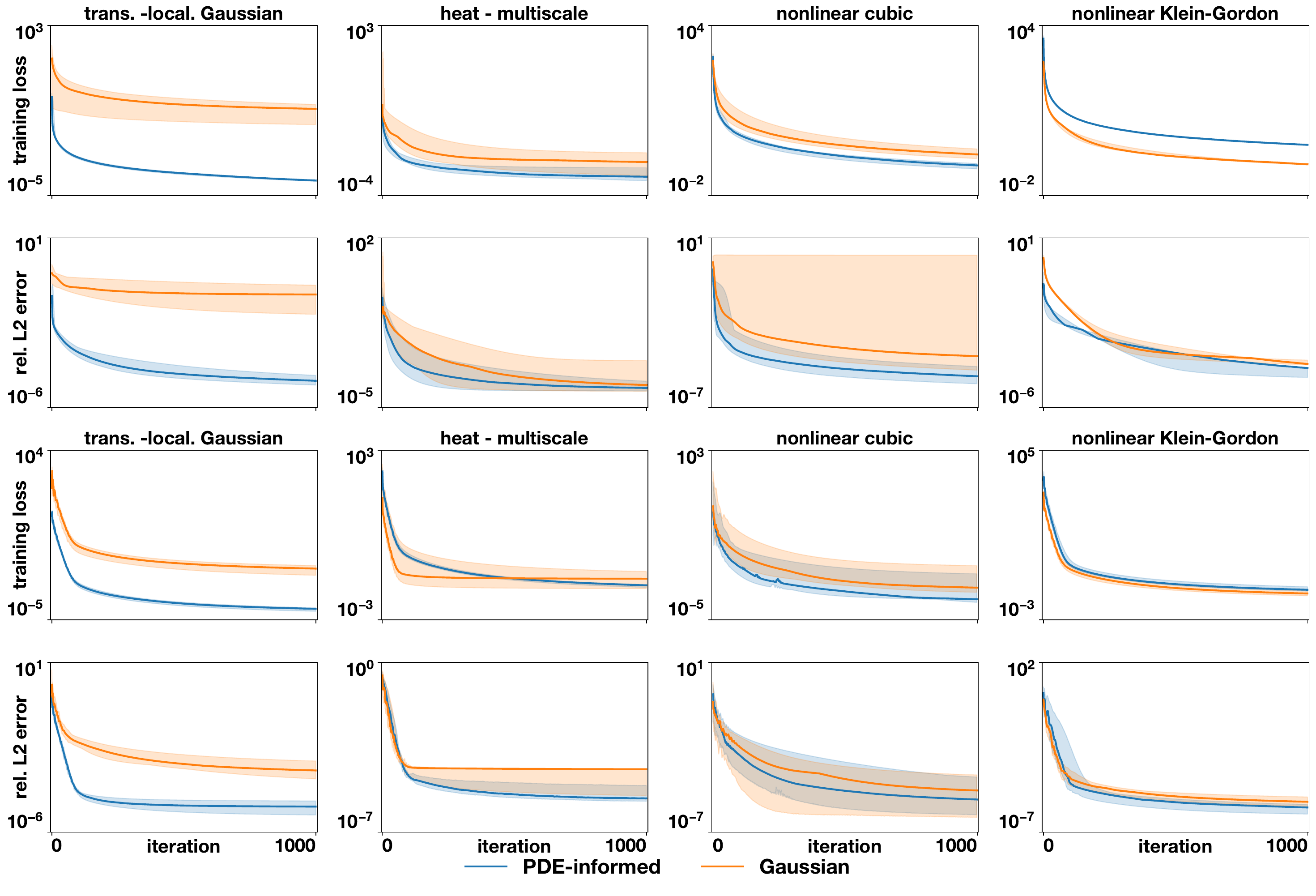}
    \caption{Multilayer experiments under GD and ADAM optimization. The first two rows show the gradient descent training dynamics, while the third and fourth rows correspond to ADAM. The odd rows report the training loss, whereas the even rows show the relative $L^2$ prediction error. Both optimizers are evaluated on the transport, heat, nonlinear Laplace (cubic), and nonlinear Klein--Gordon equations. Consistent with the two-layer experiments, the proposed PDE-informed initialization improves both the optimization dynamics and predictive accuracy under both optimization methods. The improvements are particularly pronounced for the transport equation, while more moderate but still noticeable gains are observed for the heat equation. Similar improvements are also observed for the two nonlinear problems, demonstrating that the benefits of the proposed initialization extend to deeper neural network architectures and adaptive optimization methods.}
    \label{fig:multilayer}
\end{figure}

\subsubsection{ADAM dynamics}

In this section, we investigate whether the improvements observed under GD remain when using the ADAM optimizer. In addition, we employ an exponential learning-rate decay schedule with decay factor $0.90$, updated every $50$ iterations. Although the theoretical analysis is derived in the continuous-time gradient-flow regime associated with GD, these experiments assess whether the proposed initialization retains its benefits under a widely used adaptive optimization method.

Figure~\ref{fig:ADAM2layer} presents the ADAM training dynamics for representative linear and nonlinear PDE examples. The results are consistent with those obtained using GD. For the linear equations, moderate improvements are observed for the heat equation, while the proposed initialization produces more pronounced gains for the Helmholtz, transport, and wave equations. Similarly, for the nonlinear examples, the proposed initialization generally achieves faster convergence, lower prediction errors, and more stable behavior across random seeds. Although the standard Gaussian initialization may occasionally achieve lower training losses for individual seeds, these improvements do not necessarily translate into better predictions.

\subsubsection{Multilayer case}

Finally, we investigate whether the previous conclusions obtained for two-layer networks extend to deeper architectures. In particular, we consider the linear transport and heat equations, as well as the nonlinear manufactured Laplace and Klein--Gordon equations, using a Fourier Feature network with architecture $[2,2000,200,1]$. The additional layers are initialized using the standard Glorot initialization, while the hidden layers use the $\tanh$ activation function. As in the previous experiments, we use \texttt{Optuna} with 50 trials to estimate the optimal value of $\sigma_{\bsw}$.

Figure~\ref{fig:multilayer} shows the training dynamics for both GD and ADAM optimization in the multilayer setting. The results remain consistent with the previous two-layer experiments: the proposed PDE-informed initialization improves both convergence and prediction accuracy for all considered equations and optimization methods. The relative improvements for the transport and heat equations also remain qualitatively consistent with the two-layer results, with larger gains for the transport equation and more moderate gains for the heat equation. The GD and ADAM dynamics exhibit similar qualitative behavior, suggesting that the benefits of the proposed initialization are not restricted to the theoretical two-layer setting or to a specific optimization algorithm.

\subsection{Validation on State-of-the-Art PINN Benchmarks}

\begin{figure}[tb]
    \centering
    \includegraphics[width=\textwidth]{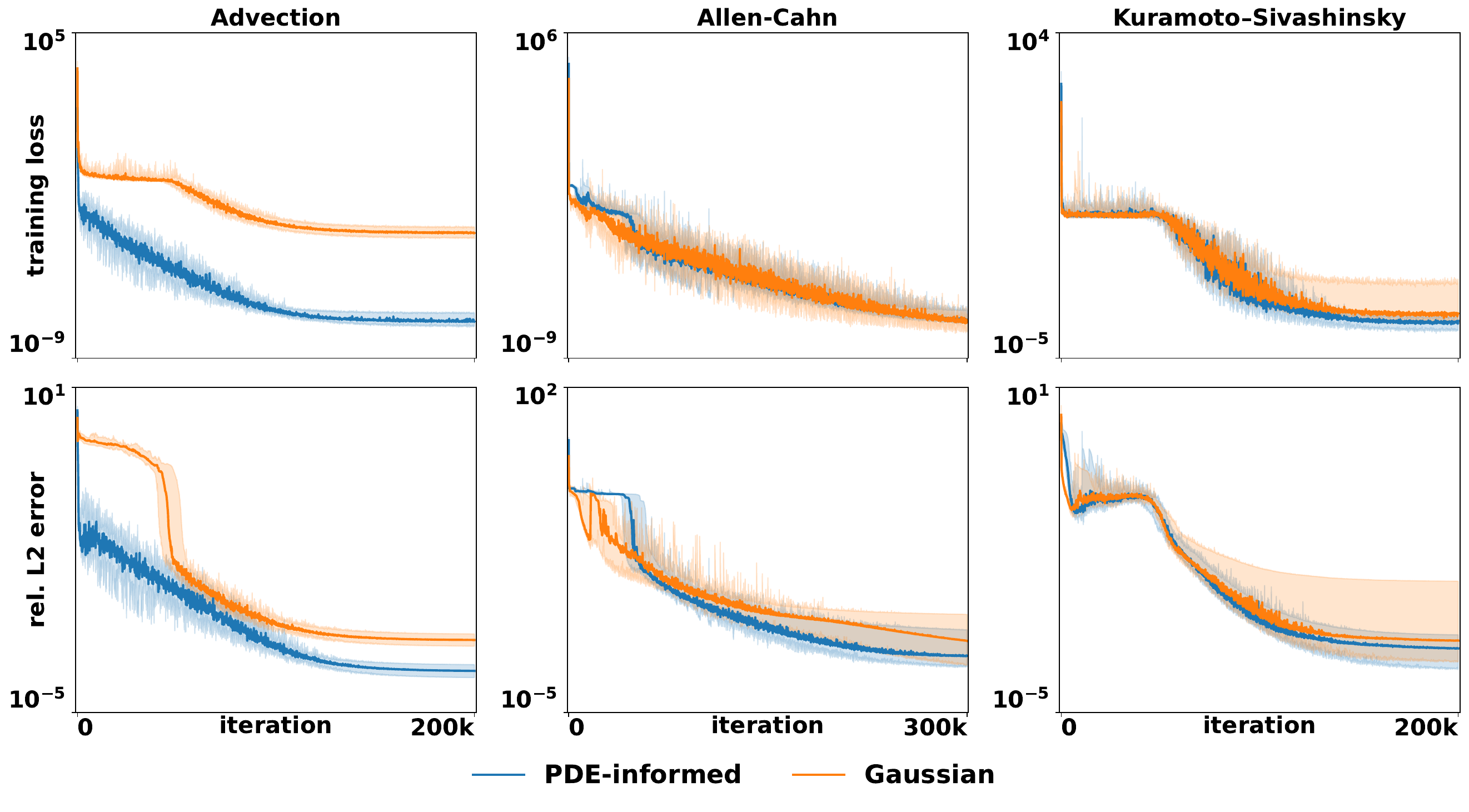}
    \caption{Training dynamics on the JAX-PI benchmarks using the recommended SOTA configuration. The first row shows the training loss, while the second row reports the relative $L^2$ prediction error. The columns correspond to the advection, Allen--Cahn, and Kuramoto--Sivashinsky equations, respectively. The blue curves denote the proposed PDE-informed initialization (using \eqref{eq:rhoPI_linear} for the linear problem and \eqref{eq:rhoPI_general} with $\lambda=0.5$ for the nonlinear problems), while the orange curves correspond to the standard Gaussian initialization. Solid lines represent the median over ten random seeds and the shaded regions indicate the range between the best and worst runs.}\label{fig:JAXPI-SOTA}
\end{figure}

\begin{figure}[h]
    \centering
    \includegraphics[width=\textwidth]{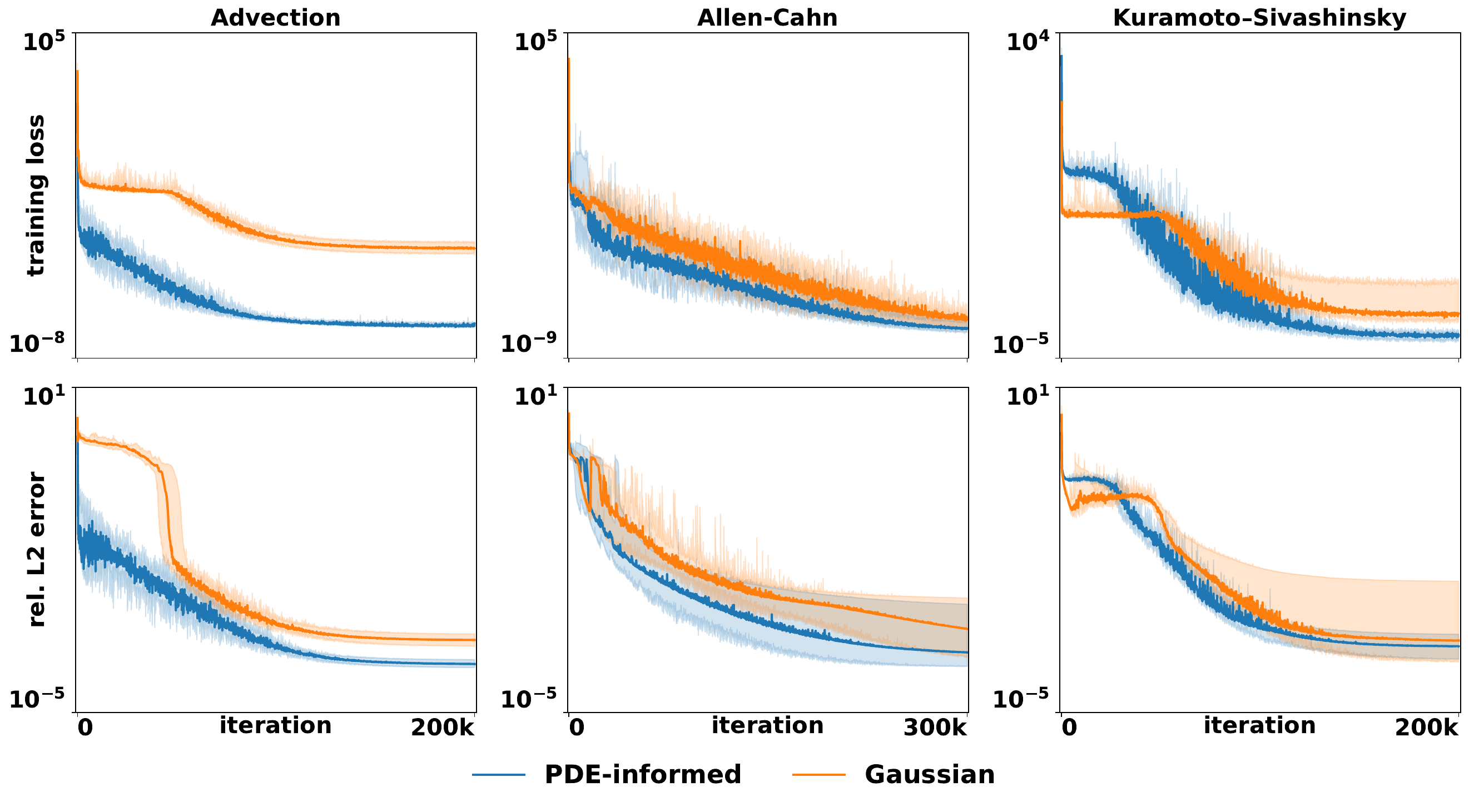}
    \caption{Training dynamics obtained with the proposed PDE-informed initialization under the simplified baseline configuration. The layout is identical to Figure~\ref{fig:JAXPI-SOTA}. The results demonstrate that the baseline configuration achieves optimization dynamics comparable to those of the full SOTA pipeline, with the Allen--Cahn equation exhibiting noticeably faster convergence during the early stages of training.}
    \label{fig:JAXPI-baseline}
\end{figure}

This section evaluates the proposed PDE-informed initialization in more realistic and challenging training scenarios using the JAX-PI repository \citep{wang2023expert}. Unlike the controlled experiments considered in the previous sections, JAX-PI provides a collection of benchmark PDE problems together 4 or 5 multilayer networks with width of 256, frozen FF embeddings, and long training schedules and carefully tuned optimization strategies that better reflect the practical challenges encountered in modern PINNs. Our objective is to determine whether the proposed initialization remains beneficial when integrated into a state-of-the-art PINN framework without modifying either the network architecture or the optimization pipeline.

The JAX-PI repository has become a widely used benchmark for evaluating PINN methodologies due to its standardized implementations, extensive ablation studies, and carefully optimized training procedures. In particular, it incorporates several stabilization techniques designed to improve training in challenging PDE problems, including causality training, adaptive loss weighting, and random weight factorization. Furthermore, the repository provides strong baseline configurations for each benchmark problem, making it a suitable environment for assessing whether the proposed initialization can improve existing state-of-the-art training settings.

To validate our findings, we tested the proposed initialization on the advection (\texttt{adv}), Allen--Cahn (\texttt{allen\_cahn}) and Kuramoto--Sivashinsky (\texttt{ks\_chaotic}) equations available in the repository. The only modification with respect to the original implementations is the initialization strategy, where the parameter $\rho^{\mathrm{PI}}_{\bsw}$ was tuned through a grid search over powers of two. For the \texttt{adv} example, Dirichlet boundary conditions were used instead of imposing boundary conditions technique, since the proposed initialization is not compatible with the periodic setting. For linear PDEs we use the PDE-informed initialization \eqref{eq:rhoPI_linear} ($\lambda=1$), whereas for nonlinear PDEs we employ the generalized PDE-informed initialization \eqref{eq:rhoPI_general} with $\lambda=0.5$.

In addition, we investigated whether the proposed initialization remains effective under the different training configurations recommended by JAX-PI. Specifically, we considered two variants. The first corresponds to the recommended state-of-the-art (SOTA) configuration, which employs all stabilization techniques suggested by the repository. The second is a simplified baseline configuration that only includes the most classical and widely used components: NTK/gradient-based loss weighting, learning-rate decay, and non-dimensionalization. We refer to this latter setting as the \emph{baseline} configuration. Both variants were compared against the original SOTA configuration using the standard Gaussian initialization.

Figure~\ref{fig:JAXPI-SOTA} compares the proposed PDE-informed initialization with the standard Gaussian initialization using the original SOTA configuration recommended by JAX-PI. Across all three benchmarks, the proposed initialization produces either faster or comparable optimization dynamics while consistently reducing the variability across random seeds. The improvement is particularly pronounced for the advection equation, where both the training loss and the relative $L^2$ error decrease substantially faster than with the Gaussian initialization. For the nonlinear problems, the differences are smaller but remain consistent throughout training, leading to improved or comparable final prediction errors.

Figure~\ref{fig:JAXPI-baseline} evaluates the proposed initialization under the simplified baseline configuration. Interestingly, the dynamics obtained with the baseline and SOTA configurations are better or similar for the PDE-informed initialization. This observation suggests that part of the improvement provided by the additional stabilization techniques incorporated in JAX-PI may already be achieved through an appropriate initialization of the Fourier frequencies. In particular, for the Allen--Cahn and Kuramoto--Sivashinsky equations, the baseline configuration exhibits noticeably faster convergence in both the training loss and the relative $L^2$ error, and generally yields a moderately better prediction. 

A possible explanation for this behavior is that some of the stabilization mechanisms employed by JAX-PI, such as Random Weight Factorization and Respecting Temporal Causality, also act to alleviate frequency bias during training, although through optimization heuristics rather than by explicitly modifying the frequency distribution at initialization. Consequently, these techniques and the proposed PDE-informed initialization appear to address similar optimization difficulties from different perspectives. An important distinction, however, is that the proposed initialization introduces virtually no additional computational cost, since it only modifies the sampling distribution of the Fourier frequencies before training begins.

Finally, Figure~\ref{fig:JAXPI-predictions} presents a qualitative comparison between the ground-truth solutions and the corresponding predictions obtained with the PDE-informed and Gaussian initializations using the SOTA configuration. Both methods accurately reconstruct the target solutions, demonstrating that the overall approximation quality remains high in all benchmarks. Nevertheless, the absolute error maps reveal a clear advantage for the proposed initialization, particularly for the advection equation, where the prediction error is substantially reduced over the entire computational domain. The Allen--Cahn and Kuramoto--Sivashinsky equations exhibit smaller visual differences, consistent with the quantitative error curves, while still showing slightly lower reconstruction errors for the PDE-informed initialization.

\begin{figure}[t]
    \centering
    \includegraphics[width=\textwidth]{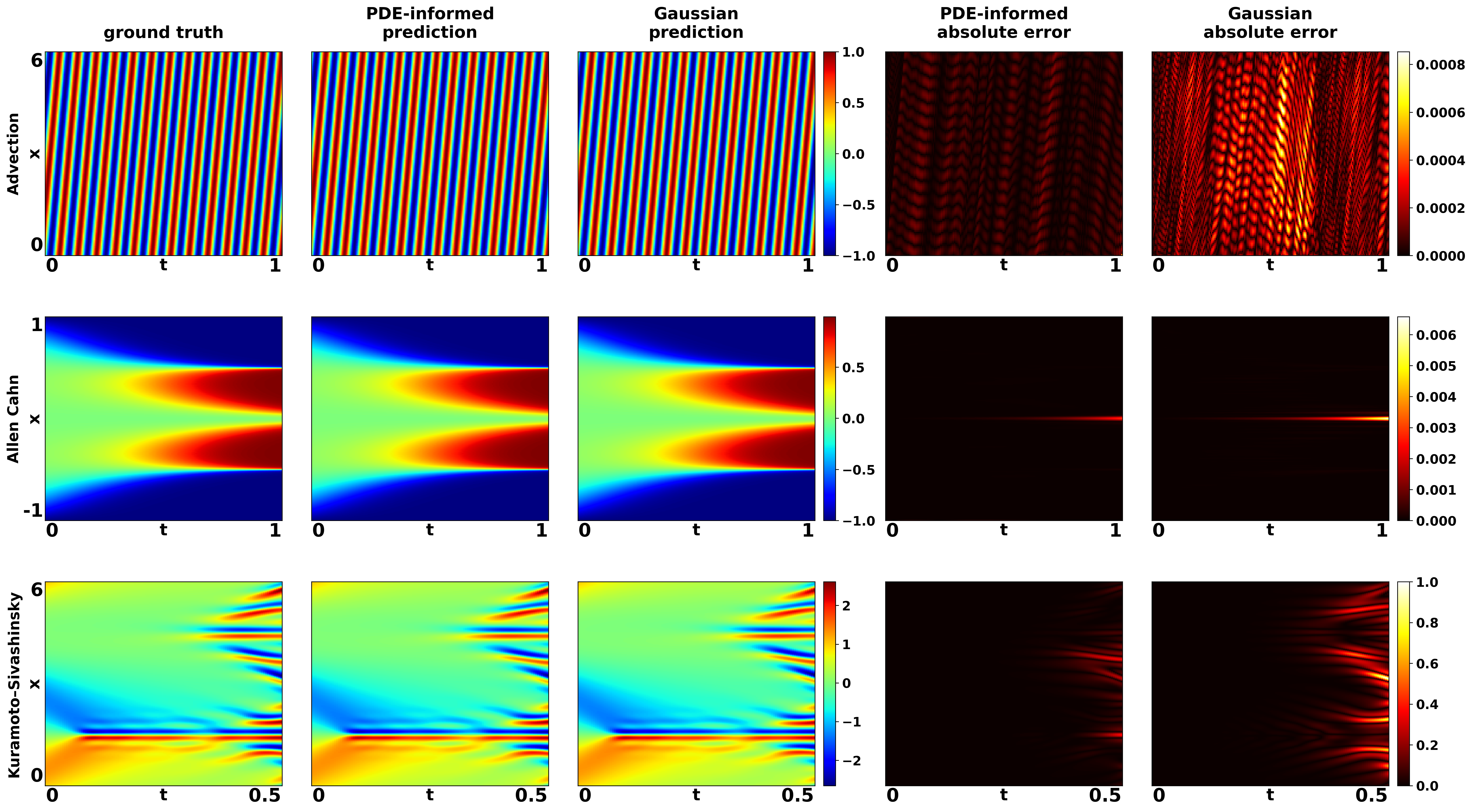}
    \caption{Qualitative comparison of the reconstructed solutions obtained with the SOTA configuration for Gaussian initialization and baseline configuration for PDE-Informed initialization. Each row corresponds to one benchmark equation (advection, Allen--Cahn, and Kuramoto--Sivashinsky). The columns show the ground-truth solution, the prediction obtained with the proposed PDE-informed initialization, the prediction obtained with the standard Gaussian initialization, the absolute error of the PDE-informed initialization, and the absolute error of the Gaussian initialization. The predictions produced by both methods closely match the ground truth, while the proposed initialization generally yields lower reconstruction errors, particularly for the advection equation.}
    \label{fig:JAXPI-predictions}
\end{figure}

\section{Discussion}

In this work, we developed a theoretical framework describing the residual frequency dynamics of PINNs with Fourier-feature neural networks in the NTK regime. Under the assumption of frozen Fourier frequencies, the dynamics are governed by equation \eqref{eq:FreezeDynamics}, which explicitly reveals the interaction between the initialization density ($\rho_{\bsw}$) and the PDE symbol ($p_{\mathrm{pde}}$). The analysis shows that the convergence rate of the residual is strongly influenced by regions of frequency space where the PDE symbol becomes small or vanishes. In particular, the dimension and geometry of the zero set $\mathcal C_{\mathrm{pde}}=\{\bxi:p_{\mathrm{pde}}(\bxi)=0\}$ determine the spectral directions along which the interior PDE residual provides the weakest control. As a consequence, the spectral properties of both the differential operator and the initialization distribution play a fundamental role in determining the optimization dynamics of PINNs.

Motivated by this observation, we proposed a PDE-informed initialization strategy that adapts the sampling distribution of Fourier frequencies to the geometry induced by the PDE symbol. The proposed initialization increases the concentration of frequencies near regions where $p_{\mathrm{pde}}$ is small, thereby compensating for the uneven convergence rates predicted by the theory. Unlike standard Gaussian initialization, which samples frequencies isotropically, the proposed approach explicitly incorporates information about the underlying differential operator into the initialization process.

The theoretical predictions were validated through extensive numerical experiments involving PDEs with markedly different geometries of $\mathcal C_{\mathrm{pde}}$. Across a broad range of examples, the proposed initialization consistently mitigated frequency bias, accelerated optimization, improved residual convergence, and reduced variability across random seeds. The magnitude of the improvement was closely associated with the dimension and shape of $\mathcal C_{\mathrm{pde}}$, as well as with the geometry of the surrounding regions where $p_{\mathrm{pde}}$ remains small. For the Laplace equation, $\mathcal C_{\mathrm{pde}}$ reduces to the origin and the two initialization strategies exhibit comparable dynamics. The heat equation also has an isolated zero, but its low-symbol region is anisotropic, leading to moderate improvements. In contrast, substantially larger gains are observed for the transport, wave, and Helmholtz equations, where $\mathcal C_{\mathrm{pde}}$ contains extended structures in frequency space. These results suggest that the advantage of the PDE-informed initialization becomes more pronounced when the weakly constrained spectral region has a nontrivial geometry that is not efficiently captured by an isotropic Gaussian initialization.

Furthermore, the empirical results indicate that the proposed methodology extends well beyond the theoretical assumptions used in the analysis. In particular, the initialization remains effective for nonlinear PDEs, multilayer neural network architectures, and ADAM optimization. We also demonstrated that the proposed initialization can improve the performance of established PINN frameworks, such as JAX-PI, without incurring additional computational cost, since the method only modifies the sampling distribution used during initialization.

Compared with previous approaches to mitigating frequency bias in PINNs, the proposed methodology differs in both its motivation and its design. Existing initialization strategies are generally PDE-agnostic, aiming to improve optimization through generic weight initialization or transfer learning without explicitly exploiting the structure of the governing equation \citep{liu2022novel,tarbiyati2024weight}. Other frequency-aware methods instead modify the network architecture, progressively introduce higher frequencies during training, or transform the learning problem to facilitate the optimization of oscillatory solutions \citep{cai2020phase,cai2024batch,LI2023114963,XU2025113905,LIU2025106886}. In contrast, our approach directly incorporates information from the PDE via its Fourier symbol, yielding an initialization distribution tailored to the spectral structure of the underlying differential operator. Moreover, unlike methods that alter the training procedure or network architecture, the proposed strategy is implemented solely at initialization, making it straightforward to integrate into existing PINN frameworks.

Despite these promising results, several limitations remain. The most important one is that the proposed PDE-informed initialization relies on the availability of a meaningful PDE symbol ($p_{\mathrm{pde}}$). Consequently, the current methodology is naturally applicable to linear and semilinear PDEs with constant coefficients, where the principal linear operator provides a well-defined frequency characterization of the problem. For fully nonlinear PDEs, however, there is generally no unique linear symbol that captures the dominant dynamics, making the construction of a PDE-informed frequency distribution considerably more challenging. Extending the present framework to such settings is an important direction for future research and may require developing nonlinear or solution-dependent frequency descriptors.

More broadly, the framework developed in this paper suggests a general principle for neural network initialization: optimization can be improved by aligning the initialization distribution with the spectral structure of the operator governing the learning problem. From this perspective, the methodology introduced here may extend beyond PINNs to inverse problems, integral and nonlocal equations, operator-learning architectures, and other machine-learning settings in which the optimization dynamics are shaped by the spectral properties of an underlying operator. We hope these results contribute to a deeper understanding of frequency bias in scientific machine learning and provide a foundation for developing operator-aware training strategies for future neural network models.

\section*{Acknowledgments}

We gratefully acknowledge the support of the Agencia Nacional de Investigación y Desarrollo (ANID), Chile. This research was funded by the National Center for Artificial Intelligence CENIA FB210017, Basal ANID. M Courdurier acknowledges ANID programs Fondecyt 1262107 and Fondecyt 1240200. This work was also funded by ANID – Millennium Science Initiative Program – ICN2021\_004 and FONDECYT Regular 1251394 to FSC. 

\bibliographystyle{cas-model2-names}
\bibliography{references}

\begin{thebibliography}{56}
\expandafter\ifx\csname natexlab\endcsname\relax\def\natexlab#1{#1}\fi
\providecommand{\url}[1]{\texttt{#1}}
\providecommand{\href}[2]{#2}
\providecommand{\path}[1]{#1}
\providecommand{\DOIprefix}{doi:}
\providecommand{\ArXivprefix}{arXiv:}
\providecommand{\URLprefix}{URL: }
\providecommand{\Pubmedprefix}{pmid:}
\providecommand{\doi}[1]{\href{http://dx.doi.org/#1}{\path{#1}}}
\providecommand{\Pubmed}[1]{\href{pmid:#1}{\path{#1}}}
\providecommand{\bibinfo}[2]{#2}
\ifx\xfnm\relax \def\xfnm[#1]{\unskip,\space#1}\fi
\bibitem[{Ablowitz et~al.(1997)Ablowitz, Herbst and
  Schober}]{ablowitz1997numerical}
\bibinfo{author}{Ablowitz, M.J.}, \bibinfo{author}{Herbst, B.},
  \bibinfo{author}{Schober, C.M.}, \bibinfo{year}{1997}.
\newblock \bibinfo{title}{On the numerical solution of the sine--{Gordon}
  equation}.
\newblock \bibinfo{journal}{Journal of Computational Physics}
  \bibinfo{volume}{131}, \bibinfo{pages}{354--367}.
\bibitem[{Akiba et~al.(2019)Akiba, Sano, Yanase, Ohta and
  Koyama}]{akiba2019optuna}
\bibinfo{author}{Akiba, T.}, \bibinfo{author}{Sano, S.},
  \bibinfo{author}{Yanase, T.}, \bibinfo{author}{Ohta, T.},
  \bibinfo{author}{Koyama, M.}, \bibinfo{year}{2019}.
\newblock \bibinfo{title}{Optuna: A next-generation hyperparameter optimization
  framework}, in: \bibinfo{booktitle}{Proceedings of the 25th ACM SIGKDD
  international conference on knowledge discovery \& data mining}, pp.
  \bibinfo{pages}{2623--2631}.
\bibitem[{Alsisi(2024)}]{alsisi2024analytical}
\bibinfo{author}{Alsisi, A.}, \bibinfo{year}{2024}.
\newblock \bibinfo{title}{Analytical and numerical solutions to the
  {Klein}--{Gordon} model with cubic nonlinearity}.
\newblock \bibinfo{journal}{Alexandria Engineering Journal}
  \bibinfo{volume}{99}, \bibinfo{pages}{31--37}.
\bibitem[{Aravanis et~al.(2025)Aravanis, Papadopoulos and
  Georgikos}]{aravanis2025fourier}
\bibinfo{author}{Aravanis, T.}, \bibinfo{author}{Papadopoulos, P.},
  \bibinfo{author}{Georgikos, D.}, \bibinfo{year}{2025}.
\newblock \bibinfo{title}{Fourier feature-enhanced neural networks for wind
  turbine power modeling}.
\newblock \bibinfo{journal}{Electricity} \bibinfo{volume}{6},
  \bibinfo{pages}{70}.
\bibitem[{Arora et~al.(2019a)Arora, Du, Hu, Li and Wang}]{arora2019fine}
\bibinfo{author}{Arora, S.}, \bibinfo{author}{Du, S.}, \bibinfo{author}{Hu,
  W.}, \bibinfo{author}{Li, Z.}, \bibinfo{author}{Wang, R.},
  \bibinfo{year}{2019}a.
\newblock \bibinfo{title}{Fine-grained analysis of optimization and
  generalization for overparameterized two-layer neural networks}, in:
  \bibinfo{booktitle}{International Conference on Machine Learning},
  \bibinfo{organization}{PMLR}. pp. \bibinfo{pages}{322--332}.
\bibitem[{Arora et~al.(2019b)Arora, Du, Hu, Li, Salakhutdinov and
  Wang}]{arora2019exact}
\bibinfo{author}{Arora, S.}, \bibinfo{author}{Du, S.S.}, \bibinfo{author}{Hu,
  W.}, \bibinfo{author}{Li, Z.}, \bibinfo{author}{Salakhutdinov, R.R.},
  \bibinfo{author}{Wang, R.}, \bibinfo{year}{2019}b.
\newblock \bibinfo{title}{On exact computation with an infinitely wide neural
  net}.
\newblock \bibinfo{journal}{Advances in neural information processing systems}
  \bibinfo{volume}{32}.
\bibitem[{Bhattamishra et~al.(2022)Bhattamishra, Patel, Kanade and
  Blunsom}]{bhattamishra2022simplicity}
\bibinfo{author}{Bhattamishra, S.}, \bibinfo{author}{Patel, A.},
  \bibinfo{author}{Kanade, V.}, \bibinfo{author}{Blunsom, P.},
  \bibinfo{year}{2022}.
\newblock \bibinfo{title}{Simplicity bias in transformers and their ability to
  learn sparse boolean functions}.
\newblock \bibinfo{journal}{arXiv preprint arXiv:2211.12316} .
\bibitem[{Bounnah et~al.(2025)Bounnah, Mihoubi and Larbi}]{bounnah2025physics}
\bibinfo{author}{Bounnah, Y.}, \bibinfo{author}{Mihoubi, M.K.},
  \bibinfo{author}{Larbi, S.}, \bibinfo{year}{2025}.
\newblock \bibinfo{title}{Physics informed neural network with {Fourier}
  feature for natural convection problems}.
\newblock \bibinfo{journal}{Engineering Applications of Artificial
  Intelligence} \bibinfo{volume}{146}, \bibinfo{pages}{110327}.
\bibitem[{Bradbury et~al.(2018)Bradbury, Frostig, Hawkins, Johnson, Leary,
  Maclaurin, Necula, Paszke, Vander{P}las, Wanderman-{M}ilne and
  Zhang}]{jax2018github}
\bibinfo{author}{Bradbury, J.}, \bibinfo{author}{Frostig, R.},
  \bibinfo{author}{Hawkins, P.}, \bibinfo{author}{Johnson, M.J.},
  \bibinfo{author}{Leary, C.}, \bibinfo{author}{Maclaurin, D.},
  \bibinfo{author}{Necula, G.}, \bibinfo{author}{Paszke, A.},
  \bibinfo{author}{Vander{P}las, J.}, \bibinfo{author}{Wanderman-{M}ilne, S.},
  \bibinfo{author}{Zhang, Q.}, \bibinfo{year}{2018}.
\newblock \bibinfo{title}{{JAX}: composable transformations of
  {P}ython+{N}um{P}y programs}.
\newblock \URLprefix \url{http://github.com/google/jax}.
\bibitem[{Cai et~al.(2020)Cai, Li and Liu}]{cai2020phase}
\bibinfo{author}{Cai, W.}, \bibinfo{author}{Li, X.}, \bibinfo{author}{Liu, L.},
  \bibinfo{year}{2020}.
\newblock \bibinfo{title}{A phase shift deep neural network for high frequency
  approximation and wave problems}.
\newblock \bibinfo{journal}{SIAM Journal on Scientific Computing}
  \bibinfo{volume}{42}, \bibinfo{pages}{A3285--A3312}.
\bibitem[{Cai et~al.(2024)Cai, Zhu, Shen, Wang and Cao}]{cai2024batch}
\bibinfo{author}{Cai, Z.}, \bibinfo{author}{Zhu, H.}, \bibinfo{author}{Shen,
  Q.}, \bibinfo{author}{Wang, X.}, \bibinfo{author}{Cao, X.},
  \bibinfo{year}{2024}.
\newblock \bibinfo{title}{Batch normalization alleviates the spectral bias in
  coordinate networks}, in: \bibinfo{booktitle}{Proceedings of the IEEE/CVF
  conference on computer vision and pattern recognition}, pp.
  \bibinfo{pages}{25160--25171}.
\bibitem[{Cao et~al.(2021)Cao, Fang, Wu, Zhou and Gu}]{cao2021towards}
\bibinfo{author}{Cao, Y.}, \bibinfo{author}{Fang, Z.}, \bibinfo{author}{Wu,
  Y.}, \bibinfo{author}{Zhou, D.X.}, \bibinfo{author}{Gu, Q.},
  \bibinfo{year}{2021}.
\newblock \bibinfo{title}{Towards understanding the spectral bias of deep
  learning}, in: \bibinfo{booktitle}{30th International Joint Conference on
  Artificial Intelligence (IJCAI 2021)}, \bibinfo{organization}{International
  Joint Conferences on Artificial Intelligence}. pp.
  \bibinfo{pages}{2205--2211}.
\bibitem[{Chai et~al.(2024)Chai, Cao, Li, Long and Sun}]{10630853}
\bibinfo{author}{Chai, X.}, \bibinfo{author}{Cao, W.}, \bibinfo{author}{Li,
  J.}, \bibinfo{author}{Long, H.}, \bibinfo{author}{Sun, X.},
  \bibinfo{year}{2024}.
\newblock \bibinfo{title}{Overcoming the spectral bias problem of
  physics-informed neural networks in solving the frequency-domain acoustic
  wave equation}.
\newblock \bibinfo{journal}{IEEE Transactions on Geoscience and Remote Sensing}
  \bibinfo{volume}{62}, \bibinfo{pages}{1--20}.
\newblock \DOIprefix\doi{10.1109/TGRS.2024.3440471}.
\bibitem[{Deshpande et~al.(2022)Deshpande, Agarwal and
  Bhattacharya}]{deshpande2022investigations}
\bibinfo{author}{Deshpande, M.}, \bibinfo{author}{Agarwal, S.},
  \bibinfo{author}{Bhattacharya, A.K.}, \bibinfo{year}{2022}.
\newblock \bibinfo{title}{Investigations on convergence behaviour of physics
  informed neural networks across spectral ranges and derivative orders}, in:
  \bibinfo{booktitle}{2022 IEEE Symposium Series on Computational Intelligence
  (SSCI)}, \bibinfo{organization}{IEEE}. pp. \bibinfo{pages}{1172--1179}.
\bibitem[{Ding et~al.(2025)Ding, Chen, Miyake and Li}]{11045735}
\bibinfo{author}{Ding, Y.}, \bibinfo{author}{Chen, S.},
  \bibinfo{author}{Miyake, H.}, \bibinfo{author}{Li, X.}, \bibinfo{year}{2025}.
\newblock \bibinfo{title}{Physics-informed neural networks with {Fourier}
  features for seismic wavefield simulation in time-domain nonsmooth complex
  media}.
\newblock \bibinfo{journal}{IEEE Transactions on Geoscience and Remote Sensing}
  \bibinfo{volume}{63}, \bibinfo{pages}{1--13}.
\newblock \DOIprefix\doi{10.1109/TGRS.2025.3581638}.
\bibitem[{Evans(2010)}]{evans2010partial}
\bibinfo{author}{Evans, L.C.}, \bibinfo{year}{2010}.
\newblock \bibinfo{title}{Partial Differential Equations}.
  volume~\bibinfo{volume}{19}.
\newblock \bibinfo{edition}{2} ed., \bibinfo{publisher}{American Mathematical
  Society}.
\bibitem[{Fu et~al.(2024)Fu, Xu and Liu}]{fu2024physics}
\bibinfo{author}{Fu, Z.}, \bibinfo{author}{Xu, W.}, \bibinfo{author}{Liu, S.},
  \bibinfo{year}{2024}.
\newblock \bibinfo{title}{Physics-informed kernel function neural networks for
  solving partial differential equations}.
\newblock \bibinfo{journal}{Neural Networks} \bibinfo{volume}{172},
  \bibinfo{pages}{106098}.
\bibitem[{Geifman et~al.(2022)Geifman, Galun, Jacobs and
  Ronen}]{geifman2022spectral}
\bibinfo{author}{Geifman, A.}, \bibinfo{author}{Galun, M.},
  \bibinfo{author}{Jacobs, D.}, \bibinfo{author}{Ronen, B.},
  \bibinfo{year}{2022}.
\newblock \bibinfo{title}{On the spectral bias of convolutional neural tangent
  and {Gaussian} process kernels}.
\newblock \bibinfo{journal}{Advances in Neural Information Processing Systems}
  \bibinfo{volume}{35}, \bibinfo{pages}{11253--11265}.
\bibitem[{Hou et~al.(2026)Hou, Bai, Jing and lin Huang}]{HOU2026108247}
\bibinfo{author}{Hou, B.Y.}, \bibinfo{author}{Bai, Y.L.},
  \bibinfo{author}{Jing, X.T.}, \bibinfo{author}{lin Huang, C.},
  \bibinfo{year}{2026}.
\newblock \bibinfo{title}{Fourier feature-enhanced multi-layer residual
  stacking network: A novel multiscale modeling approach for physics-informed
  neural networks}.
\newblock \bibinfo{journal}{Neural Networks} \bibinfo{volume}{195},
  \bibinfo{pages}{108247}.
\newblock \URLprefix
  \url{https://www.sciencedirect.com/science/article/pii/S0893608025011281},
  \DOIprefix\doi{https://doi.org/10.1016/j.neunet.2025.108247}.
\bibitem[{Jacot et~al.(2018)Jacot, Gabriel and Hongler}]{jacot2018neural}
\bibinfo{author}{Jacot, A.}, \bibinfo{author}{Gabriel, F.},
  \bibinfo{author}{Hongler, C.}, \bibinfo{year}{2018}.
\newblock \bibinfo{title}{Neural tangent kernel: Convergence and generalization
  in neural networks}.
\newblock \bibinfo{journal}{Advances in neural information processing systems}
  \bibinfo{volume}{31}.
\bibitem[{Khodakarami et~al.(2026)Khodakarami, Oommen, Daryakenari, Beekenkamp
  and Karniadakis}]{khodakarami2026spectral}
\bibinfo{author}{Khodakarami, S.}, \bibinfo{author}{Oommen, V.},
  \bibinfo{author}{Daryakenari, N.A.}, \bibinfo{author}{Beekenkamp, M.},
  \bibinfo{author}{Karniadakis, G.E.}, \bibinfo{year}{2026}.
\newblock \bibinfo{title}{Spectral bias in physics-informed and operator
  learning: Analysis and mitigation guidelines}.
\newblock \bibinfo{journal}{arXiv preprint arXiv:2602.19265} .
\bibitem[{Kissas et~al.(2020)Kissas, Yang, Hwuang, Witschey, Detre and
  Perdikaris}]{kissas2020machine}
\bibinfo{author}{Kissas, G.}, \bibinfo{author}{Yang, Y.},
  \bibinfo{author}{Hwuang, E.}, \bibinfo{author}{Witschey, W.R.},
  \bibinfo{author}{Detre, J.A.}, \bibinfo{author}{Perdikaris, P.},
  \bibinfo{year}{2020}.
\newblock \bibinfo{title}{Machine learning in cardiovascular flows modeling:
  Predicting arterial blood pressure from non-invasive {4D} flow {MRI} data
  using physics-informed neural networks}.
\newblock \bibinfo{journal}{Computer methods in applied mechanics and
  engineering} \bibinfo{volume}{358}, \bibinfo{pages}{112623}.
\bibitem[{Lee et~al.(2019)Lee, Xiao, Schoenholz, Bahri, Novak, Sohl-Dickstein
  and Pennington}]{lee2019wide}
\bibinfo{author}{Lee, J.}, \bibinfo{author}{Xiao, L.},
  \bibinfo{author}{Schoenholz, S.}, \bibinfo{author}{Bahri, Y.},
  \bibinfo{author}{Novak, R.}, \bibinfo{author}{Sohl-Dickstein, J.},
  \bibinfo{author}{Pennington, J.}, \bibinfo{year}{2019}.
\newblock \bibinfo{title}{Wide neural networks of any depth evolve as linear
  models under gradient descent}.
\newblock \bibinfo{journal}{Advances in neural information processing systems}
  \bibinfo{volume}{32}.
\bibitem[{Li et~al.(2023)Li, Xia, Liu and Liao}]{LI2023114963}
\bibinfo{author}{Li, S.}, \bibinfo{author}{Xia, Y.}, \bibinfo{author}{Liu, Y.},
  \bibinfo{author}{Liao, Q.}, \bibinfo{year}{2023}.
\newblock \bibinfo{title}{A deep domain decomposition method based on {Fourier}
  features}.
\newblock \bibinfo{journal}{Journal of Computational and Applied Mathematics}
  \bibinfo{volume}{423}, \bibinfo{pages}{114963}.
\newblock \URLprefix
  \url{https://www.sciencedirect.com/science/article/pii/S0377042722005611},
  \DOIprefix\doi{https://doi.org/10.1016/j.cam.2022.114963}.
\bibitem[{Liu et~al.(2022)Liu, Zhang, Peng, Zhou and Yao}]{liu2022novel}
\bibinfo{author}{Liu, X.}, \bibinfo{author}{Zhang, X.}, \bibinfo{author}{Peng,
  W.}, \bibinfo{author}{Zhou, W.}, \bibinfo{author}{Yao, W.},
  \bibinfo{year}{2022}.
\newblock \bibinfo{title}{A novel meta-learning initialization method for
  physics-informed neural networks}.
\newblock \bibinfo{journal}{Neural Computing and Applications}
  \bibinfo{volume}{34}, \bibinfo{pages}{14511--14534}.
\bibitem[{Liu et~al.(2025)Liu, Gu, Yu and Qin}]{LIU2025106886}
\bibinfo{author}{Liu, Y.}, \bibinfo{author}{Gu, H.}, \bibinfo{author}{Yu, X.},
  \bibinfo{author}{Qin, P.}, \bibinfo{year}{2025}.
\newblock \bibinfo{title}{Diminishing spectral bias in physics-informed neural
  networks using spatially-adaptive {Fourier} feature encoding}.
\newblock \bibinfo{journal}{Neural Networks} \bibinfo{volume}{182},
  \bibinfo{pages}{106886}.
\newblock \URLprefix
  \url{https://www.sciencedirect.com/science/article/pii/S0893608024008153},
  \DOIprefix\doi{https://doi.org/10.1016/j.neunet.2024.106886}.
\bibitem[{Luo et~al.(2025)Luo, Zhao, Wang, Li, Wen, Liang, Soekmadji and
  Liao}]{luo2025physics}
\bibinfo{author}{Luo, K.}, \bibinfo{author}{Zhao, J.}, \bibinfo{author}{Wang,
  Y.}, \bibinfo{author}{Li, J.}, \bibinfo{author}{Wen, J.},
  \bibinfo{author}{Liang, J.}, \bibinfo{author}{Soekmadji, H.},
  \bibinfo{author}{Liao, S.}, \bibinfo{year}{2025}.
\newblock \bibinfo{title}{Physics-informed neural networks for {PDE} problems:
  A comprehensive review}.
\newblock \bibinfo{journal}{Artificial Intelligence Review}
  \bibinfo{volume}{58}, \bibinfo{pages}{323}.
\bibitem[{Luo et~al.(2022a)Luo, Ma, Wang, Xu and Zhang}]{luo2022upper}
\bibinfo{author}{Luo, T.}, \bibinfo{author}{Ma, Z.}, \bibinfo{author}{Wang,
  Z.}, \bibinfo{author}{Xu, Z.J.}, \bibinfo{author}{Zhang, Y.},
  \bibinfo{year}{2022}a.
\newblock \bibinfo{title}{An upper limit of decaying rate with respect to
  frequency in linear frequency principle model}, in:
  \bibinfo{booktitle}{Mathematical and Scientific Machine Learning},
  \bibinfo{organization}{PMLR}. pp. \bibinfo{pages}{205--214}.
\bibitem[{Luo et~al.(2019)Luo, Ma, Xu and Zhang}]{luo2019theory}
\bibinfo{author}{Luo, T.}, \bibinfo{author}{Ma, Z.}, \bibinfo{author}{Xu,
  Z.Q.J.}, \bibinfo{author}{Zhang, Y.}, \bibinfo{year}{2019}.
\newblock \bibinfo{title}{Theory of the frequency principle for general deep
  neural networks}.
\newblock \bibinfo{journal}{arXiv preprint arXiv:1906.09235} .
\bibitem[{Luo et~al.(2022b)Luo, Ma, Xu and Zhang}]{luo2022exact}
\bibinfo{author}{Luo, T.}, \bibinfo{author}{Ma, Z.}, \bibinfo{author}{Xu,
  Z.Q.J.}, \bibinfo{author}{Zhang, Y.}, \bibinfo{year}{2022}b.
\newblock \bibinfo{title}{On the exact computation of linear frequency
  principle dynamics and its generalization}.
\newblock \bibinfo{journal}{SIAM Journal on Mathematics of Data Science}
  \bibinfo{volume}{4}, \bibinfo{pages}{1272--1292}.
\bibitem[{Ma et~al.(2021)Ma, Xu and Zhang}]{ma2021frequency}
\bibinfo{author}{Ma, Y.}, \bibinfo{author}{Xu, Z.Q.J.}, \bibinfo{author}{Zhang,
  J.}, \bibinfo{year}{2021}.
\newblock \bibinfo{title}{Frequency principle in deep learning beyond
  gradient-descent-based training}.
\newblock \bibinfo{journal}{arXiv preprint arXiv:2101.00747} .
\bibitem[{Molina et~al.(2026)Molina, Petrache, Costabal and
  Courdurier}]{molina2026optimizing}
\bibinfo{author}{Molina, J.}, \bibinfo{author}{Petrache, M.},
  \bibinfo{author}{Costabal, F.S.}, \bibinfo{author}{Courdurier, M.},
  \bibinfo{year}{2026}.
\newblock \bibinfo{title}{Optimizing the dynamics of the frequency bias in
  {Fourier} features neural networks}.
\newblock \bibinfo{journal}{Neurocomputing} \bibinfo{volume}{697},
  \bibinfo{pages}{134167}.
\bibitem[{Rahaman et~al.(2019)Rahaman, Baratin, Arpit, Draxler, Lin, Hamprecht,
  Bengio and Courville}]{rahaman2019spectral}
\bibinfo{author}{Rahaman, N.}, \bibinfo{author}{Baratin, A.},
  \bibinfo{author}{Arpit, D.}, \bibinfo{author}{Draxler, F.},
  \bibinfo{author}{Lin, M.}, \bibinfo{author}{Hamprecht, F.},
  \bibinfo{author}{Bengio, Y.}, \bibinfo{author}{Courville, A.},
  \bibinfo{year}{2019}.
\newblock \bibinfo{title}{On the spectral bias of neural networks}, in:
  \bibinfo{booktitle}{International Conference on Machine Learning},
  \bibinfo{organization}{PMLR}. pp. \bibinfo{pages}{5301--5310}.
\bibitem[{Raissi et~al.(2019)Raissi, Perdikaris and
  Karniadakis}]{raissi2019physics}
\bibinfo{author}{Raissi, M.}, \bibinfo{author}{Perdikaris, P.},
  \bibinfo{author}{Karniadakis, G.E.}, \bibinfo{year}{2019}.
\newblock \bibinfo{title}{Physics-informed neural networks: A deep learning
  framework for solving forward and inverse problems involving nonlinear
  partial differential equations}.
\newblock \bibinfo{journal}{Journal of Computational physics}
  \bibinfo{volume}{378}, \bibinfo{pages}{686--707}.
\bibitem[{Raissi et~al.(2020)Raissi, Yazdani and
  Karniadakis}]{raissi2020hidden}
\bibinfo{author}{Raissi, M.}, \bibinfo{author}{Yazdani, A.},
  \bibinfo{author}{Karniadakis, G.E.}, \bibinfo{year}{2020}.
\newblock \bibinfo{title}{Hidden fluid mechanics: Learning velocity and
  pressure fields from flow visualizations}.
\newblock \bibinfo{journal}{Science} \bibinfo{volume}{367},
  \bibinfo{pages}{1026--1030}.
\bibitem[{Ramasinghe and Lucey(2022)}]{ramasinghe2022beyond}
\bibinfo{author}{Ramasinghe, S.}, \bibinfo{author}{Lucey, S.},
  \bibinfo{year}{2022}.
\newblock \bibinfo{title}{Beyond periodicity: Towards a unifying framework for
  activations in coordinate-mlps}, in: \bibinfo{booktitle}{European Conference
  on Computer Vision}, \bibinfo{organization}{Springer}. pp.
  \bibinfo{pages}{142--158}.
\bibitem[{Ronen et~al.(2019)Ronen, Jacobs, Kasten and
  Kritchman}]{ronen2019convergence}
\bibinfo{author}{Ronen, B.}, \bibinfo{author}{Jacobs, D.},
  \bibinfo{author}{Kasten, Y.}, \bibinfo{author}{Kritchman, S.},
  \bibinfo{year}{2019}.
\newblock \bibinfo{title}{The convergence rate of neural networks for learned
  functions of different frequencies}.
\newblock \bibinfo{journal}{Advances in Neural Information Processing Systems}
  \bibinfo{volume}{32}.
\bibitem[{Sahli~Costabal et~al.(2020)Sahli~Costabal, Yang, Perdikaris, Hurtado
  and Kuhl}]{sahli2020physics}
\bibinfo{author}{Sahli~Costabal, F.}, \bibinfo{author}{Yang, Y.},
  \bibinfo{author}{Perdikaris, P.}, \bibinfo{author}{Hurtado, D.E.},
  \bibinfo{author}{Kuhl, E.}, \bibinfo{year}{2020}.
\newblock \bibinfo{title}{Physics-informed neural networks for cardiac
  activation mapping}.
\newblock \bibinfo{journal}{Frontiers in Physics} \bibinfo{volume}{8},
  \bibinfo{pages}{42}.
\bibitem[{Sallam and F{\"u}rth(2023)}]{sallam2023use}
\bibinfo{author}{Sallam, O.}, \bibinfo{author}{F{\"u}rth, M.},
  \bibinfo{year}{2023}.
\newblock \bibinfo{title}{On the use of {Fourier} features-physics informed
  neural networks ({FF-PINN}) for forward and inverse fluid mechanics
  problems}.
\newblock \bibinfo{journal}{Proceedings of the Institution of Mechanical
  Engineers, Part M: Journal of Engineering for the Maritime Environment}
  \bibinfo{volume}{237}, \bibinfo{pages}{846--866}.
\bibitem[{Saragadam et~al.(2023)Saragadam, LeJeune, Tan, Balakrishnan,
  Veeraraghavan and Baraniuk}]{saragadam2023wire}
\bibinfo{author}{Saragadam, V.}, \bibinfo{author}{LeJeune, D.},
  \bibinfo{author}{Tan, J.}, \bibinfo{author}{Balakrishnan, G.},
  \bibinfo{author}{Veeraraghavan, A.}, \bibinfo{author}{Baraniuk, R.G.},
  \bibinfo{year}{2023}.
\newblock \bibinfo{title}{{WIRE}: Wavelet implicit neural representations}, in:
  \bibinfo{booktitle}{Proceedings of the IEEE/CVF Conference on Computer Vision
  and Pattern Recognition}, pp. \bibinfo{pages}{18507--18516}.
\bibitem[{Sitzmann et~al.(2020)Sitzmann, Martel, Bergman, Lindell and
  Wetzstein}]{sitzmann2020implicit}
\bibinfo{author}{Sitzmann, V.}, \bibinfo{author}{Martel, J.},
  \bibinfo{author}{Bergman, A.}, \bibinfo{author}{Lindell, D.},
  \bibinfo{author}{Wetzstein, G.}, \bibinfo{year}{2020}.
\newblock \bibinfo{title}{Implicit neural representations with periodic
  activation functions}.
\newblock \bibinfo{journal}{Advances in neural information processing systems}
  \bibinfo{volume}{33}, \bibinfo{pages}{7462--7473}.
\bibitem[{Tancik et~al.(2020)Tancik, Srinivasan, Mildenhall, Fridovich-Keil,
  Raghavan, Singhal, Ramamoorthi, Barron and Ng}]{tancik2020fourier}
\bibinfo{author}{Tancik, M.}, \bibinfo{author}{Srinivasan, P.},
  \bibinfo{author}{Mildenhall, B.}, \bibinfo{author}{Fridovich-Keil, S.},
  \bibinfo{author}{Raghavan, N.}, \bibinfo{author}{Singhal, U.},
  \bibinfo{author}{Ramamoorthi, R.}, \bibinfo{author}{Barron, J.},
  \bibinfo{author}{Ng, R.}, \bibinfo{year}{2020}.
\newblock \bibinfo{title}{Fourier features let networks learn high frequency
  functions in low dimensional domains}.
\newblock \bibinfo{journal}{Advances in Neural Information Processing Systems}
  \bibinfo{volume}{33}, \bibinfo{pages}{7537--7547}.
\bibitem[{Tarbiyati and Nemati~Saray(2024)}]{tarbiyati2024weight}
\bibinfo{author}{Tarbiyati, H.}, \bibinfo{author}{Nemati~Saray, B.},
  \bibinfo{year}{2024}.
\newblock \bibinfo{title}{Weight initialization algorithm for physics-informed
  neural networks using finite differences}.
\newblock \bibinfo{journal}{Engineering with Computers} \bibinfo{volume}{40},
  \bibinfo{pages}{1603--1619}.
\bibitem[{Vasheghani~Farahani et~al.(2026)Vasheghani~Farahani, Tahsini,
  Nematollahi and Ghiasi~Afzal}]{vasheghani2026novel}
\bibinfo{author}{Vasheghani~Farahani, H.}, \bibinfo{author}{Tahsini, A.M.},
  \bibinfo{author}{Nematollahi, A.}, \bibinfo{author}{Ghiasi~Afzal, M.},
  \bibinfo{year}{2026}.
\newblock \bibinfo{title}{A novel physics-informed neural network architecture
  for solving nonlinear {PDEs} in fluid mechanics without labeled data}.
\newblock \bibinfo{journal}{Engineering with Computers} \bibinfo{volume}{42},
  \bibinfo{pages}{46}.
\bibitem[{Wang et~al.(2026a)Wang, Koohy, Lu and Perdikaris}]{wang2026pinns}
\bibinfo{author}{Wang, S.}, \bibinfo{author}{Koohy, S.}, \bibinfo{author}{Lu,
  Y.}, \bibinfo{author}{Perdikaris, P.}, \bibinfo{year}{2026}a.
\newblock \bibinfo{title}{When {PINNs} go wrong: Pseudo-time stepping against
  spurious solutions}.
\newblock \bibinfo{journal}{arXiv preprint arXiv:2604.23528} .
\bibitem[{Wang et~al.(2023)Wang, Sankaran, Wang and
  Perdikaris}]{wang2023expert}
\bibinfo{author}{Wang, S.}, \bibinfo{author}{Sankaran, S.},
  \bibinfo{author}{Wang, H.}, \bibinfo{author}{Perdikaris, P.},
  \bibinfo{year}{2023}.
\newblock \bibinfo{title}{An expert's guide to training physics-informed neural
  networks}.
\newblock \bibinfo{journal}{arXiv preprint arXiv:2308.08468} .
\bibitem[{Wang et~al.(2021)Wang, Wang and Perdikaris}]{wang2021eigenvector}
\bibinfo{author}{Wang, S.}, \bibinfo{author}{Wang, H.},
  \bibinfo{author}{Perdikaris, P.}, \bibinfo{year}{2021}.
\newblock \bibinfo{title}{On the eigenvector bias of {Fourier} feature
  networks: From regression to solving multi-scale {PDEs} with physics-informed
  neural networks}.
\newblock \bibinfo{journal}{Computer Methods in Applied Mechanics and
  Engineering} \bibinfo{volume}{384}, \bibinfo{pages}{113938}.
\bibitem[{Wang et~al.(2022)Wang, Yu and Perdikaris}]{wang2022and}
\bibinfo{author}{Wang, S.}, \bibinfo{author}{Yu, X.},
  \bibinfo{author}{Perdikaris, P.}, \bibinfo{year}{2022}.
\newblock \bibinfo{title}{When and why {PINNs} fail to train: A neural tangent
  kernel perspective}.
\newblock \bibinfo{journal}{Journal of Computational Physics}
  \bibinfo{volume}{449}, \bibinfo{pages}{110768}.
\bibitem[{Wang et~al.(2026b)Wang, Du, Li and Luo}]{wang2026structural}
\bibinfo{author}{Wang, X.}, \bibinfo{author}{Du, K.}, \bibinfo{author}{Li, Z.},
  \bibinfo{author}{Luo, H.}, \bibinfo{year}{2026}b.
\newblock \bibinfo{title}{Structural natural frequency-embedded {Fourier}
  feature physics-informed neural networks for earthquake dynamic response
  prediction}.
\newblock \bibinfo{journal}{Advanced Modeling and Simulation in Engineering
  Sciences} \bibinfo{volume}{13}, \bibinfo{pages}{6}.
\bibitem[{de~Wolff et~al.(2021)de~Wolff, Carrillo, Mart{\'\i} and
  Sanchez-Pi}]{de2021assessing}
\bibinfo{author}{de~Wolff, T.}, \bibinfo{author}{Carrillo, H.},
  \bibinfo{author}{Mart{\'\i}, L.}, \bibinfo{author}{Sanchez-Pi, N.},
  \bibinfo{year}{2021}.
\newblock \bibinfo{title}{Assessing physics informed neural networks in ocean
  modelling and climate change applications}, in: \bibinfo{booktitle}{AI:
  Modeling Oceans and Climate Change Workshop at ICLR 2021}.
\bibitem[{Wu et~al.(2025)Wu, Aguiar, Johansson and Barreau}]{wu2025iterative}
\bibinfo{author}{Wu, Y.}, \bibinfo{author}{Aguiar, M.},
  \bibinfo{author}{Johansson, K.H.}, \bibinfo{author}{Barreau, M.},
  \bibinfo{year}{2025}.
\newblock \bibinfo{title}{Iterative training of physics-informed neural
  networks with {Fourier}-enhanced features}.
\newblock \bibinfo{journal}{arXiv preprint arXiv:2510.19399} .
\bibitem[{Wu et~al.(2026)Wu, Yuan, Gao, Yang and Hou}]{wu2026novel}
\bibinfo{author}{Wu, Y.}, \bibinfo{author}{Yuan, H.}, \bibinfo{author}{Gao,
  Y.}, \bibinfo{author}{Yang, S.}, \bibinfo{author}{Hou, Y.},
  \bibinfo{year}{2026}.
\newblock \bibinfo{title}{A novel {Fourier} feature physics-informed neural
  networks based on the boundary element method for solving scattering of {SH}
  wave induced by complex topography}.
\newblock \bibinfo{journal}{Earthquake Engineering \& Structural Dynamics} .
\bibitem[{Xu(2018)}]{xu2018understanding}
\bibinfo{author}{Xu, Z.J.}, \bibinfo{year}{2018}.
\newblock \bibinfo{title}{Understanding training and generalization in deep
  learning by {Fourier} analysis}.
\newblock \bibinfo{journal}{arXiv preprint arXiv:1808.04295} .
\bibitem[{Xu et~al.(2025)Xu, Zhang and Cai}]{XU2025113905}
\bibinfo{author}{Xu, Z.Q.J.}, \bibinfo{author}{Zhang, L.},
  \bibinfo{author}{Cai, W.}, \bibinfo{year}{2025}.
\newblock \bibinfo{title}{On understanding and overcoming spectral biases of
  deep neural network learning methods for solving {PDEs}}.
\newblock \bibinfo{journal}{Journal of Computational Physics}
  \bibinfo{volume}{530}, \bibinfo{pages}{113905}.
\newblock \URLprefix
  \url{https://www.sciencedirect.com/science/article/pii/S0021999125001883},
  \DOIprefix\doi{https://doi.org/10.1016/j.jcp.2025.113905}.
\bibitem[{Yang and Hu(2021)}]{yang2021tensor}
\bibinfo{author}{Yang, G.}, \bibinfo{author}{Hu, E.J.}, \bibinfo{year}{2021}.
\newblock \bibinfo{title}{Tensor programs iv: Feature learning in
  infinite-width neural networks}, in: \bibinfo{booktitle}{International
  Conference on Machine Learning}, \bibinfo{organization}{PMLR}. pp.
  \bibinfo{pages}{11727--11737}.
\bibitem[{Zhang et~al.(2021)Zhang, Luo, Ma and Xu}]{zhang2021linear}
\bibinfo{author}{Zhang, Y.}, \bibinfo{author}{Luo, T.}, \bibinfo{author}{Ma,
  Z.}, \bibinfo{author}{Xu, Z.Q.J.}, \bibinfo{year}{2021}.
\newblock \bibinfo{title}{A linear frequency principle model to understand the
  absence of overfitting in neural networks}.
\newblock \bibinfo{journal}{Chinese Physics Letters} \bibinfo{volume}{38},
  \bibinfo{pages}{038701}.

\end{thebibliography}

\appendix

\section{Derivation of the FB equation for $\widehat{r}$}\label{app:Dynamics}

The following computations are based on the theory of tempered distributions. Recall that, over $\mathbb R^d$, Schwartz functions are denoted by $\mathcal S(\mathbb R^d)$ and tempered distributions form the dual space, denoted $\mathcal S'(\mathbb R^d)$. As customary, we say that a relation holds in the sense of distributions over a space $\mathbb R^d$, provided it holds in duality with Schwartz from $\mathcal S(\mathbb R^d)$.

We now give the proofs of section \ref{sec:FBeq}

\begin{proof}[Proof of Lemma \ref{thm:Lemma}]
Let be $\phi \in \mathcal{S}$ \begin{align} &\left\langle\mathcal{F}\left[\frac{d}{dt}r\right],\phi\right\rangle = \left\langle \frac{d}{dt}r,\mathcal{F} [\phi]\right\rangle \\ = &\int_{\R^{d}} \overline{\left(\int_{\R^{d}} - K\left(\bsx,\bsxp\right)(\mathcal{D}_{x'}^{*} \mathcal{D}_{x'} ) r_{\mu}\left(\bsxp,\Theta\right)d\bsxp \right)} \int_{\R^{d}} \phi(\bxi) e^{-2\pi i\bsx \cdot \bxi} d\bxi d\bsx \nonumber \\ = & -\mathbb{E}_{\theta}\int_{\R^{3d}} \overline{\nabla_{\theta} f^0}(\bsx,\theta)\cdot\overline{\nabla_{\theta} f^0}(\bsxp,\theta)\overline{(\mathcal{D}_{x'}^{*} \mathcal{D}_{x'} ) r_{\mu}}\left(\bsxp,\Theta\right) \phi(\bxi) e^{-2\pi i\bsx \cdot \bxi} d\bsxp d\bxi d\bsx \nonumber \\ = & -\mathbb{E}_{\theta}\int_{\R^{d}} \overline{\nabla_{\theta} f^0}(\bsxp,\theta)\overline{(\mathcal{D}_{x'}^{*} \mathcal{D}_{x'}) r_{\mu}}\left(\bsxp,\Theta\right) d\bsxp \cdot \int_{\R^{d}} \left( \int_{\R^{d}}\overline{\nabla_{\theta} f^0}(\bsx,\theta) e^{-2\pi i\bsx \cdot \bxi} d\bsx \right) \phi(\bxi) d\bxi. \nonumber \end{align} Applying the Parserval theorem in the first integral, we arrived at \begin{align} &-\left\langle\mathcal{F}\left[\frac{d}{dt}r\right],\phi\right\rangle \\ = & \mathbb{E}_{\theta}\int_{\R^{d}} \mathcal{F}_{\bsxp\rightarrow\bxi^{\prime}}\left[\nabla_{\theta} f^0\right](\bxip,\theta) \overline{\mathcal{F}_{\bsxp\rightarrow\bxi^{\prime}} \left[(\mathcal{D}_{x'}^{*} \mathcal{D}_x' ) r_{\mu}\right]}\left(\bxip,\Theta\right) d\bsxp \cdot \int_{\R^{d}} \overline{\mathcal{F}_{\bsx\rightarrow\bxi}\left[\nabla_{\theta} f^0\right]}(\bxi,\theta) \phi(\bxi) d\bxi \nonumber \\ = & \mathbb{E}_{\theta}\int_{\R^{d}} \overline{\left(\int_{\R^{d}} \overline{\mathcal{F}_{\bsxp\rightarrow\bxi^{\prime}}\left[\nabla_{\theta} f^0\right](\bxip,\theta)} \cdot \mathcal{F}_{\bsx\rightarrow\bxi}\left[\nabla_{\theta} f^0\right](\bxi,\theta) p(\bxip) \widehat{r}_{\mu}\left(\bxip,\Theta\right) d\bsxp \right)} \phi(\bxi) d\bxi \nonumber \\ = & \mathbb{E}_{\theta}\int_{\R^{d}} \overline{\left(\int_{\R^{d}} \widehat{K}\left(\bxi,\bxip\right)p(\bxip) \widehat{r}_{\mu}\left(\bxip,\Theta\right) d\bsxp \right)} \phi(\bxi) d\bxi = -\left\langle \mathcal{P}[\widehat{r}_{\mu}], \phi \right\rangle. \nonumber \end{align}
\end{proof}

\begin{proof}[Proof of Theorem \ref{thm:frequency-dynamics}]
The proof follows the same steps as in \citet{molina2026optimizing}, with the only modification being the presence of the multiplicative factor $p(\cdot)$ arising from the operator $\mathcal{D}_x^*\mathcal{D}_x$. Namely, \citet{molina2026optimizing} found that in the case $\mathcal{D}_x:=\mathcal{I}$ (a.k.a the loss function is for a regression task), the dynamics is given by: 
\begin{equation}\label{eq:dynamicGeneral}
    \left\langle \partial_t\widehat{r},\psi\right\rangle = -\int_{\mathbb{R}^{d}}   \overline{\psi(\bxi)}  \left(\int_{\mathbb{R}^{d}}\widehat{K}\left(\bxi,\bxip\right)\widehat{r}_{\rho_{\text{data}}}\left(\bxip,\Theta\right)d\bxip\right), \quad \forall \psi\in \mathcal S(\mathbb R^d)
\end{equation}
where $\widehat{r}_{\rho_{\text{data}}}=\mathcal{F}[r \cdot \rho_{\text{data}}]$ and 
\begin{equation}
\widehat{K}\left(\bxi,\bxip\right) 
:=  \mathbb{E}_{\theta\sim\rho_a\times\rho_{\bsw}}\left[\overline{\mathcal{F}_{\bsxp\rightarrow\bxi^{\prime}}\left[\nabla_\theta f^0(\bsxp,\theta)\right]}(\bxi^{\prime})\cdot\mathcal{F}_{\bsx\rightarrow\bxi}\left[\nabla_\theta f^0(\bsx,\theta)\right](\bxi)  \right]
\end{equation}
and making explicit $\widehat{K}$ for the FF network they arrive to:
\begin{alignat}{1}\label{eq:dynamicsfinal}
\left\langle \frac{d}{dt}\widehat{r},\psi\right\rangle 
= & -\frac{\sigma_{a}^{2}}{4\pi^{2}} \int_{\mathbb{R}^{d}}   \overline{\nabla\psi(\bxi)} \cdot \int_{\mathbb{R}^{2}} \nabla{\widehat{r}_{\rho_{\text{data}}}\left(\frac{y'}{y}\bxi\right)} \rho_{\bsw}\left(\frac{1}{y}\bxi\right)   \frac{\overline{\widehat{g'}\left(y'\right)} \widehat{g'}\left(y\right)}{\lvert y \rvert^d}dy'dy d\bxi  \\
 & -\int_{\mathbb{R}^{d}}   \overline{\psi(\bxi)} \int_{\mathbb{R}^{2}} {\widehat{r}_{\rho_{\text{data}}}\left(\frac{y'}{y}\bxi\right)} \rho_{\bsw}\left(\frac{ 1}{ y }\bxi\right)  \frac{\overline{\widehat{g}\left(y'\right)} \widehat{g}\left(y\right)}{\lvert y \rvert^d}dy'dy d\bxi,  \nonumber
\end{alignat}
Then, as in the PINNs setting, the dynamics of \eqref{eq:dynamicGeneral} only differ in the multiplicative symbol factor $p$ we obtain:
\begin{align}\label{eq:dynamicsfinalPINNs}
\left\langle \partial_t \widehat{r}, \psi \right\rangle
= & -\frac{\sigma_a^2}{4\pi^2}\int_{\mathbb{R}^d}\overline{\nabla \psi(\bxi)} \cdot \int_{\mathbb{R}^2}\nabla\Big( p\Big(\frac{y'}{y}\bxi\Big) \widehat{r}_\mu\Big(\frac{y'}{y}\bxi,\Theta\Big)\Big) \rho_{\bsw}\Big(\frac{1}{y}\bxi\Big) \frac{\overline{\widehat{g'}(y')} \widehat{g'}(y)}{|y|^d}\, dy' dy \, d\bxi \nonumber \\
& - \int_{\mathbb{R}^d}\overline{\psi(\bxi)} \int_{\mathbb{R}^2} p\Big(\frac{y'}{y}\bxi\Big) \widehat{r}_\mu\Big(\frac{y'}{y}\bxi,\Theta\Big) \rho_{\bsw}\Big(\frac{1}{y}\bxi\Big) \frac{\overline{\widehat{g}(y')} \widehat{g}(y)}{|y|^d} \, dy' dy \, d\bxi.
\end{align}

\end{proof}

\begin{proof}[Proof of Corollary \ref{cor:Nogeneral_PINNs}]
As the FF network is a sum of general NN (equation \eqref{eq:NNgeneral}), then the NTK is
\begin{alignat}{1}
 K(\bsx,\bsxp)  = & \mathbb{E}_\theta\nabla_{\theta}f^0(\bsx,\theta)\cdot\nabla_{\theta}f^0(\bsxp,\theta)\\
 = & \mathbb{E}_\theta\left[a^{2}g_{1}'\left(\bsw\cdot \bsx\right)g_{1}'\left(\bsw\cdot \bsxp\right)\bsx\cdot \bsxp\right]  + \mathbb{E}_\theta \left[b^{2}g_{2}'\left(\bsw\cdot \bsx\right)g_{2}'\left(\bsw\cdot \bsxp\right)\bsx\cdot \bsxp\right] \nonumber\\
 & +\mathbb{E}_\theta\left[g_{1}\left(\bsw\cdot \bsx\right)g_{1}\left(\bsw\cdot \bsxp\right)\right]+ \mathbb{E}_\theta\left[g_{2}\left(\bsw\cdot \bsx\right)g_{2}\left(\bsw\cdot \bsxp\right)\right]. \nonumber
\end{alignat}
and using that
\begin{alignat}{2}
    \widehat{g}_1(z) = & \frac{1}{2}\left(\delta(z-1)+\delta(z+1)\right)
    \quad \text{and} \quad & \widehat{g_2}(z) = & \frac{1}{2i}\left(\delta(z-1)-\delta(z+1)\right).
\end{alignat}
Then, computing directly the equation \eqref{eq:dynamicsfinalPINNs} it follows similar to the steps as in \citet{molina2026optimizing}.
\end{proof}

\section{Effect of PDE-symbol mismatch}\label{app:transport_symbol_mismatch}

To further assess the role of the PDE information in the proposed initialization, we consider a transport example, for which the true transport velocity is $c=1$. We compare the standard Gaussian initialization with four PDE-informed initializations. The first is constructed using the correct transport velocity, $c=1$, while the remaining three use deliberately incorrect values, $c=2$, $c=5$, and $c=10$.

Importantly, the transport equation used to define the training loss is identical in all experiments and always uses the true value $c=1$. Only the polynomial employed to construct the PDE-informed initialization is changed. Using an incorrect value of $c$ therefore rotates this preferred region in frequency space away from the characteristic set of the actual PDE.

Figure~\ref{fig:transport_wrong_c} shows the training loss and the relative $L^2$ prediction error for the different initializations. The PDE-informed initialization, with the correct value $c=1$, achieves substantially faster training dynamics and lower prediction error than all alternatives. When $c=2$ is used to construct the initialization, the resulting performance is already much closer to that of the standard Gaussian initialization. As the assumed velocity deviates further from the true value, with $c=5$ and $c=10$, both the optimization dynamics and prediction accuracy progressively deteriorate.

These results provide additional evidence that the improvement produced by the proposed initialization is specifically associated with the PDE-dependent frequency geometry encoded by $p_{\mathrm{pde}}$, rather than merely with using a non-Gaussian frequency distribution. In particular, among the tested initializations, the best performance is obtained when the polynomial used for sampling matches the true differential operator, while increasing the mismatch between the assumed and true characteristic sets leads to progressively worse dynamics.

\begin{figure}[ht]
    \centering    \includegraphics[width=.9\textwidth]{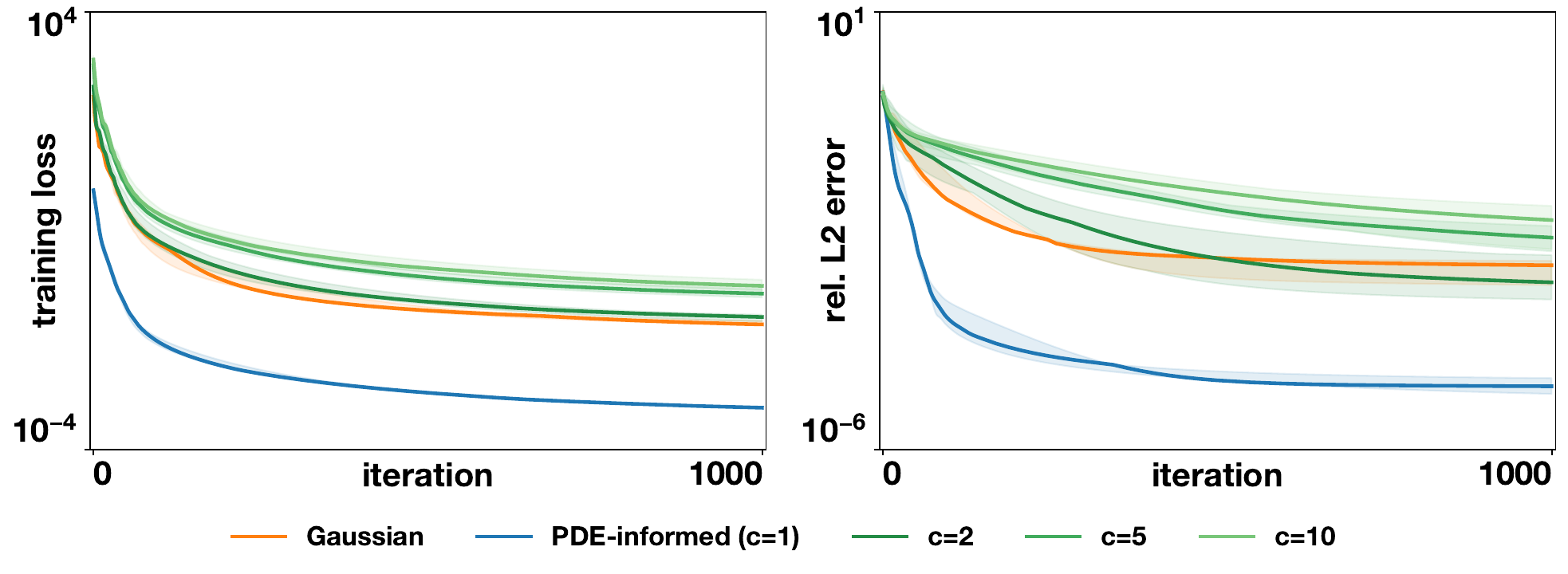}
    \caption{Effect of using an inaccurate PDE symbol in the PDE-informed initialization for the \emph{Transport -- Localized Gaussian} example of Table~\ref{tab:examples}, whose true transport velocity is $c=1$. The left panel shows the training loss and the right panel the relative $L^2$ prediction error. The standard Gaussian initialization is shown in orange, while the PDE-informed initialization constructed using the correct velocity $c=1$ is shown in blue. PDE-informed initializations constructed using the incorrect values $c=2$, $c=5$, and $c=10$ are shown using progressively lighter shades of green. The PDE used during training always has $c=1$; only the polynomial used to construct the initialization density is modified. Initialization based on the correct PDE symbol yields the best dynamics among the tested alternatives, while performance progressively deteriorates as the assumed transport velocity deviates further from its true value.}
    \label{fig:transport_wrong_c}
\end{figure}

\section{Other JAXPI benchmarking examples} 

\subsection{Principal symbol of the steady lid-driven cavity equations}

The steady incompressible Navier--Stokes equations are
\begin{align}
    u u_x+v u_y+p_x-\frac1{Re}\Delta u &=0, \nonumber \\
    u v_x+v v_y+p_y-\frac1{Re}\Delta v &=0,\\
    u_x+v_y &=0. \nonumber
\end{align}
Since the proposed initialization depends only on the principal linear operator, the nonlinear convective terms are discarded, yielding the steady Stokes operator
\begin{align}
    p_x-\frac1{Re}\Delta u &=0, \nonumber\\
    p_y-\frac1{Re}\Delta v &=0, \\
    u_x+v_y &=0. \nonumber
\end{align}
Therefore, the Fourier-transformed system is
\begin{align}
    \frac{4\pi^2}{Re}    (\xi_x^2+\xi_y^2)\hat{u} + 2\pi i\xi_x\hat{p} &=0, \nonumber \\
    \frac{4\pi^2}{Re}   (\xi_x^2+\xi_y^2)\hat{v} + 2\pi i\xi_y\hat{p} &=0,  \\
    2\pi i\xi_x\hat{u} + 2\pi i\xi_y\hat{v} &=0. \nonumber
\end{align}
Hence, the Fourier symbol of the principal operator is
\begin{equation}
\widehat{D}(\xi_x,\xi_y) =
\begin{bmatrix}
    \dfrac{4\pi^2}{Re}(\xi_x^2+\xi_y^2) & 0 & 2\pi i\xi_x \nonumber \\
    0  & \dfrac{4\pi^2}{Re}(\xi_x^2+\xi_y^2) & 2\pi i\xi_y  \\
    2\pi i\xi_x & 2\pi i\xi_y & 0 \nonumber
\end{bmatrix}.
\end{equation}
Applying the Frobenius norm, $p_{\mathrm{pde}}(\bxi)=\|\widehat{D}\|_F^2$, yields
\begin{align}
    p_{\mathrm{pde}}(\bxi) &= 2\left(   \frac{4\pi^2}{Re}    (\xi_x^2+\xi_y^2)\right)^2 + 2(2\pi\xi_x)^2 + 2(2\pi\xi_y)^2\\
    &= \frac{32\pi^4}{Re^2}   (\xi_x^2+\xi_y^2)^2 +  8\pi^2    (\xi_x^2+\xi_y^2). \nonumber
\end{align}
Then, in the regimen of high Reynolds number, the coefficient of the quartic term satisfies $\frac{32\pi^4}{Re^2}\ll1$. Consequently, our proposed initialization is close to
\begin{equation}
    \rho(\bsw) \approx  \exp\!\left(    -8\pi^2\lambda \sigma_{\bsw} \|\bsw\|^2  \right)/C,
\end{equation}
which is exactly a Gaussian density with normalization constant $C$. Therefore, for sufficiently large Reynolds numbers, the PDE-informed initialization naturally predicts that an isotropic Gaussian initialization is asymptotically optimal. As the viscous contribution vanishes, the principal symbol is dominated by the incompressibility constraint, resulting in a Gaussian distribution.

\subsection{Ablation study using PDE-informed initialization}

To further evaluate the robustness of the proposed initialization, we performed an ablation study using two benchmark problems from the JAX-PI repository: the Advection and Allen--Cahn equations. Tables~\ref{tab:ablation_adv} and~\ref{tab:ablation_allen} compare the PDE-informed initialization under different combinations of the optimization strategies available in JAX-PI. The results indicate that these additional techniques do not necessarily improve the performance of the proposed initialization, suggesting that the PDE-informed sampling already provides a favorable starting point for optimization. Among the tested components, gradient normalization appears to be the most important, consistently yielding faster convergence while maintaining competitive prediction errors. In contrast, the time-period strategy is not compatible with the proposed initialization for the Advection equation, leading to a substantial degradation in predictive accuracy when enabled. Overall, these experiments suggest that the PDE-informed initialization can be effectively combined with existing PINN training strategies, although not all combinations are complementary and some heuristics may interfere with the optimization dynamics induced by the proposed frequency-aware initialization.

\begin{table}[ht]
\centering
\caption{Ablation study for the Advection equation.}
\label{tab:ablation_adv}
\begin{tabular}{cccc|cc}
\hline
\multicolumn{4}{c|}{\textbf{Ablation Settings}} &
\multicolumn{2}{c}{\textbf{Performance}} \\
\hline
\textbf{Time Period} &
\textbf{Grad Norm} & \textbf{RWF} & \textbf{Causal Rel.} &
\textbf{L2 Error} & \textbf{Run Time (min)} \\
\hline
\checkmark & \checkmark & \checkmark & \checkmark & 3.91e-04 & 22.01 \\
$\times$ & \checkmark & \checkmark & $\times$    & 4.89e-05 & 22.9 \\
$\times$ & \checkmark & $\times$    & \checkmark & 6.39e-05 & 21.94 \\
$\times$ & $\times$    & \checkmark & \checkmark & 5.75e-05 & 21.3 \\
$\times$ & \checkmark & \checkmark & \checkmark & 5.36e-05 & 22.02 \\
$\times$ & \checkmark & $\times$ & $\times$ & 7.51e-05 & 24.21 \\
\hline
\end{tabular}
\end{table}

\begin{table}[tb]
\centering
\caption{Ablation study for the Allen Cahn equation.}
\label{tab:ablation_allen}
\begin{tabular}{ccc|cc}
\hline
\multicolumn{3}{c|}{\textbf{Ablation Settings}} &
\multicolumn{2}{c}{\textbf{Performance}} \\
\hline
\textbf{Grad Norm} & \textbf{RWF} & \textbf{Causal Rel.} &
\textbf{L2 Error} & \textbf{Run Time (min)} \\
\hline
\checkmark & \checkmark & \checkmark & 2.782e-04 & 43.3 \\
\checkmark & $\times$ & $\times$ & 7.924e-05 & 42.6 \\
\checkmark & \checkmark & $\times$    & 1.475e-04 & 44.6 \\
\checkmark & $\times$    & \checkmark & 1.153e-04 & 40.8 \\
$\times$    & \checkmark & \checkmark & 5.197e-04 & 40.5 \\
\hline
\end{tabular}
\end{table}

\end{document}